\newtheorem*{rep@theorem}{\rep@title}
\newcommand{\newreptheorem}[2]{%
\newenvironment{rep#1}[1]{%
 \def\rep@title{#2 \ref{##1}}%
 \begin{rep@theorem}}%
 {\end{rep@theorem}}}
\newcommand{\maketitlepage}{%
    \let\thanks\@gobble
    \let\footnote\@gobble
    \@maketitle
}
\theoremstyle{plain}
\newtheorem{theorem}{Theorem}[section]
\newtheorem{proposition}[theorem]{Proposition}
\newtheorem{lemma}[theorem]{Lemma}
\newtheorem{corollary}[theorem]{Corollary}
\theoremstyle{definition}
\newtheorem{definition}[theorem]{Definition}
\newtheorem{assumption}[theorem]{Assumption}
\theoremstyle{remark}
\newtheorem{remark}[theorem]{Remark}
\newcommand{\ols}[1]{\,\widehat{\!{#1}}} 
\newcommand{\mrd}{\mathrm{d}}
\newcommand{\mbX}{\mathbf{X}}
\newcommand{\mbx}{\mathbf{x}}
\newcommand{\mbF}{\mathbf{F}}
\newcommand{\mbf}{\mathbf{f}}
\newcommand{\bbE}{\mathbb{E}}
\newcommand{\bbR}{\mathbb{R}}
\newcommand{\mbA}{\mathbf{A}}
\newcommand{\mrm}{\mathrm{m}}
\newcommand{\KLr}{\mathrm{KL}}
\newcommand{\mbR}{\mathbb{R}}
\newcommand{\mbZn}{\mathbf{Z}_n}
\newcommand{\mun}{\mu_n}
\newcommand{\mur}{\mu_{n,(1)}}
\newcommand{\genb}{\overline{\mathrm{gen}}}
\newcommand{\gen}{\mathrm{gen}}
\newcommand{\rademacher}{ \hat{\mathfrak{R}}}
\begin{document}

%

%

\newcommand{\gr}[1]{{\color{magenta} #1}}

\twocolumn[
\icmltitle{Generalization Error of Graph Neural Networks in the Mean-field Regime}

\icmlsetsymbol{equal}{*}

\begin{icmlauthorlist}
\icmlauthor{Gholamali Aminian}{equal,AlanTuring}
\icmlauthor{Yixuan He}{equal,OxStats}
\icmlauthor{Gesine Reinert}{AlanTuring,OxStats}
\icmlauthor{{\L}ukasz Szpruch}{AlanTuring,Edin}
\icmlauthor{Samuel N. Cohen}{AlanTuring,OxMaths}
\end{icmlauthorlist}

\icmlaffiliation{AlanTuring}{The Alan Turing Institute, London, United Kingdom}
\icmlaffiliation{OxStats}{Department of Statistics, University of Oxford, Oxford, United Kingdom}
\icmlaffiliation{OxMaths}{Mathematical Institute, University of Oxford, Oxford, United Kingdom}
\icmlaffiliation{Edin}{School of Mathematics, University of Edinburgh}

\icmlcorrespondingauthor{Gholamali Aminian}{gaminian@turing.ac.uk}

\icmlkeywords{Mean-field, generalization error, Graph Neural Network, Rademacher complexity}

\vskip 0.3in
]


\printAffiliationsAndNotice{\icmlEqualContribution} 

\begin{abstract}
This work provides a theoretical framework for assessing the generalization error 
of graph neural networks in the over-parameterized regime, where the number of parameters surpasses the quantity of data points. We explore two widely utilized types of graph neural networks: graph convolutional neural networks and message passing graph neural networks. Prior to this study, existing bounds on the generalization error in the over-parametrized regime were uninformative, limiting our understanding of over-parameterized network performance.  Our novel approach involves deriving upper bounds within the mean-field regime for evaluating the generalization error of these graph neural networks. We establish  
upper bounds with a convergence rate of $O(1/n)$, where $n$ is the number of graph samples. 
These upper bounds offer a theoretical assurance of the networks' performance on unseen data in the challenging over-parameterized regime and overall contribute to our understanding of their performance.
\end{abstract}

\section{Introduction}
Graph Neural Networks (GNNs) have received increasing attention due to their exceptional ability to extract meaningful representations from data structured in the form of graphs~\citep{kipfsemi,velivckovic2017graph,gori2005new,bronstein2017geometric,battaglia2018relational}. Consequently, GNNs have achieved state-of-the-art performance across various domains, including but not limited to social networks~\citep{hamilton2017inductive,fan2019graph}, recommendation systems~\citep{ying2018graph,wang2018billion} and computer vision~\citep{monti2017geometric}. Despite the success of GNN models, explaining their empirical generalization performance remains a challenge within the domain of GNN learning theory.

A central concern in GNN learning theory is to understand the efficacy of a GNN learning algorithm when applied to data that it has not been previously exposed to. This evaluation is typically carried out by investigating the generalization error, which quantifies the disparity between the algorithm's performance, as assessed through a risk function, on the training data set and its performance on previously unseen data drawn from the same underlying distribution. This paper focuses on the generalization error for GNNs in scenarios with an overabundance of model parameters, potentially surpassing the number of available training data points, with a particular focus on tasks related to classifying graphs. In this regard, our research endeavors to shed light on the generalization performance exhibited by over-parameterized GNN models in the mean-field regime~\citep{mei/montanari/nguyen:2018} for graph classification tasks.

We draw inspiration from the recent advancements in the mean-field perspective regarding the training of neural networks, as proposed in a body of literature~\citep{mei/montanari/nguyen:2018,chizat/bach:2018,mei/misiakiewicz/montanari:2019,MFLD,tzen/raginsky:2020}. These works 
propose 
to frame the process of attaining optimal weights in one-hidden-layer neural networks 
as a sampling challenge. Within the mean-field regime, the learning algorithm endeavors to discern the optimal distribution within the parameter space, rather than solely concentrating on achieving optimal parameter values. A central question driving our research is how the mean-field perspective can provide further insights into the generalization behavior of over-parameterized GNN models. Our contributions here can be summarized as follows:
\begin{itemize}
    \item We provide upper bounds on the generalization error for graph classification tasks in GNN models, including graph convolutional networks and message passing graph neural networks in the mean-field regime, via two different approaches: functional derivatives and Rademacher complexity based on symmetrized KL divergence.
   \item Using the approach based on functional derivatives, we 
   derive an upper bound with convergence rate of $O(1/n)$, where $n$ is the number of graph samples.
    \item The effects of different readout functions and aggregation functions on the generalization error of GNN models are studied.
    \item We carry out an empirical analysis on both synthetic and real-world data sets.
\end{itemize}
\paragraph{Notations:}We adopt the following convention for random variables and their distributions in the sequel. 
A random variable is denoted by an upper-case letter (e.g., $Z$), its space of possible values is denoted with the corresponding calligraphic letter (e.g., $\mathcal{Z}$), and an arbitrary value of this variable is denoted with the lower-case letter (e.g., $z$). We denote the set of integers from 1 to $N$ by $[N]\triangleq \{1,\dots,N\}$; the set of 
measures over a space $\mathcal{X}$ with finite variance
is denoted
$\mathcal{P}(\mathcal{X})$. For a matrix $\mbX\in\bbR^{k\times q}$, we denote the $i-$th row of the matrix by $\mbX[i,:]$. The Euclidean norm of a vector
$X\in\bbR^q$ 
is $\norm{X}_2:=(\sum_{j=1}^q x_j^2)^{1/2}$. For a matrix $\mathbf{Y}\in\bbR^{k\times q}$, we let
$\norm{\mathbf{Y}}_{\infty}:=\max_{1\leq j\leq k}\sum_{i=1}^q|\mathbf{Y}[j,i]|$ and $\norm{\mathbf{Y}}_{F}:=\sqrt{\sum_{j=1}^k\sum_{i=1}^q\mathbf{Y}^2[j,i]}$.  We write $\delta_z$ for a Dirac measure supported at $z$.
The KL-divergence between two
 probability distributions on $\mathbb{R}^d$ 
 with densities $p(x)$ and $q(x),$ 
 so that $q(x) > 0$ when $p(x) > 0$,  
is $\KLr(p\|q) := \int_{\mathbb{R}^d} p(x)\log(p(x)/q(x))\mrd x$ (with $0/0:=0$); the symmetrized KL divergence is $\KLr_{\mathrm{sym}}(p\|q):=\KLr(p\|q)+\KLr(q\|p)$. A comprehensive notation table is provided in the Appendix (App.)~\ref{App: preliminaries}.
\section{Our Model}\label{Sec:Problem-formulation}
\subsection{Preliminaries}\label{subsec:prelim}
We first introduce the functional linear derivative, see~\cite{cardaliaguet2019master}. 
\begin{definition}{\citep{cardaliaguet2019master}}
\label{def:flatDerivative}
A functional $U:\mathcal P(\mathbb R^n) \to \mathbb R$ 
admits a linear derivative
if there is a map $\frac{\delta U}{\delta m} : \mathcal P(\mathbb R^n) \times \mathbb R^n \to \mathbb R$ which is continuous on $\mathcal P(\mathbb R^n)$, such that $|\frac{\delta U}{\delta m}(m,a)|\leq \mathrm C(1+|a|^2)$ for a constant $C\in\mathbb{R}^{+}$ and, for all
$m, m' \in\mathcal P(\mathbb R^n)$, it holds that
\begin{align*}
&U(m') - U(m) =\!\int_0^1 \int_{\mathbb{R}^n} \frac{\delta U}{\delta m}(m_\lambda,a) \, (m'
-m)(da)\,\mrd \lambda,
\end{align*}
where $m_\lambda=m + \lambda(m' - m)$.\vspace{-0.5em}
\end{definition}   
\paragraph{Graph data samples and learning algorithm:} 
\looseness=-1 We consider graph classification for undirected graphs with $N$ nodes which have no self-loops or multiple edges.
Inputs to GNNs are graph samples, 
which are comprised of their node features and 
graph adjacency matrices. We denote the space of all adjacency matrices and node feature matrices for a graph classification task with maximum number of nodes $N_{\max}$, maximum node degree  
$d_{\max}$ and minimum node degree
$d_{\min}$ by $\mathcal{A}$ and $\mathcal{F}$, respectively. The input pair of a graph sample with $N$ nodes is denoted by $\mathbf{X}=(\mathbf{F},\mbA)$, where $\mathbf{F}\in\mathcal{F}\subset\mbR^{N\times k}$ denotes a node feature matrix with feature dimension $k$ per node and $\mbA\in\mathcal{A}\subset \{0,1\}^{N\times N}$ denotes the graph adjacency matrix. The GNN output (label) is denoted by $y\in \mathcal{Y}$ where $\mathcal{Y}=\{-1,1\}$ for binary classification.  Define $\mathcal{Z}=\mathcal{X}\times \mathcal{Y}$, where $\mathcal{X}:=\mathcal{F}\times \mathcal{A}$. Let $\mbZn=\{Z_i\}_{i=1}^n\in\mathcal{Z}$ denote the training set, where the $i-$th graph sample is $Z_i=(\mbX_i=(\mbF_i , \mbA_i),Y_i)$. We assume that $\mbZn$ are i.i.d.\,random vectors such that  $Z_i\sim\mu\in \mathcal{P}(\mathcal{Z})$. 
Its empirical measure,  $\mun:=\frac{1}{n}\sum_{i=1}^n \delta_{Z_i}$, 
is a random element with values in $\mathcal{P}(\mathcal{Z})$.
We also assume that a sample $\ols{Z}_1=(\ols{\mathbf{X}}_1,\ols{Y}_1)$ is available, and this sample is i.i.d. with respect to $\mbZn$. We set 
$\mur:=\mun+\frac{1}{n}(\delta_{\ols{Z}_1}-\delta_{Z_1}).$ 
We are interested in learning a parameterized model (or function), $\mathfrak{f}_w:\mathcal{X}\mapsto \mathcal{Y}$ for some parameters $w\in\mathcal{W}$, where $\mathcal{W}$ is 
the parameter space of the model. Inspired by \cite{aminian2023mean}, we define a learning algorithm as a map $m: {\mathcal{P}}(\mathcal{Z}) \rightarrow \mathcal{P}(\mathcal{W});  \mun \mapsto m(\mun)$, which outputs a probability distribution (measure) on $\mathcal{W}$. For example, when learning using stochastic gradient descent (SGD), the input is random samples, whereas the output is a probability measure on the space of parameters, which are used in SGD.\footnote{Due to a probability measure for the random initialization, the final output is also a probability distribution over space of parameters.}

\paragraph{Graph filters:} Graph filters are linear functions of the adjacency matrix, $\mbA$, defined by $G:\mathcal{A}\mapsto \bbR^{N \times N}$ where $N$ is the size of the input graph, see~\cite{defferrard2016convolutional}. Graph filters model the aggregation of node features in a graph neural network. For example, the symmetric normalized graph filter proposed by \cite{kipfsemi} is  $G(\mbA)=\tilde{L}:=\tilde{D}^{-1/2}\Tilde{\mbA}\tilde{D}^{-1/2}$ where $\Tilde{\mbA}=I+\mbA$, $\tilde{D}$ is the degree-diagonal matrix of $\Tilde{\mbA}$, and $I$ is the identity matrix. Another normalized filter, a.k.a. random walk graph filter~\cite{xupowerful2019}, is $G(\mbA)=D^{-1}\mbA+I$, where $D$ is the degree-diagonal matrix of $\mbA$. The mean-aggregator is also a well-known aggregator defined as $G(\mbA)=\tilde{D}^{-1}(\mbA+I)$. The sum-aggregator graph filter is defined by $G(\mbA)=(\mbA+I)$. Let us define $R_{\max}(G(\mbA))$ as the maximum rank of graph filter $G(\mbA)$ over all adjacency matrices in the data set. We let \begin{equation}\label{eq: gmax}G_{\max}=\min(\norm{G(\mathcal{A})}_{\infty}^{\max},\norm{G(\mathcal{A})}_{F}^{\max}),\end{equation} where $\norm{G(\mathcal{A})}_{\infty}^{\max}= \max_{\mbA_q\in \mathcal{A},\mu(f_q,\mbA_q)>0}\norm{G(\mbA_q)}_{\infty}$ and $\norm{G(\mathcal{A})}_{F}^{\max}= \max_{\mbA_q\in \mathcal{A},\mu(f_q,\mbA_q)>0}\norm{G(\mbA_q)}_{F}$.

\subsection{Problem Formulation}
In this study, we investigate two prominent GNN architectures: one-hidden-layer \textbf{Graph Convolutional Networks} (GCN) by~\citet{kipfsemi}, and one-hidden-layer \textbf{Message Passing Graph Neural Networks} (MPGNN) by~\citet{dai2016discriminative} and \citet{gilmer2017neural}. The GCN model is constructed by summating multiple neurons, while MPGNN relies on the summation of multiple Message Passing and Updating (MPU) units. 

\textbf{Neuron unit:}
Here we define the one-hidden-layer GCN~\citep{zhang2020fast} 
consisting of $h$ 
neuron (hidden) units. The $i$--th neuron unit is defined by parameters $W_c[i,:]:=(W_{1,c}(i),W_{2,c}(i))\in\mathcal{W}_c\subset\mathcal{S}^{k+1}$, where $\mathcal{W}_c$ is the parameter space of one neuron unit, $W_{1,c}\in\mathcal{S}^k$ are parameters connecting the aggregated node features to the neuron unit, $W_{2,c}\in \mathcal{S}$ is the parameter connecting the output of neuron unit to output and $\mathcal{S}\subset \mbR$ is a bounded set. We define $w_{2,c}:=\sup_{W_{2,c}\in \mathcal{S}}W_{2,c}$ and $\norm{W_{1,c,m}}_2=\sup_{W_{1,c}\in\mathcal{S}^k}\norm{W_{1,c}}_2$. An aggregation layer 
aggregates the features of neighboring nodes of each node via a graph filter, $G(\cdot)$. In a GCN, the output of the $i-$th neuron unit~\citep{kipfsemi} for the $j-$th node in a graph sample $\mbX_q=\{\mbF_q,\mbA_q\}$ is 
\begin{equation*}
    \begin{split}
        &\phi_c(W_c[i,:],G(\mbA_q)[j,:]\mbF_q):=\\& \quad W_{2,c}(i)\varphi\big(G(\mbA_q)[j,:]\mbF_q \cdot W_{1,c}(i)\big)\,,
\end{split}
\end{equation*} 
where $\varphi:\bbR\mapsto\bbR$ is the activation function.  
The empirical measure over the parameters of the neurons is then 
$m_h^c(\mun) := \frac{1}{h}\sum_{i=1}^h\delta_{(W_{1,c}(i),W_{2,c}(i))}\,,$ where $\{(W_{1,c}(i),W_{2,c}(i))\}_{i=1}^h$ depend on $\mun$.

\textbf{MPU unit:}
Several structures of MPGNNs have been proposed by \cite{li2022generalization,liao2020pac} and \cite{garg2020generalization} for analyzing the generalization error, with inspiration drawn from previous works such as \cite{dai2016discriminative,gilmer2017neural}. In this paper, we utilize the MPGNN model introduced in \cite{li2022generalization} and \cite{liao2020pac}. The parameters for the $i-$th Message Passing and Updating unit are denoted as $W_p[i,:]:=\big(W_{1,p}(i),W_{2,p}(i),W_{3,p}(i)\big)\in\mathcal{W}_p\subset \mathcal{S}^{2k+1}$, where $ W_{2,p}\in \mathcal{S}, \quad W_{1,p},W_{3,p}\in\mathcal{S}^k,
$ and $\mathcal{S}\subset \mbR$ is a bounded set. We define $w_{2,p}:=\sup_{W_{2,p}\in \mathcal{S}}W_{2,p}$,  $\norm{W_{1,p,m}}_2=\sup_{ W_{1,p}\in\mathcal{S}^k}\norm{W_{1,p}}_2$ and $\norm{W_{3,p,m}}_2=\sup_{ \norm{W_{3,p}}_2\in\mathcal{S}^k}\norm{W_{3,p}}_2$.
The output of the $i-$th MPU unit for the $j-$th node in a graph sample $\mbX_q=(\mbF_q,\mbA_q)$ with graph filter $G(\cdot)$ is 
 \begin{equation*}
     \begin{split}
         &\phi_p(W_p[i,:],G(\mbA_q)[j,:]\mbF_q):= W_{2,p}(i)\kappa\Big(\mbF_q[j,:]\cdot W_{3,p}(i) \\ &\quad +\rho(G(\mbA_q)[j,:]\zeta(\mbF_q))\cdot W_{1,p}(i)\Big),
     \end{split}
 \end{equation*}
where the non-linear functions $\zeta:\bbR^{N\times k}\mapsto\bbR^{N\times k}$, $\rho:\bbR^{ k}\mapsto\bbR^{ k}$, and $\kappa:\bbR\mapsto\bbR$ may be chosen from non-linear options such as Tanh or Sigmoid. For an MPGNN the parameter space is $\mathcal{W}_{p}=\bbR^{2k+1}$ and the corresponding empirical measure is $m_{h}^p(\mun):=\frac{1}{h}\sum_{i=1}^{h}\delta_{(W_{1,p}(i),W_{2,p}(i),W_{3,p}(i))},$ where $\{(W_{1,p}(i),W_{2,p}(i),W_{3,p}(i))\}_{i=1}^h$ depend on $\mun$.

 \textbf{One-hidden-layer generic GNN model:}  
 Inspired by the neuron unit in GCNs and the MPU unit in MPGNNs, we introduce a generic model for GNNs, which can be applied to GCNs and MPGNNs.
For each unit in a generic GNN with parameter $W$, which belongs to the parameter space $\mathcal{W}$, the empirical measure 
$m_h(\mun)=\frac{1}{h}\sum_{i=1}^h \delta_{W_i}$ is a measure on 
the parameter space of a unit. 
We denote the unit function for the $j$-th node by 
\[(W,G(\mbA)[j,:]\mbF)\mapsto\phi(W,G(\mbA)[j,:]\mbF).\] The output of the network for the $j-$th node of a graph sample, $\mbX_q=(\mbF_q,\mbA_q)$, can then be represented as
\begin{align}
\label{Eq: General structure}
&\frac{1}{h}\sum_{i=1}^h\phi\left(W_i,G(\mbA_q)[j,:]\mbF_q\right)\\\nonumber\quad &=\int \phi\left(w,G(\mbA_q)[j,:]\mbF_q\right)m_h(\mun)(\mrd w)\\\nonumber&=\bbE_{W\sim m_h(\mun)}[\phi\left(W,G(\mbA_q)[j,:]\mbF_q\right)]\,,
\end{align}
where $W_i$ is the parameter of the $i-$th unit. 
The final step is the pooling of the node features across all nodes for each graph sample as the output of generic GNNs. For this purpose, we introduce the readout function (a.k.a. pooling layer) $\Psi:\mathcal{P}(\mathcal{W})\times \mathcal{X}\mapsto\mbR$ 
as follows:
\begin{equation}\label{eq: Phi def}
\begin{split}
    &\Psi(m_h(\mun),\mbX_q):=\\
    &\quad\psi\Big(\sum_{j=1}^N\bbE_{W\sim m_h(\mun)}[\phi\left(W,G(\mbA_q)[j,:]\mbF_q\right)]\Big),
\end{split}
\end{equation} 
where $\psi:\mathbb{R}\mapsto\mathbb{R}$. In this work, we
consider the mean-readout and sum-readout functions by taking $\psi(x)=\frac{x}{N}$ and $\psi(x)=x$, respectively.
For a GCN and an MPGNN, the outputs of the model after aggregation are, respectively:
\begin{equation}\label{eq:GCN label}
    \begin{split}
&\Psi_c\left(m_h^c(\mun),\mbX_q\right):=\\&\quad\psi\Big(\sum_{j=1}^N\bbE_{W_c\sim m_h^c(\mun)}\big[\phi_c(W_c,G(\mbA_q)[j,:]\mbF_q)\big]\Big)\,,
    \end{split}
\end{equation}
\vspace{-1em}
\begin{equation}\label{eq:MPGNN label}
    \begin{split}
       &\Psi_p\left(m_h^p(\mun),\mbX_q\right):= \\&\quad\psi\Big(\sum_{j=1}^N\bbE_{W_p\sim m_h^p(\mun)}\Big[\phi_p(W_p,G(\mbA_q)[j,:]\mbF_q)\Big]\Big)\,. 
    \end{split}
\end{equation}

\paragraph{Loss function:}
With $\mathcal{Y}$ the label space, the  loss function $\ell: \mathbb{R} \times \mathcal{Y} \to \bbR$ is denoted as $\ell(\widehat{y},y)$, where $\widehat{y}$ is defined in \eqref{eq:GCN label} and \eqref{eq:MPGNN label} for a GCN and for an MPGNN, respectively; the loss function is assumed to be convex. For binary classification, we
take loss functions of the 
form $\ell(\widehat{y},y)=\mathfrak{h}(y\widehat{y})$, where $\mathfrak{h}(\cdot)$ represents a margin-based loss function~\citep{bartlett2006convexity}; examples include the logistic loss function $\mathfrak{h}(y\widehat{y})=\log(1+\exp(-y\widehat{y}))$, the exponential loss function $\mathfrak{h}(y\widehat{y})=\exp(-y\widehat{y})$,  and the square loss function $(1-y\hat{y})^2$.
\citet{liao2020pac} and \citet{garg2020generalization} studied a $\gamma$-margin loss inspired by \citet{neyshabur2018pac}.
\newline \textbf{Over-parameterized one-hidden-layer generic GNN:} As discussed in \cite{MFLD,mei/montanari/nguyen:2018}, based on stochastic gradient descent dynamics in one-hidden-layer neural networks for a large number of hidden neurons and a small step size, the random empirical measure 
can be well approximated by a probability measure. The same behavior (due to the law of large numbers) also holds for one-hidden-layer GNNs. Therefore, for an over-parameterized one-hidden-layer generic GNN, as the number of hidden units $h$ (width of the hidden layer) increases, under some assumptions the distribution $m_h(\mun)$ converges to a continuous distribution over the parameter space of the unit. 
\newline\textbf{True and empirical risks:}
The true risk function (expected loss) based on the loss function $\ell$, with a data measure $\mu$  
and a parameter measure $m\in\mathcal{P}(\mathcal{W})$, is
\begin{equation} \label{eq risk linear}
\begin{split}
     &\mathrm{R}(m,\mu):=\quad \int_{\mathcal{X}\times \mathcal{Y}} \ell\Big(  \Psi(m
     ,\mbx),y\Big)\mu(\mrd \mbx, \mrd y)\,.
     \end{split}
\end{equation}
When the parameter measure is $m(\mun)$, which depends on an empirical measure, we obtain for example $ \mathrm{R}(m^c(\mun),\mu)$ and $ \mathrm{R}(m^p(\mun),\mu)$, as the true risks for a GCN and an MPGNN, respectively, given the observations which are encoded in the empirical measure $\mun$.

The empirical risk is given by
\begin{equation}
    \begin{split}
        &\mathrm{R}(m,\mun) :=\quad\frac{1}{n}\sum_{i=1}^n \ell\Big(  \Psi(m,\mbx_i) ,y_i\Big).
    \end{split}
\end{equation} 
Note that $\mathrm{R}(m^c(\mun),\mun)$ and $\mathrm{R}(m^p(\mun),\mun)$ are empirical risks for a GCN and an MPGNN, respectively.
\newline\textbf{Generalization Error:} 
We would like to study the performance of the model trained with the empirical data set $\mbZn=\{(\mbX_i,Y_i)\}_{i=1}^n$ and evaluated against the true measure $\mu$, using the expected generalization error 
\begin{align}
   \genb(m(\mun),\mu):= \bbE_{\mbZn} \Big[ \gen(m(\mun),\mu) \Big],
\end{align}
where $\gen(m(\mun),\mu) =\mathrm{R}(m(\mun),\mu)-\mathrm{R}(m(\mun),\mun)$.
The generalization errors for a GCN and an MPGNN are, respectively,
$\genb(m^c(\mun),\mu)$ and $\genb(m^p(\mun),\mu)$.

\section{Related Works}

\textbf{Graph Classification and Generalization Error:} Different learning theory methods, e.g., VC-dimension, Rademacher complexity, and PAC-Bayesian, have been used to understand the generalization error of graph classification tasks. In particular, \cite{scarselli2018vapnik} studied the generalization error via VC-dimension analysis. \cite{garg2020generalization} provided some data-dependent generalization error bounds for MPGNNs for binary graph classification via VC-dimension analysis. The PAC-Bayesian approach has also been applied to two GNN models, including GCNs and MPGNNs, but only for hidden layers of bounded width, by \cite{liao2020pac} and \cite{ju2023generalization}. Considering a large random graph model, \cite{maskey2022generalization} proposed a continuous MPGNN and provided a generalization error upper bound for graph classification.
An extension of the neural tangent kernel for GNN as a graph neural tangent kernel was proposed by \citet{du2019graph}, where a high-probability upper bound on the true risk of their approach is derived. Our work differs from the above approaches as we study the generalization error of graph classification tasks under an over-parameterized regime for GCNs and MPGNNs where the width of the hidden layer is infinite. A detailed comparison is provided in Sec.~\ref{subsec:comp}.
 \newline \textbf{Generalization and over-parameterization:} In the under-parameterized regime, that is, when the number of model parameters is significantly less than the number of training data points, the theory of the generalization error has been well-developed 
\citep{vapnik/chervonenkis:2015, bartlett2002rademacher}. However, in the over-parameterized regime, this theory fails. Indeed, deep neural network models can achieve near-zero training loss and still perform well on out-of-sample data \citep{belkin/hsu/ma/mandal:2019,spigler/geigereta:2019,bartlett/montanari:2021}. 
There are three primary strategies for studying and modeling the over-parameterized regime: the neural tangent kernel (NTK)~\citep{jacot2018neural}, random feature~\citep{mei2022generalization}, and mean-field~\citep{mei/montanari/nguyen:2018}. The NTK approach, also known as lazy training, for one-hidden-layer neural networks, utilizes the fact that a one-hidden-layer neural network can be expressed as a linear model under certain assumptions. The random feature model is similar to the NTK approach but assumes constant weights in the single hidden layer of the neural network. The mean-field approach utilizes the exchangeability of neurons to work with the distribution of a single neuron’s parameters. Recently, \citet{nishikawatwo}, \citet{nitanda2022particle} and  \citet{aminian2023mean} studied the generalization error of the one-hidden layer neural network in the mean-field regime, via Rademacher complexity analysis and differential calculus over the measure space, respectively. Nevertheless, the analysis of over-parameterization within GNNs remains to be thoroughly unexplored. Recently, inspired by NTK, \cite{du2019graph} proposed a graph neural tangent kernel (GNTK) method, which is equivalent to some over-parameterized GNN models with some modifications. However, GNTK is different from GCNs and MPGNNs. For a deeper understanding of the over-parameterized regime for such GNN models, i.e., GCNs and MPGNNs, we need to delve into the generalization error analysis.
For this endeavor, the mean-field methodology for GNN models  
is promising for an analytical approach, especially when compared to NTK and random feature models~\citep{fang2021mathematical}. Hence, we use this methodology.
\vspace{-1.5em}
\section{The Generalization Error of 
KL-Regularized Empirical Risk Minimization}
\label{sec_main_results}
In this section, we 
derive
an upper bound on the solution of the KL-regularized risk minimization problem.
The KL-regularised empirical risk minimizer for parameter measure $m$ on $\mathcal{W}$ and empirical measure $\mu_n$ is defined as
\begin{equation}\label{Eq: regularized risk}
    \mathcal{V}^{\alpha}(m,\mun)= \mathrm{R}(m,\mun)+ \frac{1}{\alpha} \KLr(m\|\pi),
\end{equation}
where $\pi(w)$, $w \in \mathcal{W}$, is a prior on the parameter measure with (assumed finite) KL divergence between parameter measure $m$ and prior measure, i.e., $\KLr(m\|\pi)<\infty$, and $\alpha$ is a parameter (the inverse temperature). 
It has been shown in \cite{MFLD} that under a convexity assumption on the risk, a minimizer, denoted by $\mrm^\alpha(\mun)$, of $\mathcal{V}^{\alpha}(\cdot,\mun),$ exists, is unique, and satisfies the equation,
\begin{align}\label{Eq: Gibbs Measure}
    &\mrm^\alpha(\mun) =\frac{\pi}{S_{\alpha,\pi}(\mun)}\exp\Bigg\{-\alpha\frac{\delta \mathrm{R}(\mrm^\alpha, \mun,w)}{\delta m}  \Bigg\},
\end{align}
where $S_{\alpha,\pi}(\mun)$ is the normalizing constant to ensure that $\mrm^\alpha(\mun)$ integrates to $1$. 

For an example of a training algorithm that yields this parameter measure, consider one-hidden-layer (graph) neural networks in the context of the mean-field limit $\lim_{h \rightarrow \infty} m_h(\mun)$. We denote the Gibbs measure related to an over-parameterized one-hidden-layer GCN and an over-parameterized one-hidden-layer MPGNN by $\mrm^{\alpha,c}(\mun)\in \mathcal{P}(\mathcal{W}_c)$ and $\mrm^{\alpha,p}(\mun)\in \mathcal{P}(\mathcal{W}_p)$, respectively. We also consider $W_{c}=(W_{1,c},W_{2,c})$ and $W_{p}=(W_{1,p},W_{2,p},W_{3,p})$. 

The conditional expectation in \eqref{eq:GCN label} and \eqref{eq:MPGNN label} taken over parameters distributed according to $\mrm^\alpha(\mun)$, given the empirical measure $\mun$, 
corresponds to taking an average over samples drawn from the learned parameter measure. 
This connection has been exploited in the
mean-field models of one-hidden-layer neural networks \citep{MFLD,mei/misiakiewicz/montanari:2019,tzen/raginsky:2020}. 

The following assumptions are needed for our main results.
 \begin{assumption}[Loss function]\label{Ass: on loss NN}
      The gradient of the loss function $(\hat y,y)\mapsto \ell(\hat y,y)$ with respect to $\hat y$ is continuous and uniformly bounded for all $\hat y,y\in\mathcal{Y}$, i.e., there is a constant $M_{\ell'} $ such that $|\partial_{\hat y}\ell(\hat y, y)|\leq M_{\ell'}.$\footnote{For an $L$-Lipschitz-continuous loss function,
      $M_{\ell'}=L$.} Furthermore, we assume that the loss function is convex with respect to $\hat y$.
 \end{assumption}
 \begin{assumption}[Bounded loss function]\label{Ass: on loss NN bounded}
      There is a constant $M_\ell >0$ such that the loss function, $(\hat y,y)\mapsto \ell(\hat y,y)$ satisfies 
      $ 0\leq\ell(\hat y, y)\leq M_{\ell}$.
      \end{assumption}
\begin{assumption}[Unit function]\label{ass: bounded Unit function}
    The unit function  $(w,G(\mbA)[j,:]\mbf)\mapsto \phi(w,G(\mbA)[j,:]\mbf)$ with graph filter  $G(\cdot)$  is uniformly bounded, i.e, there is a constant $M_{\phi} $ such that $\sup_{w\in\mathcal{W},\mbf\in\mathcal{F},\mbA\in\mathcal{A}}|\phi(w,G(\mbA)[j,:]\mbf)|\leq M_{\phi}$ for all $j\in[N]$ and $\mbf \in \mathcal{F}$.
\end{assumption}
\begin{assumption}[Readout function] \label{ass: readout function}
There is a constant $L_\psi$ such that $|\psi(x_1)-\psi(x_2)|\leq L_{\psi}|x_1-x_2|$ for all $x_1,x_2\in \mbR$,  and 
$\psi(0)=0$. 
\end{assumption}
For mean-readout and sum-readout functions, we have $L_\psi=1/N$ and $L_\psi=1$, respectively.
\begin{assumption}[Bounded node features]\label{Ass: Feature node bounded}
For every graph, the node features are contained in an $\ell_2$-ball of radius $B_f$. In particular, $\norm{\mbF[i;]}_2\leq B_f$ for all $i\in[N]$.
\end{assumption}

\textbf{Assumptions Discussion:}  The requirements of Lipschitz continuity and convexity (Assumption~\ref{Ass: on loss NN}) for the loss function are met by both logistic and square losses, provided the inputs and unit functions are bounded. The condition of boundedness for unit functions (Assumption~\ref{ass: bounded Unit function}) holds under the premise of bounded inputs and a bounded parameter space for either neuron units or MPU units. Notably, with a bounded activation function, bounding parameters only in the last layer suffices. The sum and mean readout functions satisfy the conditions of Lipschitz continuity and are zero-centered (Assumption~\ref{ass: readout function}). The bounded feature (Assumption~\ref{Ass: Feature node bounded}) is a widely accepted assumption, which can be achieved through input normalization. Note that, similar assumptions have been imposed in the generalization error analysis of graph neural network literature, as discussed in \citep{liao2020pac}, \citep{ju2023generalization}, \citep{garg2020generalization} and \citep{maskey2022generalization} to facilitate the theoretical analysis. While these assumptions can be relaxed, we adhere to the current forms for clarity and simplicity. 

To establish an upper bound on the generalization error of the generic GNN, we employ two approaches: \textbf{(a)} we compute an upper bound for the expected generalization error using functional derivatives in conjunction with symmetrized KL divergence, and \textbf{(b)} we establish a high-probability upper bound using Rademacher Complexity along with symmetrized KL divergence.
\subsection{Generalization Error via Functional Derivative}
We first derive two propositions to obtain intermediate bounds on the
generalization error in terms of KL divergence. The proofs of this section are provided in App.~\ref{app: proofs of main section}. The notation is provided in Sec.~\ref{subsec:prelim}.
\begin{proposition}\label{Prop: KL bound}
  Let Assumptions~\ref{Ass: on loss NN}, \ref{ass: bounded Unit function}, and \ref{ass: readout function} hold.
  Let $\mrm(\mun)\in\mathcal{P}(\mathcal{W})$. Then,
  for the generalization error of a generic GNN model,
    \begin{equation}
    \begin{split}
        &\genb(\mrm(\mun),\mu) \leq \big(M_{\ell'}L_{\psi}N_{\max}M_\phi/\sqrt{2} \big)\\&\qquad\times\bbE_{\mbZn,\ols{Z}_1}\Big[ \sqrt{ \KLr\big(\mrm(\mun)\|\mrm(\mur)\big)}\Big],
    \end{split} \label{ineq:prop1}
    \end{equation}
    where $N_{\max}$ is the maximum number of nodes among all graph samples.
    \end{proposition}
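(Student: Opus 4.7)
My plan is to establish the bound in three steps: a classical ``replace-one'' symmetrization that reduces the generalization error to a single-sample stability quantity, a Lipschitz chain through the architecture, and Pinsker's inequality to convert the resulting difference of measures into $\sqrt{\KLr}$.

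\textbf{Step 1 (Symmetrization).} First I would exploit that $\ols{Z}_1 \sim \mu$ is independent of $\mbZn$. The ghost-sample identity gives $\bbE[\mathrm{R}(\mrm(\mun),\mu)] = \bbE[\ell(\Psi(\mrm(\mun),\ols{X}_1),\ols{Y}_1)]$, while exchangeability of $Z_1,\dots,Z_n$ gives $\bbE[\mathrm{R}(\mrm(\mun),\mun)] = \bbE[\ell(\Psi(\mrm(\mun),X_1),Y_1)]$. Relabeling $Z_1 \leftrightarrow \ols{Z}_1$ in the second expression turns $\mun$ into $\mur$ and the test point into $(\ols{X}_1,\ols{Y}_1)$, yielding
$\genb(\mrm(\mun),\mu) = \bbE\bigl[\ell(\Psi(\mrm(\mun),\ols{X}_1),\ols{Y}_1) - \ell(\Psi(\mrm(\mur),\ols{X}_1),\ols{Y}_1)\bigr].$
This relies on $\mrm$ depending on the training data only through $\mun$, which is visible in the Gibbs representation~\eqref{Eq: Gibbs Measure}.

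\textbf{Step 2 (Lipschitz chain).} Next I would propagate the perturbation $\mrm(\mun) \to \mrm(\mur)$ through the architecture. Assumption~\ref{Ass: on loss NN} (Lipschitz of $\ell$ in $\hat y$ with constant $M_{\ell'}$) and Assumption~\ref{ass: readout function} (Lipschitz $\psi$ with constant $L_\psi$) together yield
$|\ell(\Psi(m,\ols{X}_1),\ols{Y}_1) - \ell(\Psi(m',\ols{X}_1),\ols{Y}_1)| \leq M_{\ell'} L_\psi \bigl|\int g(w)\,(m - m')(\mrd w)\bigr|,$
where $g(w) := \sum_{j=1}^{N} \phi(w,G(\ols{A}_1)[j,:]\ols{F}_1)$ collects the per-node contributions into a single scalar test function. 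Assumption~\ref{ass: bounded Unit function} supplies $\|g\|_\infty \leq N_{\max} M_\phi$.

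\textbf{Step 3 (Pinsker).} Finally I would apply the Csisz\'ar--Kullback--Pinsker inequality: for bounded $g$ and probability measures $m,m'$ on $\mathcal{W}$,
$\bigl|\int g\,(m-m')(\mrd w)\bigr| \leq \mathrm{osc}(g)\,TV(m,m') \leq \mathrm{osc}(g)\sqrt{\KLr(m\|m')/2}.$
A centering of $g$ replaces $\mathrm{osc}(g)$ by a quantity controlled by $N_{\max}M_\phi$ and yields the sharp $1/\sqrt{2}$ factor. Taking expectation over $(\mbZn,\ols{Z}_1)$ and combining with Steps~1--2 produces~\eqref{ineq:prop1}.

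The main obstacle I anticipate is the symmetrization bookkeeping in Step~1: one must verify that the map $\mun \mapsto \mrm(\mun)$ is invariant under relabeling of the training indices, so that swapping $Z_1$ and $\ols{Z}_1$ cleanly transforms $\mrm(\mun)$ into $\mrm(\mur)$. Once this is established, the remaining two steps are mechanical constant tracking; the Pinsker step is standard once the scalar test function $g$ has been identified.
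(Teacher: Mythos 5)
Your proposal is correct in structure and, in Steps~2--3, coincides with the paper's second proof of this proposition: the paper likewise bounds the relevant quantity by $M_{\ell'}L_{\psi}$ times $\big|\int_{\mathcal{W}}\sum_{j}\phi(w,\cdot)\,(\mrm(\mun)-\mrm(\mur))(\mrd w)\big|$, passes to total variation via the Kantorovich--Rubinstein duality (Lemma~\ref{lem: tv}), and finishes with Pinsker's inequality (Lemma~\ref{lem: pinsker}). The only genuine difference is your Step~1. The paper does not symmetrize directly; it invokes an exact representation of $\genb$ imported from \citet{aminian2023mean} (Lemma~\ref{lem: gen rep}), written as an integral of $\partial_{\hat y}\ell\cdot\frac{\delta \Psi}{\delta \mrm}$ along the interpolation $\mrm_\lambda=\mrm(\mur)+\lambda(\mrm(\mun)-\mrm(\mur))$. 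Your replace-one identity $\genb(\mrm(\mun),\mu)=\bbE\big[\ell(\Psi(\mrm(\mun),\ols{X}_1),\ols{Y}_1)-\ell(\Psi(\mrm(\mur),\ols{X}_1),\ols{Y}_1)\big]$ is a correct and more elementary substitute: it uses only exchangeability of $(Z_1,\dots,Z_n,\ols{Z}_1)$ and the fact that $\mrm$ acts on the data only through $\mun$, and since $\Psi(m,\mbx)$ is linear in $m$ you never need the functional-derivative machinery at all. (The paper also gives an alternative first proof via Donsker's variational formula plus Hoeffding's lemma, which your route does not require.)

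One concrete caveat: your claim that ``a centering of $g$ \dots yields the sharp $1/\sqrt{2}$ factor'' does not hold as written. Centering leaves $\mathrm{osc}(g)$ unchanged, and Assumption~\ref{ass: bounded Unit function} only gives $\mathrm{osc}(g)\leq 2N_{\max}M_\phi$, so your chain delivers $\big|\int g\,(m-m')(\mrd w)\big|\leq 2N_{\max}M_\phi\,\mathbb{TV}(m,m')\leq N_{\max}M_\phi\sqrt{2\,\KLr(m\|m')}$, i.e.\ the constant $\sqrt{2}\,M_{\ell'}L_{\psi}N_{\max}M_\phi$ rather than the stated $M_{\ell'}L_{\psi}N_{\max}M_\phi/\sqrt{2}$ --- off by a factor of $2$. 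You are in good company: the paper's first proof likewise terminates with $\bbE[\sqrt{2\sigma_1^2\KLr}]$ for $\sigma_1=M_{\ell'}L_{\psi}N_{\max}M_\phi$, and its second proof silently drops the factor $2$ arising from the $\tfrac{1}{2L}$ normalization in Lemma~\ref{lem: tv}, so neither argument in the paper actually delivers the printed constant either. This discrepancy affects only the constant, not the structure of the argument or the $O(1/n)$ rate obtained downstream in Theorem~\ref{thm: main unit upper}.
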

    \begin{remark}
        For the mean-readout function, $L_\psi = \frac{1}{N}$ and the upper bound in Proposition~\ref{Prop: KL bound} can be represented as,
        \begin{equation}
    \begin{split}
        &\genb(\mrm(\mun),\mu) \leq \big(M_{\ell'}M_\phi/\sqrt{2} \big)\\&\qquad\times\bbE_{\mbZn,\ols{Z}_1}\Big[ \sqrt{ \KLr\big(\mrm(\mun)\|\mrm(\mur)\big)}\Big].
    \end{split}
    \end{equation}
    \end{remark}
Proposition~\ref{Prop: KL bound} holds for all $\mrm(\mun)\in\mathcal{P}_2(\mathcal{W})$. 
Using the functional derivative, the following proposition provides a lower bound on the generalization error of the Gibbs measure from~\eqref{Eq: Gibbs Measure}.
\begin{proposition}\label{Prop: lower bound}
     Let Assumptions~\ref{Ass: on loss NN} hold. Then, for the Gibbs measure $m^\alpha(\mun)$  
     in \eqref{Eq: Gibbs Measure},
    \[\begin{split}
    &\genb(\mrm^{\alpha}(\mun),\mu)\\&\geq \frac{n}{2\alpha} \bbE_{\mathbf{Z}_n,\ols{Z}_1}\Big[ \KLr_{\mathrm{sym}}\big(\mrm^{\alpha}(\mun)\|\mrm^{\alpha}(\mur)\big)\Big].
    \end{split}
\]
\end{proposition}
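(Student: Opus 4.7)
The plan is to exploit two ingredients: (i) a ``generalized strong convexity'' property of the KL-regularized functional $\mathcal V^\alpha(\cdot,\nu)$ at its Gibbs minimizer, and (ii) a sample-swap symmetry between $\mun$ and $\mur$. Both ideas combine into a standard stability-type argument.

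\textbf{Step 1 (strong convexity at the Gibbs minimizer).} I first show that for any empirical-type measure $\nu$ (either $\mun$ or $\mur$) and any competitor $m'\in\mathcal P(\mathcal W)$,
\begin{equation*}
\mathcal V^\alpha(m',\nu) - \mathcal V^\alpha(\mrm^\alpha(\nu),\nu)\;\ge\;\frac{1}{\alpha}\,\KLr\!\bigl(m'\,\|\,\mrm^\alpha(\nu)\bigr).
\end{equation*}
The proof combines convexity of $m\mapsto \mathrm R(m,\nu)$ (Assumption~\ref{Ass: on loss NN}, which transfers to convexity in $m$ because $m\mapsto \Psi(m,\mbx)$ is linear in $m$) with the first-order optimality encoded in \eqref{Eq: Gibbs Measure}. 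Concretely, from \eqref{Eq: Gibbs Measure} one has $\frac{\delta \mathrm R}{\delta m}(\mrm^\alpha(\nu),\nu,w)=\frac{1}{\alpha}\bigl[\log\pi(w)-\log\mrm^\alpha(\nu)(w)\bigr]+\text{const}$. Plugging this into the convexity inequality $\mathrm R(m',\nu)-\mathrm R(\mrm^\alpha(\nu),\nu)\ge \int \tfrac{\delta \mathrm R}{\delta m}(\mrm^\alpha(\nu),\nu,w)(m'-\mrm^\alpha(\nu))(\mrd w)$ and rearranging the three KL integrals (the constant drops because $m'-\mrm^\alpha(\nu)$ integrates to $0$) yields the claim after the $\pi$-dependent terms cancel against those in $\mathcal V^\alpha$.

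\textbf{Step 2 (two-point application).} I apply Step~1 twice, swapping the roles of $\mun$ and $\mur$, with competitors $m'=\mrm^\alpha(\mur)$ and $m'=\mrm^\alpha(\mun)$ respectively, and add the two inequalities. The regularizer $\frac{1}{\alpha}\KLr(\cdot\|\pi)$ appears with canceling signs on the left-hand side, leaving only risk differences:
\begin{equation*}
\bigl[\mathrm R(\mrm^\alpha(\mur),\mun)-\mathrm R(\mrm^\alpha(\mur),\mur)\bigr]+\bigl[\mathrm R(\mrm^\alpha(\mun),\mur)-\mathrm R(\mrm^\alpha(\mun),\mun)\bigr]\;\ge\;\tfrac{1}{\alpha}\KLr_{\mathrm{sym}}\!\bigl(\mrm^\alpha(\mun)\|\mrm^\alpha(\mur)\bigr).
\end{equation*}
Because $\mur-\mun=\frac{1}{n}(\delta_{\ols Z_1}-\delta_{Z_1})$, each bracket equals $\tfrac{1}{n}$ times a difference of two per-sample losses evaluated at $Z_1$ and $\ols Z_1$.

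\textbf{Step 3 (symmetrization and identification of $\genb$).} The pair $(Z_1,\ols Z_1)$ is exchangeable and independent of $Z_2,\dots,Z_n$, and swapping $Z_1\leftrightarrow \ols Z_1$ interchanges $\mun$ with $\mur$. Therefore, taking expectations of the four loss terms produced in Step~2 and using this swap, each expectation of the form $\bbE[\ell(\Psi(\mrm^\alpha(\mur),X_1),Y_1)]$ equals $\bbE[\ell(\Psi(\mrm^\alpha(\mun),\ols X_1),\ols Y_1)]$, and similarly for the other pair. Collecting terms yields
\begin{equation*}
\tfrac{2}{n}\,\bbE_{\mbZn,\ols Z_1}\!\bigl[\ell(\Psi(\mrm^\alpha(\mun),\ols X_1),\ols Y_1)-\ell(\Psi(\mrm^\alpha(\mun),X_1),Y_1)\bigr]\;=\;\tfrac{2}{n}\,\genb(\mrm^\alpha(\mun),\mu),
\end{equation*}
since the first expectation equals $\bbE[\mathrm R(\mrm^\alpha(\mun),\mu)]$ (independence of $\ols Z_1$ from $\mun$) and the second equals $\bbE[\mathrm R(\mrm^\alpha(\mun),\mun)]$ by i.i.d.\ symmetry across $i\in[n]$. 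Dividing both sides by $\tfrac{2}{n}$ gives the stated bound.

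\textbf{Main obstacle.} The only delicate point is Step~1: writing the functional-derivative optimality cleanly and checking that after substituting $\frac{\delta \mathrm R}{\delta m}$ from \eqref{Eq: Gibbs Measure} the $\log\pi$ and $\log S_{\alpha,\pi}(\mun)$ contributions combine with the $\frac{1}{\alpha}\KLr(\cdot\|\pi)$ terms in $\mathcal V^\alpha$ to leave exactly $\frac{1}{\alpha}\KLr(m'\|\mrm^\alpha(\nu))$. Once this identity is in hand, Steps~2 and~3 are routine; no additional assumptions beyond Assumption~\ref{Ass: on loss NN} (convexity of $\ell$, which passes through the affine-in-$m$ map $m\mapsto\Psi(m,\mbx)$) are required.
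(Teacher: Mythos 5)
Your proof is correct, and it reaches the stated inequality by a genuinely different (and arguably cleaner) route than the paper. The paper expands $\bbE[\KLr_{\mathrm{sym}}(\mrm^\alpha(\mun)\|\mrm^\alpha(\mur))]$ directly from the Gibbs log-densities into four terms $I_1,I_2,I_3,I_4$, kills $I_1,I_2$ by cancellation of the normalizing constant and prior, decomposes the functional-derivative terms $I_3,I_4$ sample by sample, invokes convexity to argue that the contributions from samples $j\geq 2$ are nonpositive, and finally bounds the surviving $Z_1,\ols Z_1$ terms by the generalization error using a convexity lemma imported from \citet{aminian2023mean}. You instead package the same ingredients (convexity of $m\mapsto \mathrm R(m,\nu)$ through the linear readout, plus the first-order condition \eqref{Eq: Gibbs Measure}) into the two-point strong-convexity inequality $\mathcal V^\alpha(m',\nu)-\mathcal V^\alpha(\mrm^\alpha(\nu),\nu)\geq \frac{1}{\alpha}\KLr(m'\|\mrm^\alpha(\nu))$, apply it with the roles of $\mun$ and $\mur$ swapped, and add. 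This buys you two simplifications: the contributions of samples $j\geq 2$ cancel \emph{exactly} (since $\mun$ and $\mur$ agree there and each bracket compares a fixed measure against two data measures), rather than being bounded by zero via additional convexity arguments; and the final identification with $\frac{2}{n}\genb(\mrm^\alpha(\mun),\mu)$ is an identity obtained from the $(Z_1,\ols Z_1)$ swap and i.i.d.\ symmetry, rather than a further pair of inequalities. Your Step~1 computation checks out (the $\log S_{\alpha,\pi}$ constant drops against $(m'-\mrm^\alpha(\nu))(\mathcal W)=0$ and the $\KLr(\cdot\|\pi)$ terms cancel as claimed), and the only hypothesis used beyond Assumption~\ref{Ass: on loss NN} is the linearity of $\psi$, which holds for the mean- and sum-readouts considered in the paper and is likewise invoked in the paper's own proof. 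This is the standard stability argument for KL-regularized ERM, and it is a valid, self-contained alternative to the paper's term-by-term expansion.
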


Using Proposition~\ref{Prop: KL bound} for the Gibbs measure  
and Proposition~\ref{Prop: lower bound}, we can derive the following upper bound on the generalization error for a generic GNN.
\begin{theorem}[Generalization error and generic GNNs]\label{thm: main unit upper}
    Let Assumptions~\ref{Ass: on loss NN}, \ref{ass: bounded Unit function} and \ref{ass: readout function} hold. Then, for the generalization error of the Gibbs measure
    $\mrm^\alpha(\mun)$ in \eqref{Eq: Gibbs Measure},
\begin{equation*}
 \genb(\mrm^\alpha(\mun),\mu) 
     \leq  
  \frac{\alpha C}{n},
\end{equation*}
where $C=(M_{\ell'}M_{\phi}L_{\psi}N_{\max})^2$ does not depend on $n$.
\end{theorem}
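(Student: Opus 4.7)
The plan is to combine Propositions~\ref{Prop: KL bound} and \ref{Prop: lower bound} into a single self-referential inequality of the form $x \leq K\sqrt{x}$ for $x = \genb(\mrm^\alpha(\mun),\mu)$, and then solve it. First I would apply Proposition~\ref{Prop: KL bound} directly to the Gibbs measure $\mrm^\alpha(\mun)$, producing a bound of the form
\begin{equation*}
\genb(\mrm^\alpha(\mun),\mu) \leq \tfrac{1}{\sqrt{2}} M_{\ell'} L_\psi N_{\max} M_\phi \, \bbE_{\mbZn,\ols{Z}_1}\bigl[\sqrt{\KLr(\mrm^\alpha(\mun)\|\mrm^\alpha(\mur))}\bigr].
\end{equation*}
Jensen's inequality for the concave square-root then pulls the expectation inside the radical, and the trivial inequality $\KLr(p\|q) \leq \KLr_{\mathrm{sym}}(p\|q)$ (using $\KLr(q\|p) \geq 0$) upgrades the one-sided KL inside to the symmetric one that Proposition~\ref{Prop: lower bound} controls.

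Next I would plug in Proposition~\ref{Prop: lower bound}, which says
\begin{equation*}
\bbE_{\mbZn,\ols{Z}_1}\bigl[\KLr_{\mathrm{sym}}(\mrm^\alpha(\mun)\|\mrm^\alpha(\mur))\bigr] \leq \tfrac{2\alpha}{n}\,\genb(\mrm^\alpha(\mun),\mu).
\end{equation*}
Combining the two steps yields
\begin{equation*}
\genb(\mrm^\alpha(\mun),\mu) \leq \tfrac{1}{\sqrt{2}} M_{\ell'} L_\psi N_{\max} M_\phi \sqrt{\tfrac{2\alpha}{n}\,\genb(\mrm^\alpha(\mun),\mu)}.
\end{equation*}
If $\genb(\mrm^\alpha(\mun),\mu) \leq 0$, the conclusion holds trivially since the target bound $\alpha C/n$ is nonnegative. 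Otherwise I would divide through by $\sqrt{\genb(\mrm^\alpha(\mun),\mu)}$ and square, obtaining $\genb(\mrm^\alpha(\mun),\mu) \leq \alpha (M_{\ell'} L_\psi N_{\max} M_\phi)^2/n = \alpha C / n$, which matches the constant $C$ in the statement (the factors of $\sqrt{2}$ and $\sqrt{2}$ cancel cleanly).

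The argument from here is essentially mechanical; the substantive work lies upstream in the two propositions. The only mild subtlety to flag is the Jensen-plus-symmetrization step, which is what allows the one-sided KL bound from Proposition~\ref{Prop: KL bound} to be fed into the symmetric KL lower bound from Proposition~\ref{Prop: lower bound}; without this, the two estimates would not be directly compatible. The nonnegativity case-split for $\genb$ is worth an explicit line, since otherwise the division by $\sqrt{\genb}$ is not justified.
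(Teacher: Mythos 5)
Your proposal is correct and follows essentially the same route as the paper: both combine Proposition~\ref{Prop: KL bound} (with Jensen and $\KLr\leq\KLr_{\mathrm{sym}}$) and Proposition~\ref{Prop: lower bound} into a self-referential inequality; the paper solves it for $\sqrt{\bbE[\KLr(\mrm(\mun)\|\mrm(\mur))]}$ and substitutes back, while you solve directly for $\genb$, which is an algebraically equivalent rearrangement yielding the same constant $C=(M_{\ell'}M_{\phi}L_{\psi}N_{\max})^2$.
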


\begin{remark}[Readout-function comparison]
   In 
   Theorem \ref{thm: main unit upper}, 
   the mean-readout function with $L_{\psi}=1/N$ exhibits a tighter upper bound when compared to the sum-readout function with $L_\psi=1$. 
   For the mean-readout function, in Theorem~\ref{thm: main unit upper} we have $C=(M_{\ell'}M_{\phi})^2$.
\end{remark}

\begin{remark}[Comparison with \cite{aminian2023mean}]
In \citep[Theorem 3.4]{aminian2023mean}, 
an exact representation of the generalization error in terms of the functional derivative of the parameter measure with respect to the data measure, i.e., $\frac{\delta \mrm}{\delta \mu}(\mu_n,z)(\mathrm{d}w),$ is provided. Then, for the Gibbs measure, in \citep[Lemma D.4]{aminian2023mean}, an upper bound on the generalization error requires  
to compute $\frac{\delta \mrm}{\delta \mu}(\mu_n,z)(\mathrm{d}w)$, the functional derivative of parameter measure for the data measure. Instead, we use the convexity of the loss function concerning the parameter measure, Proposition~\ref{Prop: lower bound}, and the general upper bound on the generalization error, Proposition~\ref{Prop: KL bound}, to establish an upper bound on the generalization error of the Gibbs measure, Theorem~\ref{thm: main unit upper}, via symmetrized KL divergence properties. Not only is this simpler, but, more importantly, 
our framework enables us to derive non-trivial upper bounds on the generalization error of the graph neural network (GNN) based on specific graph properties, such as $d_{\max}$, $d_{\min}$, and $R_{\max}$, e.g., see Remarks \ref{remark: graph-filter and infinite norm} and \ref{remark: graph filter and frobenius norm}.
\end{remark}
\begin{remark}[Comparison with \citep{aminian2021exact}]
In \citep[Theorem~1]{aminian2021exact}, an exact characterization of the expected generalization error of the Gibbs algorithm in terms of the symmetrized KL information~\citep{aminian2015capacity} is derived. Then, \citet{aminian2021exact} derived an upper bound on the expected generalization error of the Gibbs algorithm. However, as discussed in \citep[Appendix~F]{aminian2023mean}, the Gibbs algorithm is different from the Gibbs measure and the generalization error analysis of the Gibbs algorithm can not be applied to the mean-field regime.
\end{remark}
\subsection{Generalization Error via Rademacher Complexity} 
Inspired by the Rademacher complexity analysis in \cite{nishikawatwo,nitanda2022particle}, we derive a high-probability upper bound on the generalization error in the mean-field regime. 
\begin{proposition}[Upper bound on the symmetrized KL divergence]\label{prop: bound on sum}
    Under Assumptions~ \ref{Ass: on loss NN}, \ref{ass: bounded Unit function}, and \ref{ass: readout function}, 
    \begin{equation*}
        \KLr_{\mathrm{sym}}(\mrm^\alpha(\mun)\|\pi)\leq 2 N_{\max} M_{\phi} M_{\ell'}L_{\psi}\alpha.
    \end{equation*}
\end{proposition}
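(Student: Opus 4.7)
The strategy is to write the log-density ratio $\log(\mrm^\alpha(\mun)/\pi)$ explicitly via the Gibbs formula \eqref{Eq: Gibbs Measure}, express the symmetrized KL divergence as an integral against the signed measure $\mrm^\alpha(\mun)-\pi$, and then bound the resulting linear derivative of the empirical risk uniformly in $w$.

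\textbf{Step 1: Reduction to a linear derivative integral.} From \eqref{Eq: Gibbs Measure}, for each $w\in\mathcal{W}$,
\begin{equation*}
\log\frac{\mrm^\alpha(\mun)}{\pi}(w) = -\alpha\,\frac{\delta \mathrm R}{\delta m}(\mrm^\alpha(\mun),\mun,w)-\log S_{\alpha,\pi}(\mun).
\end{equation*}
Using the identity $\KLr_{\mathrm{sym}}(\mrm^\alpha(\mun)\|\pi)=\int\log(\mrm^\alpha(\mun)/\pi)(w)\,(\mrm^\alpha(\mun)-\pi)(\mrd w)$ and the fact that both $\mrm^\alpha(\mun)$ and $\pi$ are probability measures (so the constant $\log S_{\alpha,\pi}(\mun)$ integrates against $\mrm^\alpha(\mun)-\pi$ to zero), I obtain
\begin{equation*}
\KLr_{\mathrm{sym}}(\mrm^\alpha(\mun)\|\pi)=-\alpha\int \frac{\delta \mathrm R}{\delta m}(\mrm^\alpha(\mun),\mun,w)\,(\mrm^\alpha(\mun)-\pi)(\mrd w).
\end{equation*}

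\textbf{Step 2: Uniform bound on the linear derivative.} Since $\mathrm R(m,\mun)=\frac{1}{n}\sum_{i=1}^n\ell(\Psi(m,\mbx_i),y_i)$, chain-rule differentiation in the sense of Definition~\ref{def:flatDerivative} gives
\begin{equation*}
\frac{\delta \mathrm R}{\delta m}(m,\mun,w)=\frac{1}{n}\sum_{i=1}^n\partial_{\hat y}\ell(\Psi(m,\mbx_i),y_i)\cdot\psi'\!\Bigl(\sum_{j=1}^{N_i}\bbE_{W\sim m}[\phi(W,G(\mbA_i)[j,:]\mbF_i)]\Bigr)\sum_{j=1}^{N_i}\phi(w,G(\mbA_i)[j,:]\mbF_i),
\end{equation*}
where the linear derivative of $m\mapsto\bbE_{W\sim m}[\phi(W,\cdot)]$ is simply $w\mapsto\phi(w,\cdot)$. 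Applying Assumption~\ref{Ass: on loss NN} ($|\partial_{\hat y}\ell|\le M_{\ell'}$), Assumption~\ref{ass: readout function} ($|\psi'|\le L_\psi$ a.e.), Assumption~\ref{ass: bounded Unit function} ($|\phi|\le M_\phi$), and $N_i\le N_{\max}$, I conclude
\begin{equation*}
\Bigl|\tfrac{\delta \mathrm R}{\delta m}(\mrm^\alpha(\mun),\mun,w)\Bigr|\le M_{\ell'}L_\psi N_{\max} M_\phi\quad\text{for every }w\in\mathcal{W}.
\end{equation*}

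\textbf{Step 3: Total-variation bound.} For any measurable $f$ with $\|f\|_\infty\le M$ and any two probability measures $\mu,\nu$, one has $|\int f\,(\mu-\nu)(\mrd w)|\le 2M$. Applying this with $M=M_{\ell'}L_\psi N_{\max}M_\phi$ to the display in Step 1, and using that $\KLr_{\mathrm{sym}}(\mrm^\alpha(\mun)\|\pi)\ge 0$, yields
\begin{equation*}
\KLr_{\mathrm{sym}}(\mrm^\alpha(\mun)\|\pi)\le 2\alpha N_{\max}M_\phi M_{\ell'}L_\psi,
\end{equation*}
which is the claimed bound.

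\textbf{Anticipated difficulty.} The steps are conceptually short; the only delicate point is justifying the chain-rule computation of $\frac{\delta \mathrm R}{\delta m}$ in Step~2 rigorously under Definition~\ref{def:flatDerivative} (in particular, that the composition with the $L_\psi$-Lipschitz readout $\psi$ yields a linear derivative with the factor $\psi'$, which requires care if $\psi$ is not everywhere differentiable). This can be handled either by an explicit computation of $U(m')-U(m)=\int_0^1\int \frac{\delta U}{\delta m}(m_\lambda,w)(m'-m)(\mrd w)\mrd\lambda$ with $U=\mathrm R(\cdot,\mun)$ and a Lebesgue-a.e.\ version of $\psi'$, or by invoking the standard result that Lipschitz $\psi$ composed with a linearly differentiable functional is itself linearly differentiable with the expected derivative. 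Apart from this, everything reduces to applying the uniform bounds supplied by Assumptions~\ref{Ass: on loss NN}--\ref{ass: readout function}.
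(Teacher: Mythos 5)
Your proof is correct and follows essentially the same route as the paper's: both write $\KLr_{\mathrm{sym}}(\mrm^\alpha(\mun)\|\pi)$ as an integral of the log-density ratio against $\mrm^\alpha(\mun)-\pi$, cancel the normalizing constant, and bound the functional derivative of the empirical risk uniformly by $N_{\max}M_\phi M_{\ell'}L_\psi$ (the paper phrases the last step as $2\alpha L_R\,\mathbb{TV}(\mrm^\alpha(\mun),\pi)$ followed by $\mathbb{TV}\leq 1$, which is identical to your direct $2M$ bound). Your Step 2 is in fact slightly more explicit than the paper about where the factor $L_\psi$ enters via the chain rule through the readout $\psi$; since the paper only uses the mean- and sum-readouts, which are linear, the differentiability caveat you flag does not cause any difficulty.
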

Combining Proposition~\ref{prop: bound on sum} with \citep[Lemma~5.5]{chen2020generalized}, uniform bound and Talagrand’s contraction lemma~\cite{mohri2018foundations}, we can derive a high-probability upper bound on the generalization error via Rademacher complexity analysis. The details are provided in App.~\ref{app: rademacher}.
   \begin{theorem}[Generalization error upper bound via Rademacher complexity]\label{thm: main unit upper rademacher}
    Let Assumptions~\ref{Ass: on loss NN}, \ref{ass: bounded Unit function}, \ref{ass: readout function}, and \ref{Ass: on loss NN bounded} hold. Then, for any $\delta\in(0,1)$, with probability at least $1-\delta$, 
    under the distribution of $P_{\mbZn}$,
\begin{align*}
 \mathrm{gen}(\mrm^\alpha(\mun),\mu) 
     &\leq  
  4 N_{\max} M_{\phi}M_{\ell'}L_{\psi} \sqrt{\frac{ N_{\max} M_{\phi} M_{\ell'}L_{\psi}\alpha}{n}}\\&\quad+3M_\ell\sqrt{\frac{\log(2/\delta)}{2n}}\,.
  \end{align*}
\end{theorem}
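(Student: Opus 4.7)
The plan is to combine the KL-divergence control from Proposition~\ref{prop: bound on sum} with a standard Rademacher-complexity-based uniform-convergence bound and then peel off the two outer Lipschitz compositions via Talagrand's contraction lemma. Concretely, I will define the hypothesis class
\[
\mathcal{H} := \Bigl\{\mbX \mapsto \Psi(m, \mbX) : m \in \mathcal{P}(\mathcal{W}),\, \KLr_{\mathrm{sym}}(m \| \pi) \leq K \Bigr\}, \qquad K := 2 N_{\max} M_\phi M_{\ell'} L_\psi \alpha,
\]
so that Proposition~\ref{prop: bound on sum} ensures $\mrm^\alpha(\mun) \in \mathcal{H}$ almost surely. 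Since the loss is bounded in $[0, M_\ell]$ by Assumption~\ref{Ass: on loss NN bounded}, the standard McDiarmid-plus-symmetrisation argument for bounded losses yields, with probability at least $1-\delta$,
\[
\gen(\mrm^\alpha(\mun), \mu) \leq \sup_{m \in \mathcal{H}} \gen(m, \mu) \leq 2\, \bbE_{\mbZn}\bigl[\rademacher(\ell \circ \mathcal{H})\bigr] + 3 M_\ell \sqrt{\tfrac{\log(2/\delta)}{2n}},
\]
which immediately produces the $3M_\ell \sqrt{\log(2/\delta)/(2n)}$ term of the theorem.

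Next I will reduce the Rademacher complexity of $\ell \circ \mathcal{H}$ to that of the inner measure-parameterised class. By Assumption~\ref{Ass: on loss NN} the map $\hat y \mapsto \ell(\hat y, y)$ is $M_{\ell'}$-Lipschitz, and by Assumption~\ref{ass: readout function} the readout $\psi$ is $L_\psi$-Lipschitz with $\psi(0)=0$, so two applications of Talagrand's contraction lemma~\cite{mohri2018foundations} (absorbing the constant $\ell(0,y)$ via a shift, which leaves the Rademacher complexity invariant) give
\[
\rademacher(\ell \circ \mathcal{H}) \leq M_{\ell'} L_\psi\, \rademacher(\mathcal{H}'),
\]
where
\[
\mathcal{H}' := \Bigl\{\mbX \mapsto \sum_{j=1}^N \bbE_{W \sim m}\bigl[\phi(W, G(\mbA)[j,:]\mbF)\bigr] : \KLr_{\mathrm{sym}}(m \| \pi) \leq K \Bigr\}.
\]
Assumption~\ref{ass: bounded Unit function} combined with $N \leq N_{\max}$ shows that every element of $\mathcal{H}'$ is bounded in absolute value by $N_{\max} M_\phi$.

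The core technical step is then to invoke \citep[Lemma~5.5]{chen2020generalized}, which bounds the empirical Rademacher complexity of a class of $m$-parametrised expectations $\bbE_{W \sim m}[\phi(W, \cdot)]$ subject to a symmetrised-KL constraint to the prior by $B\sqrt{2K/n}$ whenever the integrand is uniformly bounded by $B$. Applying it to $\mathcal{H}'$ with $B = N_{\max} M_\phi$ yields
\[
\rademacher(\mathcal{H}') \leq N_{\max} M_\phi \sqrt{\tfrac{2K}{n}} = 2 N_{\max} M_\phi \sqrt{\tfrac{N_{\max} M_\phi M_{\ell'} L_\psi \alpha}{n}},
\]
and chaining this through the contraction inequality and the symmetrisation factor of $2$ produces exactly the $4 N_{\max} M_\phi M_{\ell'} L_\psi \sqrt{N_{\max} M_\phi M_{\ell'} L_\psi \alpha / n}$ Rademacher term in the theorem. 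I expect the main obstacle to be checking that the hypotheses of the Chen et al.\ lemma apply cleanly in the graph setting — in particular, that the $m$-linear kernel $\phi(W, G(\mbA)[j,:]\mbF)$ is uniformly bounded across the full product space of data and parameters, that the sum over $N \leq N_{\max}$ nodes is absorbed into the $B$ bound in the right way, and that the two Talagrand contractions can be composed in the correct order around the readout $\psi$; the remaining pieces (McDiarmid, symmetrisation, boundedness) are entirely standard.
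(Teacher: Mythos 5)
Your proposal is correct and follows essentially the same route as the paper: a KL-ball hypothesis class whose empirical Rademacher complexity is bounded via \citep[Lemma~5.5]{chen2020generalized} (Donsker representation plus Hoeffding), combined with the uniform bound, Talagrand's contraction for the loss and readout, and Proposition~\ref{prop: bound on sum} to set the radius $H=2N_{\max}M_\phi M_{\ell'}L_\psi\alpha$. The only cosmetic difference is that you constrain the class by $\KLr_{\mathrm{sym}}(m\|\pi)\leq K$ whereas the paper uses $\KLr(m\|\pi)\leq H$; since $\KLr\leq\KLr_{\mathrm{sym}}$ your class is contained in the paper's, so the same complexity bound applies and the constants agree.
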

\begin{remark}[Comparison with \cite{chen2020generalized,nishikawatwo,nitanda2022particle}] 
In \cite{chen2020generalized,nishikawatwo,nitanda2022particle}, it is assumed that there exists a ``true'' distribution $m_{\mathrm{true}}\in \mathcal{P}(\mathcal{W})$  which satisfies $\ell(\Psi(m_{\mathrm{true}},\mbX_i),y_i)=0$ for all $(\mbX_i,y_i)\in\mathcal{Z}$ where $\mu(\mbX_i,y_i)>0$ and the KL-divergence between the true distribution and the prior distribution is finite, i.e., $\KLr(m_{\mathrm{true}}\|\pi)< \infty$.  In particular,
the authors in \cite{chen2020generalized} contributed Theorem 4.5 to provide an upper bound for one-hidden layer neural networks in terms of the chi-square divergence,  $\chi^2(m_{\mathrm{true}}\|\pi)$, which is unknown. In addition, in \cite{nishikawatwo,nitanda2022particle}, the authors proposed upper bounds in terms of $\KLr(m_{\mathrm{true}}\|\pi)$ which cannot be computed. To address this issue, our main contribution in comparison with \cite{chen2020generalized,nishikawatwo,nitanda2022particle} is the utilization of Proposition~\ref{prop: bound on sum}, to obtain a parametric upper bound that can be efficiently computed numerically, overcoming the challenges posed by the unknown term. 
\end{remark}
  \begin{remark}[Comparison with Theorem~\ref{thm: main unit upper}]
   The convergence rate of the generalization error upper bound in Theorem~\ref{thm: main unit upper rademacher} is $O(1/\sqrt{n})$, while the upper bound in Theorem~\ref{thm: main unit upper} achieves a faster convergence rate of $O(1/n)$.
\end{remark}
\subsection{Over-Parameterized One-Hidden-Layer GCN} 
For the over-parameterized one-hidden-layer GCN, we first show the boundedness of the unit function as in Assumption~\ref{ass: bounded Unit function} under an additional assumption. 
\begin{assumption}[Activation functions in neuron units]\label{ass: bounded activation function} The activation function $\varphi:\bbR\mapsto\bbR$ is $L_\varphi$-Lipschitz \footnote{For the Tanh activation function, we have $L_\varphi=1$.}, so that $|\varphi(x_1)-\varphi(x_2)|\leq L_\varphi |x_1-x_2|$ for all $x_1,x_2\in \mbR$,  and zero-centered, i.e., $\varphi(0)=0$.  
\end{assumption}
\begin{lemma}[Upper Bound on the GCN Unit Output]\label{lem: GCN case}
   Let Assumptions~\ref{Ass: Feature node bounded} and \ref{ass: bounded activation function} hold. For a graph sample $(\mbA_q,\mbF_q)$ with $N$ nodes and a graph filter $G(\cdot)$, the following upper bound holds on the summation of GCN neuron units over all nodes:
     \begin{align*}
        &\sum_{j=1}^N|\phi_c(W_c,G(\mbA_q)[j,:]\mbF_q)|\\&\quad\leq N w_{2,c}L_{\varphi}\norm{W_{1,c}}_2B_f G_{\max}.
        \end{align*}
\end{lemma}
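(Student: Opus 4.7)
\textbf{Proof plan for Lemma \ref{lem: GCN case}.} The plan is to peel off the three layers of the neuron unit (outer weight, activation, inner product with aggregated features) using monotone bounds, and then handle the sum over nodes via two parallel estimates, one for each of the two norms whose minimum defines $G_{\max}$.

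First I would fix the node $j$ and bound a single summand. Since $|W_{2,c}(i)| \leq w_{2,c}$ by definition of $w_{2,c}$, and since $\varphi$ is $L_\varphi$-Lipschitz with $\varphi(0)=0$ by Assumption \ref{ass: bounded activation function}, I get $|\varphi(x)| \leq L_\varphi |x|$, hence
\[
|\phi_c(W_c, G(\mbA_q)[j,:]\mbF_q)| \leq w_{2,c}\, L_\varphi\, |\,G(\mbA_q)[j,:]\mbF_q \cdot W_{1,c}|.
\]
Applying Cauchy--Schwarz to the inner product with $W_{1,c}$ reduces matters to bounding $\sum_{j=1}^N \|G(\mbA_q)[j,:]\mbF_q\|_2$ by $N B_f G_{\max}$.

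For the node-wise row, write $g_j := G(\mbA_q)[j,:]$, so that $g_j \mbF_q = \sum_{\ell=1}^N g_j(\ell)\, \mbF_q[\ell,:]$. By the triangle inequality in $\ell_2$ and the bounded-feature Assumption \ref{Ass: Feature node bounded},
\[
\|g_j \mbF_q\|_2 \leq \sum_{\ell=1}^N |g_j(\ell)|\, \|\mbF_q[\ell,:]\|_2 \leq B_f \|g_j\|_1.
\]
Summing over $j$, I now go down two routes. Along the $\infty$-norm route, $\sum_{j=1}^N \|g_j\|_1 \leq N \max_j \|g_j\|_1 = N\|G(\mbA_q)\|_\infty$. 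Along the Frobenius route, one Cauchy--Schwarz pass in $\ell_1$ versus $\ell_2$ within each row gives $\|g_j\|_1 \leq \sqrt{N}\,\|g_j\|_2$, and a second Cauchy--Schwarz pass over $j$ yields $\sum_j \|g_j\|_2 \leq \sqrt{N}\,\bigl(\sum_j \|g_j\|_2^2\bigr)^{1/2} = \sqrt{N}\,\|G(\mbA_q)\|_F$, so $\sum_j \|g_j\|_1 \leq N \|G(\mbA_q)\|_F$. Taking the minimum of the two estimates and invoking the definition \eqref{eq: gmax} of $G_{\max}$ gives $\sum_{j=1}^N \|g_j \mbF_q\|_2 \leq N B_f G_{\max}$, and combining with the single-node bound proves the lemma.

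There is no real obstacle here; the only mild subtlety is choosing the correct norm-interpolation inequalities so that the $\|G\|_\infty$ and $\|G\|_F$ estimates appear on equal footing, which is needed because $G_{\max}$ is defined as the minimum of these two quantities. Everything else is Lipschitz continuity, zero-centering of $\varphi$, Cauchy--Schwarz, and the triangle inequality.
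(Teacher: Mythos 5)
Your proposal is correct and follows essentially the same route as the paper's proof: bound $|W_{2,c}|$ by $w_{2,c}$, use the Lipschitz and zero-centering properties of $\varphi$, apply Cauchy--Schwarz to the inner product with $W_{1,c}$, expand $G(\mbA_q)[j,:]\mbF_q$ by the triangle inequality with the feature bound $B_f$, and then run the two parallel estimates against $\norm{G(\mbA_q)}_{\infty}$ and $\norm{G(\mbA_q)}_{F}$ before taking the minimum. The only cosmetic difference is that you reach $\sum_j\norm{g_j}_1\leq N\norm{G(\mbA_q)}_F$ via two Cauchy--Schwarz passes (within rows, then over rows) whereas the paper applies a single $\ell_1$--$\ell_2$ comparison over all $N^2$ entries; both yield the identical constant.
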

Combining Lemma~\ref{lem: GCN case} with Theorem~\ref{thm: main unit upper}, we can derive an upper bound on the generalization error of an over-parameterized one-hidden-layer GCN.
\begin{proposition}[Generalization error and GCN]\label{prop: GCN result}
In a GCN model with the mean-readout function, under the combined assumptions for Theorem~\ref{thm: main unit upper} and Lemma~\ref{lem: GCN case}, the following upper bound holds on the generalization error of the Gibbs measure  $\mrm^{\alpha,c}(\mun)$, 
\begin{equation*}
 \genb(\mrm^{\alpha,c}(\mun),\mu) 
     \leq  
  \frac{\alpha M_{c}^2 M_{\ell'}^2G_{\max}^2}{n}\,,
\end{equation*}
where $M_{c}=w_{2,c}L_{\varphi}\norm{W_{1,c}}_2 B_f$.
\end{proposition}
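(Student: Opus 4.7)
The approach is to specialize Theorem~\ref{thm: main unit upper} to the GCN architecture using the graph-filter-aware bound from Lemma~\ref{lem: GCN case}. Concretely, Theorem~\ref{thm: main unit upper} delivers a bound of the form $\alpha C/n$ whose constant is $C=(M_{\ell'}M_\phi L_\psi N_{\max})^2$, and this constant arises only from controlling $|\Psi(m,\mbx)|$ in Proposition~\ref{Prop: KL bound}. So the whole task reduces to replacing the crude bound $|\Psi(m,\mbx)|\le L_\psi N_{\max}M_\phi$ by a sharper one tailored to GCNs with the mean-readout function.

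First, I would bound $|\Psi_c(m,\mbx_q)|$ uniformly in $m\in\mathcal{P}(\mathcal{W}_c)$. Using the Lipschitz property of $\psi$ with $\psi(0)=0$ from Assumption~\ref{ass: readout function}, together with Jensen's inequality to move the absolute value inside the expectation over $W_c$, I get
\begin{equation*}
|\Psi_c(m,\mbx_q)|\le L_\psi\,\Bigl|\sum_{j=1}^{N}\bbE_{W_c\sim m}[\phi_c(W_c,G(\mbA_q)[j,:]\mbF_q)]\Bigr|\le L_\psi\,\bbE_{W_c\sim m}\!\Bigl[\sum_{j=1}^{N}|\phi_c(W_c,G(\mbA_q)[j,:]\mbF_q)|\Bigr].
\end{equation*}
Applying Lemma~\ref{lem: GCN case} inside the expectation yields $\sum_{j=1}^N|\phi_c|\le N\,M_c\,G_{\max}$ uniformly in the parameter, so $|\Psi_c(m,\mbx_q)|\le L_\psi N M_c G_{\max}$. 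For the mean-readout, $L_\psi=1/N$, which yields the clean uniform bound $|\Psi_c(m,\mbx_q)|\le M_c G_{\max}$, independent of $N$.

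Next, I would re-run the chain Proposition~\ref{Prop: KL bound} $\Rightarrow$ Proposition~\ref{Prop: lower bound} $\Rightarrow$ Theorem~\ref{thm: main unit upper} with this sharper uniform bound taking the role of $M_\phi L_\psi N_{\max}$. Since the only place where $M_\phi$, $L_\psi$ and $N_{\max}$ enter the argument is through this product (it controls the Lipschitz constant of $m\mapsto \ell(\Psi(m,\mbx),y)$ with respect to $m$, hence the Pinsker-type step in Proposition~\ref{Prop: KL bound}), substituting $M_cG_{\max}$ in place of $M_\phi L_\psi N_{\max}$ gives $C=(M_{\ell'} M_c G_{\max})^2$, and the Gibbs-measure upper bound becomes
\begin{equation*}
\genb(\mrm^{\alpha,c}(\mun),\mu)\le \frac{\alpha\,M_c^2 M_{\ell'}^2 G_{\max}^2}{n},
\end{equation*}
which is exactly the claim.

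The main delicate point is the uniformity in $m$: Theorem~\ref{thm: main unit upper} was proved using a global bound $|\phi|\le M_\phi$, and here the corresponding quantity from Lemma~\ref{lem: GCN case} holds pointwise in $W_c$ but only because the parameter space is bounded (so $w_{2,c}$ and $\|W_{1,c,m}\|_2$ are finite suprema) and because $B_f$ and $G_{\max}$ are data-side bounds from Assumption~\ref{Ass: Feature node bounded} and the definition~\eqref{eq: gmax}. Verifying that these suprema can be pulled outside the expectation over $W_c\sim m$, and hence that the resulting bound on $|\Psi_c(m,\mbx)|$ is genuinely uniform in both $m$ and $\mbx$, is the only nontrivial checkpoint; everything else is a direct specialization of Theorem~\ref{thm: main unit upper}.
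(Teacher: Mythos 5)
Your proposal is correct and follows essentially the same route as the paper: the paper's proof is a one-line combination of Lemma~\ref{lem: GCN case} with Theorem~\ref{thm: main unit upper}, using the mean-readout factor $L_\psi=1/N$ to cancel the node count so that $M_\phi L_\psi N_{\max}$ is effectively replaced by $M_c G_{\max}$. Your additional care about uniformity in $m$ and in the data is a reasonable elaboration of what the paper leaves implicit, but it does not change the argument.
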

\begin{remark}[Graph filter and $\norm{G(\mbA)}_{\infty}$]\label{remark: graph-filter and infinite norm}
Proposition~\ref{prop: GCN result} shows that choosing the graph filter with smaller $\norm{G(\mathcal{A})}_{\infty}^{\max}$ can affect the upper bound on the generalization error of GCN. In particular, it shows the effect of the aggregation of the input features on the generalization error upper bound.
For example, if we consider the graph filter $G(\mbA)=\tilde{L}$, then we have $\norm{G(\mathcal{A})}_{\infty}^{\max}\leq \sqrt{(d_{\max}+1)/(d_{\min}+1)}$, for sum-aggregation $G(\mbA)=\mbA+I$ we have $\norm{G(\mathcal{A})}_{\infty}^{\max}\leq d_{\max}+1$,  and for random-walk, i.e., $G(\mbA)=D^{-1}\mbA+I$, we have $\norm{G(\mathcal{A})}_{\infty}^{\max}= 2$. 

\end{remark}
\begin{remark}[Graph filter and $\norm{G(\mbA)}_{F}$]\label{remark: graph filter and frobenius norm}
Similarly, choosing the graph filter with a smaller $\norm{G(\mathcal{A})}_{F}^{\max}$ value can affect the upper bound on the generalization error of GCN. For example, if we consider the graph filter $G(\mbA)=\tilde{L}$, then 
we have $\norm{G(\mathcal{A})}_{F}^{\max}\leq \sqrt{R_{\max}(\tilde{L})}$ and for random walk graph filter $G(\mbA)=\tilde{D}^{-1}\mbA+I$ we have $\norm{G(\mathcal{A})}_{F}^{\max}\leq 2\sqrt{R_{\max}(\tilde{D}^{-1}\mbA+I)}$. 
\end{remark}
In a similar approach to Proposition~\ref{prop: GCN result}, we can derive an upper bound on generalization error of GCN by combining Lemma~\ref{lem: GCN case} with Theorem~\ref{thm: main unit upper rademacher}.
\subsection{Over-Parameterized One-Hidden-Layer MPGNN}
Similar to GCN, we next
investigate the boundedness of the unit function as in Assumption~\ref{ass: bounded Unit function} for MPGNN; again we make an additional assumption.
\begin{assumption}[Non-linear functions in the MPGNN units]\label{ass: bound unit MPGNN}
    The non-linear functions $\zeta:\bbR^{N\times k}\mapsto\bbR^{N\times k}$,  $\rho:\bbR^{k}\mapsto\bbR^{k}$ and $\kappa:\bbR\mapsto\bbR$ 
    satisfy
    $\zeta(\pmb{0}^{N\times k})=\pmb{0}^{N\times k}$, $\rho(\pmb{0}^{k})=\pmb{0}^{k}$ and $\kappa(0)=0$, and are Lipschitz with parameters $L_\zeta$, $L_\rho$, and $L_\kappa$ under vector 2-norm, respectively.
\end{assumption}
Similarly to GCN, we provide the following upper bound on the MPGNN unit output. 
\begin{lemma}[Upper Bound on the MPGNN Unit Output]\label{lem: bound unit mpgnn}
Let Assumptions~~\ref{Ass: Feature node bounded} and \ref{ass: bound unit MPGNN} hold. For a graph sample $(\mbA_q,\mbF_q)$ and a graph filter $G(\cdot)$, for the MPU units over all nodes, 
\begin{equation*}
     \begin{split}
        & \sum_{j=1}^N |\phi_p(W_p,G(\mbA_q)[j,:]\mbF_q)|\\&\quad\leq w_{2,p} L_{\kappa}B_f(\norm{W_{3,p,m}}_2 +L_\rho L_{\zeta}G_{\max}\norm{W_{1,p,m}}_2).
        \end{split}
    \end{equation*}
\end{lemma}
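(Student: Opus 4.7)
The plan is to mirror the structure of the proof of Lemma~\ref{lem: GCN case} for the GCN case, but handle the additional inner structure of the MPU unit, namely the sum of a ``self'' term and an aggregated ``neighbour'' term wrapped by the outer non-linearity $\kappa$. First, using $\kappa(0)=0$ and the $L_\kappa$-Lipschitz property from Assumption~\ref{ass: bound unit MPGNN}, together with the bound $|W_{2,p}|\leq w_{2,p}$, I peel off the outer non-linearity:
\[
|\phi_p(W_p,G(\mbA_q)[j,:]\mbF_q)|\leq w_{2,p}L_\kappa\bigl|\mbF_q[j,:]\cdot W_{3,p}+\rho(G(\mbA_q)[j,:]\zeta(\mbF_q))\cdot W_{1,p}\bigr|.
\]
The triangle inequality separates this into a self-term in $W_{3,p}$ and an aggregation-term in $W_{1,p}$.

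Next, I bound each term pointwise. Cauchy--Schwarz combined with Assumption~\ref{Ass: Feature node bounded} gives $|\mbF_q[j,:]\cdot W_{3,p}|\leq B_f\,\|W_{3,p,m}\|_2$. For the aggregation term, Cauchy--Schwarz followed by the Lipschitz property of $\rho$ (with $\rho(\mathbf{0})=\mathbf{0}$) yields
\[
|\rho(G(\mbA_q)[j,:]\zeta(\mbF_q))\cdot W_{1,p}|\leq L_\rho\,\|G(\mbA_q)[j,:]\zeta(\mbF_q)\|_2\,\|W_{1,p,m}\|_2.
\]

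The main obstacle is bounding the sum $\sum_{j=1}^N\|G(\mbA_q)[j,:]\zeta(\mbF_q)\|_2$ in terms of $G_{\max}$ defined in \eqref{eq: gmax}, since $G_{\max}$ is the minimum of an $\infty$-operator norm and a Frobenius norm of the \emph{whole} filter matrix, whereas we need to control a sum over rows of the matrix product $G(\mbA_q)\zeta(\mbF_q)$. I would handle this in two parallel ways and take the smaller. For the infinity-norm branch, use $\|G(\mbA_q)[j,:]\zeta(\mbF_q)\|_2\leq\|G(\mbA_q)[j,:]\|_1\max_i\|\zeta(\mbF_q)[i,:]\|_2$, and then combine $\|G(\mbA_q)[j,:]\|_1\leq\|G(\mathcal{A})\|_\infty^{\max}$ with the Lipschitz property of $\zeta$ (applied row-wise, using $\zeta(\mathbf{0})=\mathbf{0}$ and Assumption~\ref{Ass: Feature node bounded}) to get $\max_i\|\zeta(\mbF_q)[i,:]\|_2\leq L_\zeta B_f$. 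For the Frobenius branch, use the submultiplicativity $\|G(\mbA_q)\zeta(\mbF_q)\|_F\leq\|G(\mbA_q)\|_F\|\zeta(\mbF_q)\|_{\mathrm{op}}\leq\|G(\mathcal{A})\|_F^{\max}\,L_\zeta B_f\sqrt{N}$, then Cauchy--Schwarz on the outer sum $\sum_j\|\cdot\|_2\leq\sqrt{N}\|\cdot\|_F$.

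Finally, I combine the two pointwise bounds inside the common prefactor $w_{2,p}L_\kappa B_f$ and sum over the $N$ nodes to obtain the stated estimate, with the contribution $\|W_{3,p,m}\|_2$ coming from the self-term and $L_\rho L_\zeta G_{\max}\|W_{1,p,m}\|_2$ from the neighbour-aggregation term. The only subtlety to verify is that the row-wise Lipschitz estimate on $\zeta$ used in branch (a) is compatible with the vector 2-norm Lipschitz hypothesis in Assumption~\ref{ass: bound unit MPGNN}; this is automatic if $\zeta$ is applied element-wise (as is standard for activation-type non-linearities), and otherwise the Frobenius branch (b) still delivers the required bound without any row-wise hypothesis.
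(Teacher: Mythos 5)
Your proposal follows essentially the same route as the paper's own proof: peel off $\kappa$ via its Lipschitz property and $\kappa(0)=0$, split by the triangle inequality into the $W_{3,p}$ self-term and the $W_{1,p}$ aggregation term, apply Cauchy--Schwarz and the Lipschitz properties of $\rho$ and $\zeta$ (with the same implicit row-wise reading of the Lipschitz hypothesis on $\zeta$ that the paper uses), and treat the $\infty$-norm and Frobenius branches separately before taking the minimum to get $G_{\max}$. One small observation: both your derivation and the paper's own displayed computation actually terminate with an extra factor of $N$ on the right-hand side (the paper's proof ends with $N\,w_{2,p}L_\kappa B_f(\cdots)$), so the discrepancy with the stated bound lies in the lemma statement itself rather than in either proof.
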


\begin{proposition}[Generalization error and MPGNN]\label{Prop: MPGNN bound}
In an MPGNN with the mean-readout function, under the combined assumptions for Theorem~\ref{thm: main unit upper} and Lemma~\ref{lem: bound unit mpgnn}, 
\begin{equation*}
 \genb(\mrm^{\alpha,p}(\mun),\mu) 
     \leq \frac{\alpha M_{p}^2 M_{\ell'}^2}{n} \,.
\end{equation*}
with $M_{p}=w_{2,p} L_{\kappa}B_f(\norm{W_{3,p}}_2 +G_{\max} L_\rho L_{\zeta}\norm{W_{1,p}}_2)$.
\end{proposition}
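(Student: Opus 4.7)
The plan is to parallel the proof of Proposition~\ref{prop: GCN result} almost verbatim, combining the MPGNN unit output bound from Lemma~\ref{lem: bound unit mpgnn} with the generic upper bound from Theorem~\ref{thm: main unit upper}, specialized to the mean-readout function. Concretely, I would first verify that Assumption~\ref{ass: bounded Unit function} is satisfied for the MPU unit function $\phi_p$, then substitute the corresponding constants into the theorem's bound.

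For the first step, I would derive a per-node bound on $|\phi_p(W_p, G(\mbA_q)[j,:]\mbF_q)|$ by unfolding the MPU definition and applying the triangle inequality together with Cauchy--Schwarz:
\[
|\phi_p| \leq w_{2,p} L_\kappa \bigl(\|\mbF_q[j,:]\|_2 \|W_{3,p}\|_2 + L_\rho \|G(\mbA_q)[j,:]\zeta(\mbF_q)\|_2 \|W_{1,p}\|_2\bigr),
\]
where the Lipschitz constants of $\kappa$ and $\rho$ (Assumption~\ref{ass: bound unit MPGNN}) and zero-centeredness are used. The bound $\|\mbF_q[j,:]\|_2 \leq B_f$ follows from Assumption~\ref{Ass: Feature node bounded}, while $\|G(\mbA_q)[j,:]\zeta(\mbF_q)\|_2 \leq G_{\max} L_\zeta B_f$ follows from the definition of $G_{\max}$ in \eqref{eq: gmax} and the Lipschitzness of $\zeta$ (this is the same estimate that underlies Lemma~\ref{lem: bound unit mpgnn}, without the outer sum over $j$). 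Taking the supremum over $W_p \in \mathcal{W}_p$ yields the uniform bound $M_\phi \leq M_p$ with $M_p = w_{2,p} L_\kappa B_f (\|W_{3,p,m}\|_2 + G_{\max} L_\rho L_\zeta \|W_{1,p,m}\|_2)$, so Assumption~\ref{ass: bounded Unit function} holds for the MPU unit.

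For the second step, I would invoke Theorem~\ref{thm: main unit upper} directly with $M_\phi = M_p$. For the mean-readout function, $L_\psi = 1/N_{\max}$, hence $L_\psi N_{\max} = 1$, and the theorem's constant collapses to $C = (M_{\ell'} M_p)^2$, giving the claimed bound $\genb(\mrm^{\alpha,p}(\mun),\mu) \leq \alpha M_p^2 M_{\ell'}^2/n$. This is exactly the MPGNN analogue of the GCN result in Proposition~\ref{prop: GCN result}.

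The main (minor) obstacle is the bookkeeping needed to extract a per-node bound from the MPGNN architecture, since Lemma~\ref{lem: bound unit mpgnn} is stated as a sum bound over all nodes; fortunately, the estimate goes through unchanged at the per-node level because the right-hand side of the lemma is independent of $N$, so each individual term is controlled by the same constant $M_p$. Once this is observed, the proof is a direct substitution into Theorem~\ref{thm: main unit upper}, with no further novelty required.
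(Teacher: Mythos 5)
Your proof is correct and follows essentially the same route as the paper, whose own proof is the one-line observation that Lemma~\ref{lem: bound unit mpgnn} supplies the unit-function bound required by Theorem~\ref{thm: main unit upper}, after which the mean-readout choice (so that $L_\psi N_{\max}=1$) collapses the constant to $(M_{\ell'}M_p)^2$. Your extra care in extracting a per-node bound so that Assumption~\ref{ass: bounded Unit function} holds with $M_\phi = M_p$ is exactly the estimate underlying the lemma's proof and does not alter the argument.
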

Similar discussions as in Remark~\ref{remark: graph-filter and infinite norm} and Remark~\ref{remark: graph filter and frobenius norm} hold for the effect of graph filter choices in an MPGNN. In a similar approach to Proposition~\ref{Prop: MPGNN bound}, we can derive an upper bound on the generalization error of MPGNN by combining Lemma~\ref{lem: bound unit mpgnn} with Theorem~\ref{thm: main unit upper rademacher}. 
\subsection{Comparison to Previous Works}\label{subsec:comp}
\begin{table*}[ht]\label{Tab: comparison}
\centering
\caption{Comparison of generalization bounds in over-parameterized one-hidden layer GCN. The width of the hidden layer, the number of training samples, maximum and minimum degree of graph data set are denoted as $h$, $n$, $d_{\max}$ and $d_{\min}$, respectively. We have $\tilde{d}_{\max}=d_{\max}+1$ and $\tilde{d}_{\min}=d_{\min}+1$. We examine three types of upper bounds: Probability (P), High-Probability (HP) and Expected (E).“N/A” means not applicable.}
\vspace{-2mm}
\resizebox{\linewidth}{!}{
\begin{tabular}{cccccc} 
\toprule
   Approach & 
   $\tilde{d}_{\max},\tilde{d}_{\min}$ & 
   $h$  
   & 
   $n$ & Bound Type \cr  
\midrule
\makecell{VC-Dimension \cite{scarselli2018vapnik}}    & N/A & $O(h^4)$ & $O(1/\sqrt{n})$ & HP \\
\makecell{Rademacher Complexity   \cite{garg2020generalization} } &$O(\tilde{d}_{\max}\log^{1/2}(\tilde{d}_{\max}))$ &$O(h\sqrt{\log(h)})$ & $O(1/\sqrt{n})$   & HP \\
\makecell{PAC-Bayesian \cite{liao2020pac} } &$O(\tilde{d}_{\max})$ &$O(\sqrt{h\log(h)})$ & $O(1/\sqrt{n})$ & HP  \\ 
\makecell{PAC-Bayesian\cite{ju2023generalization}}
&N/A & $O(\sqrt{h})$& $O(1/\sqrt{n})$ & HP \\
\makecell{Continuous MPGNN   \cite{maskey2022generalization}}&N/A & N/A  & $O(1/\sqrt{n})$   & P 
\\
\makecell{Rademacher Complexity (this paper, Theorem~\ref{thm: main unit upper rademacher})} &$O\big((\tilde{d}_{\max}/\tilde{d}_{\min})^{3/4}\big)$& N/A &  $O(1/\sqrt{n})$ & HP \\
\textbf{Functional Derivative (this paper, Theorem~\ref{thm: main unit upper})} &$O(\tilde{d}_{\max}/\tilde{d}_{\min})$& \textbf{N/A} &  \bm{$O(1/n)$} & E\\
\bottomrule
\end{tabular}
}
\vspace{-3mm}
\label{table:comparison}
\end{table*}

We compare our generalization error upper bound with other generalization error upper bounds derived by the VC-dimension approach~\citep{scarselli2018vapnik}, bounding Rademacher Complexity~\citep{garg2020generalization}, PAC-Bayesian bounds via perturbation analysis~\citep{liao2020pac}, PAC-Bayesian bounds based on the Hessian matrix of the loss ~\citep{ju2023generalization}, and probability bounds for continuous MPGNNs on large random graphs~\citep{maskey2022generalization}. We also compare with our upper bound in App.\ref{app: rademacher} obtained via Rademacher complexity in App.\ref{app: rademacher}. To compare different bounds, we analyze their convergence rate concerning the width of the hidden layer $(h)$ and the number of training samples $(n)$. We also examine the type of bounds on the generalization error, including high-probability bounds, probability bounds, and expected bounds. 
In high-probability bounds, the upper bound depends on $\log(1/\delta)$ for $\delta\in (0,1)$, as opposed to $1/\delta$ in probability bounds. For small $\delta$, the high probability bounds are tighter with respect to probability bounds. We use a fixed $\alpha$ in Theorem~\ref{thm: main unit upper} for our comparison in Table~\ref{table:comparison}. More discussion is found in App.~\ref{App: dis com}.

As shown in Tab.~\ref{table:comparison}, the upper bounds in \cite{scarselli2018vapnik,garg2020generalization,liao2020pac} and \cite{ju2023generalization} are vacuous for infinite width ($h\rightarrow \infty$) of one hidden layer. We also provide results for other non-linear functions that are functions of the sum of the final node representations. \cite{maskey2022generalization} proposed an expected upper bound on the square of the generalization error of continuous MPGNNs, which is independent of the width of the layers. Then, via the Markov inequality, they provide a probability upper bound on the generalization error of continuous MPGNNs by considering the mean-readout, obtaining a convergence rate of $O(1/\sqrt{n})$. While the random graph model in \cite{maskey2022generalization} is based on graphons, here we do not assume that the graph samples arise from a specific random graph model. To the best of our knowledge, this is the first work to represent an upper bound on the generalization error with the convergence rate of $O(1/n)$.

Inspired by the NTK approach of \cite{jacot/franck/hongler:2018}, \cite{du2019graph} proposed the graph neural tangent kernel (GNTK) as a GNN model for layers of infinite width. They provided a high-probability upper bound on the true risk based on \cite{bartlett2002rademacher} for the sum-readout function, for their proposed structure, GNTK, which is different from GCNs and MPGNNs.
However, as noted in \cite{fang2021mathematical}, the neural tangent kernel has some limitations for over-parameterized analysis of neural networks when compared to mean-field analysis.
Therefore, we do not compare with \cite{du2019graph}. Finally, the upper bound in \cite{verma2019stability} focuses on stability analysis for one-hidden-layer GNNs in the context of semi-supervised node classification tasks. In \cite{verma2019stability}, the data samples are node features rather than a graph, and 
the node features within each graph sample are assumed to be i.i.d., whereas we only assume that the graph samples themselves are i.i.d. and, therefore, we do not compare with \cite{verma2019stability}.
\section{Experiments}
\looseness=-1 Our investigation 
focuses on the over-parameterized one-hidden-layer case in the context of GNNs, such as GCNs and MPGNNs.
Prior works on graph classification tasks \citep{garg2020generalization,liao2020pac,ju2023generalization,maskey2022generalization,du2019graph} have demonstrated that the upper bounds on generalization error tend to increase as the number of layers in the network grows. An examination of the over-parameterized one-hidden-layer case may be particularly instructive in elucidating the generalization error performance of GNNs in the context of graph classification tasks. 

For this purpose, we investigate the effect of the number of hidden neurons $h$ on the true generalization error of GCNs and MPGNNs for the (semi-)supervised graph classification task on both synthetic and real-world data sets. We use a supervised ratio of $\beta_\text{sup}\in\{0.7, 0.9\}$ for $h\in\{4,8, 16, 32, 64, 126, 256\}$ for our experiments detailed in App.~\ref{App: experiments}. For synthetic data sets, we generate three types of Stochastic Block Models (SBMs) and two types of Erd\H{o}s-R\'enyi (ER) models, with 200 graphs for each type. We also conduct experiments on a bioinformatics data set called PROTEINS~\citep{borgwardt2005protein}. Details on implementation, data sets, and additional results are in App.~\ref{App: experiments}.

In our experiments, we use
the logistic loss $\ell(\Psi(\mrm,\mbx),y)=\log(1+\exp(-\Psi(\mrm,\mbx) y))$ for binary classification, where $y$ is the true label and $\Psi(\mrm,\mbx)$ is the mean- or sum-readout function for GCNs and MPGNNs. The empirical risk is then
$\frac{1}{n}\sum_{i=1}^n \log(1+\exp(-\Psi(\mrm,\mbx_i) y_i)).$

\begin{figure}
\vspace{-1mm}
\centering
\begin{subfigure}[ht]{0.45\linewidth}
    \includegraphics[width=1.1\linewidth]{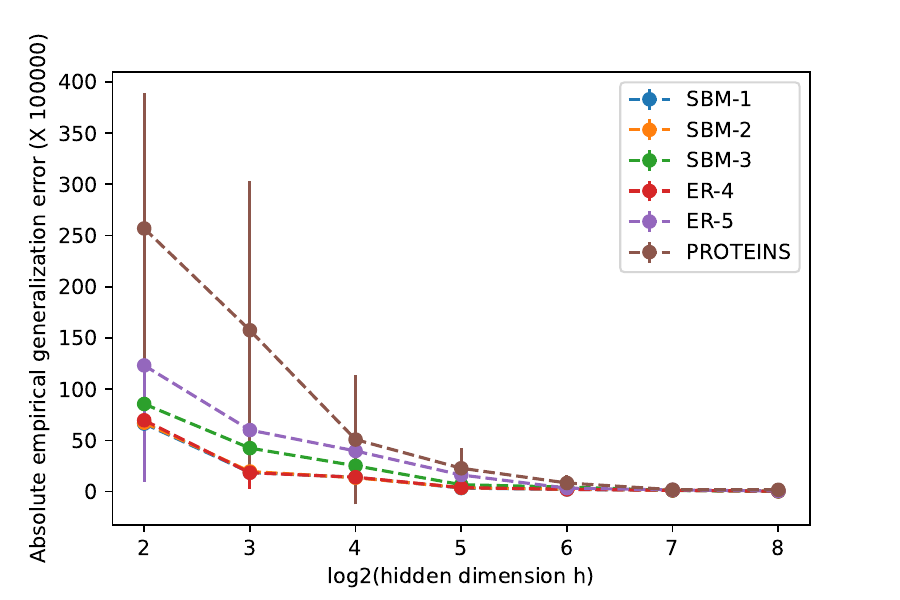}
    \caption{GCN.}
    \label{fig:GCN_mean}
\end{subfigure}
    \begin{subfigure}[ht]{0.45\linewidth}
      \centering
      \includegraphics[width=1.1\linewidth]{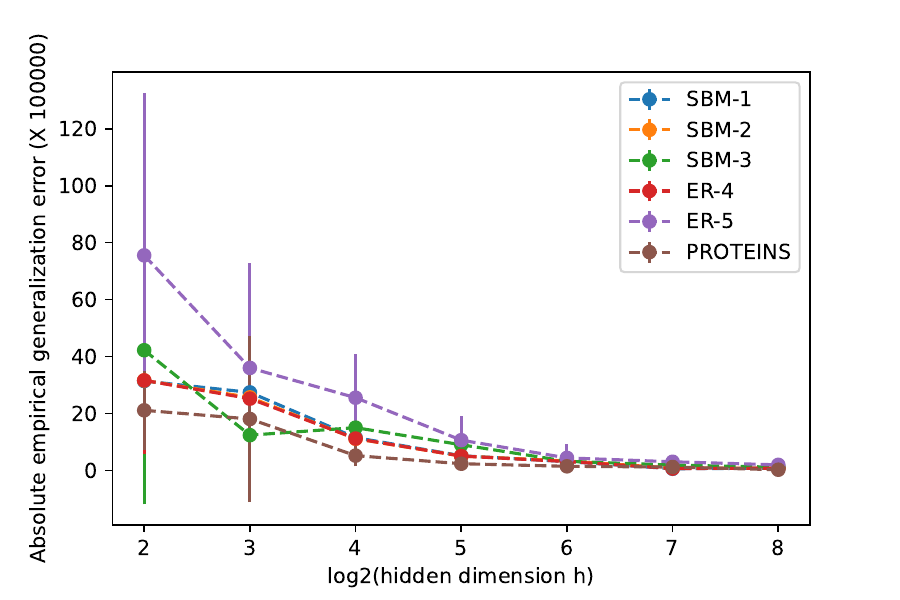}
    \caption{MPGNN.}
    \label{fig:MPGNN_mean}
    \end{subfigure}
    \vspace{-2mm} 
    \caption{Absolute empirical generalization error (\pmb{$\times 10^5$}) for different widths $h$ of the hidden layer. We employ a mean-readout function and a supervised ratio of $\beta_\text{sup}=0.7,$ for GCN and MPGNN. Values are averaged over ten runs. Error bars indicate one standard deviation.}
    \vspace{-4mm}
\label{fig:main_GE}
\end{figure}
From Fig.~\ref{fig:main_GE} (and App.~\ref{App: experiments} with detailed mean and standard deviation values), we observe a consistent trend: as the value of $h$ increases, the absolute generalization error decreases. This observation shows that the upper bounds dependent on the width of the layer fail to capture the trend of generalization error in the over-parameterized regime. We provide extended actual absolute generalization errors in this section and the values of our upper bounds are provided in App.~\ref{App: experiments}.
\section{Conclusions and Future Work}
\label{sec_conclusions}
 This work develops 
generalization error upper bounds for one-hidden-layer GCNs and MPGNNs for graph classification in the over-parameterized regime. Our analysis is based on a mean-field approach. 
Our upper bound on the generalization error of one-hidden-layer GCNs and MPGNNs with the KL-regularized empirical risk minimization is of the order of $O(1/n)$, where $n$ is the number of graph data samples, in the mean-field regime. This order is a significant improvement over previous work, see Table \ref{table:comparison}. 

The main limitation of our work is that it
considers only one hidden layer for graph convolutional networks and message-passing graph neural networks. Inspired by \citet{sirignano/spiliopoulos:2019}, we aim to apply our approach to deep graph neural networks and investigate the effect of depth on the generalization performance. Furthermore, we plan to expand the current framework to study the generalization error of hypergraph neural networks, using the framework introduced in~\cite{feng2019hypergraph}.
\section*{Acknowledgements} Gholamali Aminian, Gesine Reinert, {\L}ukasz Szpruch and Samuel N. Cohen acknowledge the support of the UKRI Prosperity Partnership Scheme (FAIR) under EPSRC Grant EP/V056883/1 and the Alan Turing Institute. Gesine Reinert is also supported in part by EPSRC grants EP/W037211/1 and EP/R018472/1. Yixuan He is supported by a Clarendon scholarship from University of Oxford, and also acknowledges the support from the Alan Turing Institute and G-Research for presenting this work. For the purpose of Open Access, the authors have applied a CC BY public copyright licence to any Author Accepted Manuscript (AAM) version arising from this submission.

\section*{Impact Statement}
This paper presents work whose goal is to advance the field of Machine Learning by providing a theoretical underpinning. There are many indirect potential societal consequences of our work through applications of empirical risk, see for example discussions in  \citet{abrahamsson2006risk} and \citet{tran2021differentially}, and we believe that no direct consequences warrant being highlighted.

\bibliography{Refs}
\bibliographystyle{icml2024}

\clearpage
\appendix
\onecolumn

\section{Preliminaries}\label{App: preliminaries}
Notations in this paper are summarized in Table~\ref{Table: notation}.

\begin{table}[ht]
    \centering
    \caption{Summary of notations in the paper}
   \resizebox{\linewidth}{!}{ \begin{tabular}{cl|cl}
         \toprule
         Notation&  Definition & Notation&  Definition\\
         \midrule
         $\mathcal{W}$ & Parameter space of the model &
         $\mathcal{F}$ & Feature matrices space\\
         $\mathcal{A}$ & Adjacency matrices space&
         $\mbF$ & Matrix of feature nodes of a graph sample\\
         $\mbA$ & Adjacency matrix of a graph sample &
         $\mbX$ & input graph sample where $\mbX=(\mbA,\mbF)$\\
         $Y$ & Label of input graph &
         $Z_i$ & $i$-th graph sample $(\mbX,Y)$\\
         $\mbZn$ & The set of data training samples&
         $D$ & Degree matrix of $A$\\
         $\tilde{L}$ & Symmetric normalized graph filter&
        $n$ & Number of graph data samples  \\
        $N_{\max}$ & Maximum number of nodes of all graph samples&
        $d_{\max}$ & Maximum node degree of all graph samples \\
        $d_{\min}$ & Minimum node degree of all graph samples &
        $G(\mbA)$ & Graph filter with input matrix $\mbA$\\
        $\mun$ & Empirical data measure &
        $\mur$ & Replace-one sample empirical data measure \\
        $M_{\phi}$ & Bound on unit function &
        $M_{\ell'}$ & Bound on $|\partial_{\hat y}\ell(\hat y, y)|$\\
        $L_\zeta$ & Lipschitz parameter of function $\zeta(\cdot)$&
        $L_\rho$& Lipschitz parameter of function $\rho(\cdot)$\\
        $L_\kappa$& Lipschitz parameter of function $\kappa(\cdot)$&
        $L_\psi$ & Lipschitz parameter of readout function $\psi(\cdot)$\\
        $L_{\varphi}$& Lipschitz parameter of activation function $\psi(\cdot)$&
        $B_f$ & Bound on node features\\
        $\alpha$ & Inverse temperature&
        $h$ & Width of hidden layer\\
        $\pi$ & Prior measure over parameters&
        $\mrm^{\alpha}$ & The Gibbs measure for General GNN model\\
        $\mrm^{\alpha,c}$ & The Gibbs measure for GCN model&
        $\mrm^{\alpha,p}$ & The Gibbs measure for MPGNN model\\
        $m_{\mathrm{true}}$ & True distribution over data samples&
        $G_{\max}$ & $\min(\norm{G(\mathcal{A})}_{\infty}^{\max},\norm{G(\mathcal{A})}_{F}^{\max})$\\
        $\Psi(\cdot)$ & Readout function & $(W_{1,c},W_{2,c})$ & Parameters of Neuron unit\\
        $(W_{1,p},W_{2,p},W_{3,p})$ & Parameters of MPU unit & $\genb(\mrm(\mun),\mu)$ & Generalization error under parameter measure $\mrm(\mun)$\\ $\beta_{\mathrm{sup}}$ & Supervised ratio & $\norm{\mathbf{Y}}_{F}$ & $\sqrt{\sum_{j=1}^k\sum_{i=1}^q\mathbf{Y}^2[j,i]}$ \\
        $\norm{\mathbf{Y}}_{\infty}$& $\max_{1\leq j\leq k}\sum_{i=1}^q|\mathbf{Y}[j,i]|$ & $\KLr(p\|q)$ & $\int_{\mathbb{R}^d} p(x)\log(p(x)/q(x))\mrd x$\\
        $\KLr_{\mathrm{sym}}(p\|q)$ & $\KLr(p\|q)+\KLr(q\|p)$ & $\phi(\cdot)$ & Unit function \\
         \bottomrule
    \end{tabular}}
    \label{Table: notation}
\end{table}

Let us recapitulate all the assumptions required for our proofs. 

\begin{repassumption}{Ass: on loss NN}[Loss function]
      The loss function, $(\hat y,y)\mapsto \ell(\hat y,y)$, satisfies the following conditions,
      \begin{enumerate}[(i)]
          \item\label{ass: bounded grad} The gradient of the loss function $(\hat y,y)\mapsto \ell(\hat y,y)$ with respect to $\hat y$ is continuous and uniformly bounded for all $\hat y,y\in\mathcal{Y}$, i.e., there is a constant $M_{\ell'} $ such that $|\partial_{\hat y}\ell(\hat y, y)|\leq M_{\ell'}.$
          \item\label{ass: convex loss} We assume that the loss function is convex with respect to $\hat y$.
      \end{enumerate}
 \end{repassumption}
 
\begin{repassumption}{ass: bounded Unit function}[Unit function]
    The unit function  $(w,G(\mbA)[j,:]\mbf)\mapsto \phi(w,G(\mbA)[j,:]\mbf)$ with graph filter  $G(\cdot)$  is uniformly bounded, i.e, there is a constant $M_{\phi} $ such that $\sup_{w\in\mathcal{W},\mbf\in\mathcal{F},\mbA\in\mathcal{A}}|\phi(w,G(\mbA)[j,:]\mbf)|\leq M_{\phi}$ for all $j\in[N]$ and $\mbf \in \mathcal{F}$.
\end{repassumption}

\begin{repassumption}{ass: readout function}[Readout function] The function $\psi:\mbR\mapsto\mbR$ is $L_{\psi}$-Lipschitz-continuous, i.e., there is a constant $L_\psi$ such that $|\psi(x_1)-\psi(x_2)|\leq L_{\psi}|x_1-x_2|$ for all $x_1,x_2\in \mbR$,  and zero-centered, i.e., $\psi(0)=0$.
\end{repassumption}

\begin{repassumption}{Ass: Feature node bounded}[Bounded node features]
For every graph, the node features are contained in an $\ell_2$-ball of radius $B_f$. In particular, $\norm{\mbF[i;]}_2\leq B_f$ for all $i\in[N]$.
\end{repassumption}

\begin{repassumption}{ass: bounded activation function}[Activation functions in neuron unit]  The activation function $\varphi:\bbR\mapsto\bbR$ is $L_\varphi$-Lipschitz, so that $|\varphi(x_1)-\varphi(x_2)|\leq L_\varphi |x_1-x_2|$ for all $x_1,x_2\in \mbR$,  and zero-centered, i.e., $\phi(0)=0$.  
\end{repassumption}

\begin{repassumption}{ass: bound unit MPGNN}[Non-linear functions in the MPGNN unit]
    The non-linear functions $\zeta:\bbR^{N\times k}\mapsto\bbR^{N\times k}$,  $\rho:\bbR^{k}\mapsto\bbR^{k}$ and $\kappa:\bbR\mapsto\bbR$ are zero-centered, i.e., $\zeta(\pmb{0}^{N\times k})=\pmb{0}^{N\times k}$, $\rho(\pmb{0}^{k})=\pmb{0}^{k}$ and $\kappa(0)=0$, and Lipschitz with parameters $L_\zeta$, $L_\rho$ and $L_\kappa$ under vector 2-norm\footnote{For function $\zeta(\cdot)$, we consider that it is Lipschitz under a vector 2-norm, }, respectively.
\end{repassumption}

\textbf{Total variation distance:} The total variation distance between two densities $p(x)$ and $q(x)$, is defined as $\mathbb{TV}(p,q):=\frac{1}{2}\int_{\mathbb{R}^d} |p(x)-q(x)|\mrd x$.

The following lemmas are needed for our proofs.
\begin{lemma}[Donsker’s representation of KL divergence]\label{lemma: donsker}
    Let us consider the variational representation of the KL divergence between two probability distributions $m_1$ and $m_2$ on a common space $\psi$ given by~\citet{polyanskiy2022information},
\begin{align}
    \KLr(m_1||m_2)=\sup_{f} \int_\psi f dm_1 -\log\int_\psi e^f dm_2,
\end{align}
where $f\in \mathfrak{F}=\{f:\psi\rightarrow\mbR \text{ s.t.  } \bbE_{m_2}[e^f]< \infty \}$.
\end{lemma}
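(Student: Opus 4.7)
The plan is to prove the two directions of the variational identity separately: first the upper bound $\KLr(m_1\|m_2)\geq \int f\,dm_1 - \log\int e^f\,dm_2$ for every $f\in\mathfrak{F}$, and then either exhibit an optimizer $f^\ast$ that makes this an equality, or construct an approximating sequence.

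For the upper bound, I would use the tilted-measure trick. Assuming $m_1\ll m_2$ (otherwise $\KLr(m_1\|m_2)=+\infty$ and the bound is trivial), introduce for a fixed $f\in\mathfrak{F}$ the probability measure $\tilde m$ defined by $\frac{d\tilde m}{dm_2}=\frac{e^f}{Z_f}$, where $Z_f:=\int e^f\,dm_2\in(0,\infty)$. Then the Radon--Nikodym chain rule gives
\begin{equation*}
\log\frac{dm_1}{dm_2} = \log\frac{dm_1}{d\tilde m} + f - \log Z_f,
\end{equation*}
and integrating against $m_1$ yields
\begin{equation*}
\KLr(m_1\|m_2) \;=\; \KLr(m_1\|\tilde m) + \int f\,dm_1 - \log Z_f \;\geq\; \int f\,dm_1 - \log\int e^f\,dm_2,
\end{equation*}
since $\KLr(m_1\|\tilde m)\geq 0$ by Gibbs' inequality. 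This step establishes that the supremum on the right-hand side of the claimed identity is at most $\KLr(m_1\|m_2)$.

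For the matching lower bound, the candidate optimizer is $f^\ast=\log\frac{dm_1}{dm_2}$. First I would verify admissibility: $\bbE_{m_2}[e^{f^\ast}] = \int \frac{dm_1}{dm_2}\,dm_2 = 1<\infty$, so $f^\ast\in\mathfrak{F}$ (provided $\int f^\ast\,dm_1$ is well-defined, i.e.\ $\KLr(m_1\|m_2)<\infty$). Substituting $f^\ast$ into the functional gives $\int f^\ast\,dm_1 - \log(1) = \KLr(m_1\|m_2)$, completing the equality in the finite-KL case. When $\KLr(m_1\|m_2)=+\infty$, I would show the supremum is likewise infinite either by truncating $f^\ast$ to $f_n^\ast := (f^\ast\wedge n)\vee(-n)$ and applying monotone/dominated convergence together with the identity $\int e^{f_n^\ast}\,dm_2 \to 1$, or, in the non-absolutely-continuous case, by selecting a set $A$ with $m_1(A)>0$ and $m_2(A)=0$ and taking $f_n=n\,\mathbf{1}_A$, so that $\int f_n\,dm_1 - \log\int e^{f_n}\,dm_2 = n\,m_1(A) - \log m_2(A^c) \to \infty$.

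The main obstacle is this final technical step of handling the case where $f^\ast$ is not directly admissible or where $m_1$ fails to be absolutely continuous with respect to $m_2$. The truncation argument needs care to ensure $\int f_n^\ast\,dm_1 \to \KLr(m_1\|m_2)$ (monotone convergence on the positive part, plus control on the negative part via the elementary inequality $x\log x\geq x-1$ applied to $dm_1/dm_2$), and to verify that $\log\int e^{f_n^\ast}\,dm_2\to 0$. Once these approximation details are settled, combining the two inequalities yields the stated representation.
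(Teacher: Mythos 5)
The paper does not actually prove this lemma: it is invoked as a known result, cited to Polyanskiy and Wu, so there is no internal proof to compare against. Your argument is the standard textbook proof of the Donsker--Varadhan representation and is essentially correct: the tilted-measure decomposition $\KLr(m_1\|m_2)=\KLr(m_1\|\tilde m)+\int f\,dm_1-\log Z_f$ gives the ``$\geq$'' direction via nonnegativity of $\KLr(m_1\|\tilde m)$, and the choice $f^\ast=\log\frac{dm_1}{dm_2}$ (or its truncations, or the indicator construction $f_n=n\mathbf{1}_A$ in the singular case) gives the matching ``$\leq$'' direction. Your handling of the degenerate cases is the right one; note in the singular case that $\log\int e^{f_n}\,dm_2=\log m_2(A^c)=0$, so the functional equals $n\,m_1(A)\to\infty$ exactly as you claim.

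One small technical point worth tightening: in the upper-bound step you integrate the identity $\log\frac{dm_1}{dm_2}=\log\frac{dm_1}{d\tilde m}+f-\log Z_f$ against $m_1$, which implicitly requires $\int f\,dm_1$ to be well-defined (not of the form $\infty-\infty$). The standard fix is to observe that when $\KLr(m_1\|m_2)<\infty$ and $\bbE_{m_2}[e^f]<\infty$, the positive part satisfies $\bbE_{m_1}[f^+]<\infty$ (e.g.\ by the inequality $uv\leq u\log u-u+e^v$ applied to $u=\frac{dm_1}{dm_2}$ and $v=f^+$), so $\int f\,dm_1$ exists in $[-\infty,\infty)$ and the inequality is meaningful; if instead $\KLr(m_1\|m_2)=\infty$ the upper bound is vacuous. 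With that remark added, the proof is complete and consistent with the cited source.
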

\begin{lemma}[Kantorovich-Rubenstein duality of total variation distance]\label{lem: tv} The Kantorovich-Rubenstein duality (variational representation) of the total variation distance is as follows~\cite{polyanskiy2022information}:
\begin{equation}\label{Eq: tv rep}
    \mathbb{TV}(m_1,m_2)=\frac{1}{2L}\sup_{g \in \mathcal{G}_L}\left\{\mathbb{E}_{Z\sim m_1}[g(Z)]-\mathbb{E}_{Z\sim m_2}[g(Z)]\right\},
\end{equation}
where $\mathcal{G}_L=\{g: \mathcal{Z}\rightarrow \mathbb{R}, ||g||_\infty \leq L \}$. 
\end{lemma}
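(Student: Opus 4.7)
The plan is to prove both directions of the equality starting from the definition $\mathbb{TV}(m_1,m_2)=\tfrac{1}{2}\int |p_1(x)-p_2(x)|\mrd x$. First I would reduce to the case $L=1$ by the rescaling $g=Lh$: if $\|g\|_\infty\leq L$ then $\|h\|_\infty\leq 1$, the difference $\bbE_{Z\sim m_1}[g(Z)]-\bbE_{Z\sim m_2}[g(Z)]$ scales by $L$, and the prefactor $\tfrac{1}{2L}$ cancels this scaling. It then suffices to establish
\[
2\,\mathbb{TV}(m_1,m_2) \;=\; \sup_{\|g\|_\infty\leq 1}\bigl(\bbE_{Z\sim m_1}[g(Z)]-\bbE_{Z\sim m_2}[g(Z)]\bigr).
\]

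For the direction ``$\sup \leq 2\,\mathbb{TV}$'', for any measurable $g$ with $\|g\|_\infty\leq 1$ I would rewrite the expectation difference as an integral against the signed density $p_1-p_2$ and apply the triangle inequality with the pointwise bound $|g|\leq 1$:
\[
\bbE_{m_1}[g]-\bbE_{m_2}[g] \;=\; \int g(x)\bigl(p_1(x)-p_2(x)\bigr)\mrd x \;\leq\; \int \bigl|p_1(x)-p_2(x)\bigr|\mrd x \;=\; 2\,\mathbb{TV}(m_1,m_2).
\]

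For the reverse direction, I would exhibit an explicit optimizer. Taking $g^*(x)=\operatorname{sign}(p_1(x)-p_2(x))$, with the convention $g^*(x)=0$ where $p_1(x)=p_2(x)$, yields a Borel-measurable function with $\|g^*\|_\infty\leq 1$, and a direct computation gives
\[
\bbE_{m_1}[g^*]-\bbE_{m_2}[g^*] \;=\; \int g^*(x)\bigl(p_1(x)-p_2(x)\bigr)\mrd x \;=\; \int \bigl|p_1(x)-p_2(x)\bigr|\mrd x \;=\; 2\,\mathbb{TV}(m_1,m_2),
\]
so the supremum is attained, completing the equality. The main thing to watch is the choice of test-function class: because $\mathcal{G}_L$ is defined only by the essential-supremum bound, the sign-of-density witness lies in $\mathcal{G}_L$ and the supremum is achieved exactly. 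If one instead insisted on continuous or $1$-Lipschitz test functions (as in the Kantorovich-Rubinstein duality for the $1$-Wasserstein distance), one would have to approximate $g^*$ in $L^1(|p_1-p_2|\mrd x)$ by smoother functions via a standard density argument, and the supremum would typically become only an infimum limit rather than a maximum. A secondary technical point, measurability of $g^*$, is immediate since the sign function is Borel and $p_1-p_2$ is measurable.
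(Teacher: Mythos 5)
Your proof is correct. The paper does not actually prove this lemma --- it simply states it with a citation to Polyanskiy and Wu, so there is no in-paper argument to compare against. Your argument is the standard one: reduce to $L=1$ by the scaling $g=Lh$, get the upper bound $\int g\,(p_1-p_2)\,\mrd x\leq\int|p_1-p_2|\,\mrd x$ from the pointwise bound $|g|\leq 1$, and attain it with the sign-of-density witness $g^*=\operatorname{sign}(p_1-p_2)$, which lies in $\mathcal{G}_1$ because the class is defined only by a sup-norm constraint. This matches the setting the paper works in (both measures are assumed to have densities on $\mathbb{R}^d$), and your closing remark is apt: the name ``Kantorovich--Rubinstein duality'' more properly attaches to the $1$-Wasserstein distance with Lipschitz test functions, whereas the identity here is really the elementary $L^1$--$L^\infty$ duality for the signed measure $m_1-m_2$; your observation that the supremum is exactly attained here, but would only be approached under a continuity or Lipschitz restriction, is the right thing to flag.
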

\begin{lemma}[Bound on infinite norm of the symmetric normalized graph filter]\label{lem: bound on inf norm of L}
Consider a graph sample $G$ with adjacency matrix $A$. For the symmetric normalized graph filter, we have $\norm{\tilde{D}^{-1/2}(A+I)\tilde{D}^{-1/2}}_{\infty}\leq \sqrt{\frac{d_{\max}+1}{d_{\min}+1}}$ where $d_{\max}$ is the maximum degree of graph $G$ with adjacency matrix $A$. 
\end{lemma}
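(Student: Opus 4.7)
The plan is to compute the row sums of $\tilde{L} := \tilde{D}^{-1/2}(A+I)\tilde{D}^{-1/2}$ directly and bound them uniformly. First I would unfold the definition of the matrix entry: for any nodes $i,j$,
\[
\tilde{L}[j,i] \;=\; \frac{(A+I)[j,i]}{\sqrt{(d_j+1)(d_i+1)}},
\]
where $d_j$ denotes the degree of node $j$ in the graph with adjacency matrix $A$. Since $(A+I)[j,i]\in\{0,1\}$ and equals $1$ precisely when $i=j$ or $i$ is a neighbour of $j$, every entry of $\tilde{L}$ is nonnegative, so the absolute value in the definition of $\|\cdot\|_{\infty}$ drops out.

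Next I would count the nonzero contributions in row $j$: the set $\{i : (A+I)[j,i]=1\}$ has cardinality $d_j+1$, so
\[
\sum_{i=1}^{N} |\tilde{L}[j,i]| \;=\; \frac{1}{\sqrt{d_j+1}}\sum_{i \in \mathcal{N}(j)\cup\{j\}} \frac{1}{\sqrt{d_i+1}},
\]
where $\mathcal{N}(j)$ denotes the neighbourhood of $j$. Bounding each term in the inner sum by $1/\sqrt{d_{\min}+1}$ gives
\[
\sum_{i=1}^{N} |\tilde{L}[j,i]| \;\leq\; \frac{d_j+1}{\sqrt{(d_j+1)(d_{\min}+1)}} \;=\; \sqrt{\frac{d_j+1}{d_{\min}+1}}.
\]

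Finally I would take the maximum over $j\in[N]$, using the fact that $d_j\leq d_{\max}$ for every $j$, to conclude
\[
\|\tilde{L}\|_{\infty} \;=\; \max_{1\leq j\leq N}\sum_{i=1}^{N}|\tilde{L}[j,i]| \;\leq\; \sqrt{\frac{d_{\max}+1}{d_{\min}+1}},
\]
which is exactly the desired inequality. There is no serious obstacle here; the only thing to be careful about is that the degree matrix $\tilde{D}$ corresponds to $\tilde{A}=A+I$ (so its diagonal entries are $d_i+1$ rather than $d_i$), and that the self-loop contribution at $i=j$ must be counted in the cardinality $d_j+1$ of the support of row $j$ of $A+I$.
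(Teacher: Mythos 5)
Your proof is correct and follows essentially the same route as the paper's: both expand the entries as $\tilde{A}_{ji}/\sqrt{(d_j+1)(d_i+1)}$, bound the factor $1/\sqrt{d_i+1}$ by $1/\sqrt{d_{\min}+1}$, and use that row $j$ of $A+I$ has exactly $d_j+1$ nonzero entries so the remaining sum equals $\sqrt{d_j+1}\leq\sqrt{d_{\max}+1}$. Your added remark about $\tilde{D}$ being the degree matrix of $\tilde{A}=A+I$ is a correct and worthwhile clarification but does not change the argument.
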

\begin{proof}
Recall that $\tilde{A}=A+I$. We have,
\begin{equation}
\begin{split}
\norm{\tilde{L}}_{\infty}&=
\norm{\tilde{D}^{-1/2}\tilde{A}\tilde{D}^{-1/2}}_{\infty}\\
&=\max_{i\in[N]}\sum_{j=1}^N \frac{\tilde{A}_{ij}}{\sqrt{d_{i}+1}\sqrt{d_{j}+1}}\\
&\leq\frac{1}{\sqrt{d_{\min}+1}} \max_{i\in[N]}\sum_{j=1}^N \frac{\tilde{A}_{ij}}{\sqrt{d_{i}+1}}\\
&\leq  \sqrt{\frac{d_{\max}+1}{d_{\min}+1}}.
\end{split}
\end{equation}
\end{proof}
\begin{lemma}[\citealp{meyer2023matrix}]\label{lemma: bound on F-norm}
For a matrix $\mathbf{Y}\in \mathbb{R}^{k\times q}$, we have $\norm{\mathbf{Y}}_{F}\leq \sqrt{R}\norm{\mathbf{Y}}_2$, where $r$ is the rank of  $\mathbf{Y}$ and $\norm{\mathbf{Y}}_2$ is the 2-norm of $\mathbf{Y}$ which is equal to the maximum singular value of $\mathbf{Y}$.
\end{lemma}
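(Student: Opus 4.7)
The plan is to deduce this bound from the singular value decomposition (SVD) of $\mathbf{Y}$, which simultaneously diagonalizes the quantities that define both the Frobenius norm and the spectral (2-)norm. The key observation is that both norms are invariant under orthogonal transformations and can be read off from the singular values, so the inequality reduces to a trivial comparison in $\mathbb{R}^R$.

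Concretely, I would proceed as follows. First, write $\mathbf{Y} = U \Sigma V^\top$ where $U \in \mathbb{R}^{k \times k}$ and $V \in \mathbb{R}^{q \times q}$ are orthogonal and $\Sigma \in \mathbb{R}^{k \times q}$ is diagonal with non-negative entries $\sigma_1 \geq \sigma_2 \geq \cdots \geq \sigma_{\min(k,q)} \geq 0$. Since the rank of $\mathbf{Y}$ is $R$, exactly $R$ of these singular values are strictly positive, and the remaining are zero. Next, recall the well-known identities $\norm{\mathbf{Y}}_F^2 = \sum_{i=1}^{\min(k,q)} \sigma_i^2 = \sum_{i=1}^{R} \sigma_i^2$ (which follows from $\norm{U M V^\top}_F = \norm{M}_F$ for orthogonal $U,V$) and $\norm{\mathbf{Y}}_2 = \sigma_1$ (the largest singular value).

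The conclusion is then immediate: since $\sigma_i \leq \sigma_1 = \norm{\mathbf{Y}}_2$ for each $i \in \{1, \dots, R\}$, we get
\begin{equation*}
\norm{\mathbf{Y}}_F^2 \;=\; \sum_{i=1}^R \sigma_i^2 \;\leq\; R \, \sigma_1^2 \;=\; R \, \norm{\mathbf{Y}}_2^2,
\end{equation*}
and taking square roots yields $\norm{\mathbf{Y}}_F \leq \sqrt{R}\, \norm{\mathbf{Y}}_2$, as claimed.

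There is essentially no substantive obstacle here; this is a textbook fact from linear algebra. The only subtlety worth flagging is the (harmless) notational slip in the statement between the uppercase $R$ appearing in the bound and the lowercase $r$ used to name the rank — I would simply fix $R$ as the rank throughout the proof. One could equivalently prove the inequality without invoking SVD, by instead using that $\norm{\mathbf{Y}}_2^2$ equals the largest eigenvalue of $\mathbf{Y}^\top \mathbf{Y}$, while $\norm{\mathbf{Y}}_F^2 = \mathrm{tr}(\mathbf{Y}^\top \mathbf{Y})$ equals the sum of its $R$ non-zero eigenvalues; the comparison is then identical.
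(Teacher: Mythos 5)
Your proof is correct: the SVD argument (or equivalently the trace-of-$\mathbf{Y}^\top\mathbf{Y}$ argument you mention) is the standard textbook derivation of $\norm{\mathbf{Y}}_F \leq \sqrt{R}\,\norm{\mathbf{Y}}_2$, and your handling of the $R$ versus $r$ notational slip is appropriate. Note that the paper itself gives no proof of this lemma — it is simply cited to Meyer's matrix analysis text — so there is no in-paper argument to compare against; your write-up supplies exactly the justification the citation points to.
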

\begin{lemma}[\citealp{verma2019stability}]\label{lemma: bound on second norm}
    For the symmetric normalized graph filter, i.e., $\tilde{L}=\tilde{D}^{-1/2}\tilde{\mathbf{A}}\tilde{D}^{-1/2}$, we have $\norm{\tilde{L}}_2=1$. For the random walk graph filter, i.e., $D^{-1}\mathbf{A}+I$, we have $\norm{D^{-1}\mathbf{A}+I}_2=2$, where $\norm{A}_2$ is the 2-norm of matrix A.
\end{lemma}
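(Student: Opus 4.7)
The plan is to handle the two claims separately, since the first matrix is symmetric while the second is only similar to a symmetric matrix.

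For the first claim, $\norm{\tilde L}_2 = 1$ with $\tilde L = \tilde D^{-1/2}\tilde{\mathbf{A}}\tilde D^{-1/2}$: because $\tilde{\mathbf{A}} = \mathbf{A} + I$ is symmetric and $\tilde D^{-1/2}$ is diagonal (hence symmetric), $\tilde L$ is symmetric, so its spectral norm equals its spectral radius $\max_i |\lambda_i(\tilde L)|$. First I would exhibit the eigenvalue $1$ by the direct computation $\tilde L\,\tilde D^{1/2}\mathbf{1} = \tilde D^{-1/2}\tilde{\mathbf{A}}\mathbf{1} = \tilde D^{-1/2}\tilde D\mathbf{1} = \tilde D^{1/2}\mathbf{1}$. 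Then I would show the two-sided bound $\lambda_i(\tilde L)\in[-1,1]$ by observing
\[ I - \tilde L = \tilde D^{-1/2}\bigl(\tilde D - \tilde{\mathbf{A}}\bigr)\tilde D^{-1/2}, \qquad I + \tilde L = \tilde D^{-1/2}\bigl(\tilde D + \tilde{\mathbf{A}}\bigr)\tilde D^{-1/2}, \]
the first being congruent to the (PSD) graph Laplacian of $\tilde{\mathbf{A}}$ and the second to the (PSD) signless Laplacian; both are therefore positive semidefinite, which forces all eigenvalues of $\tilde L$ into $[-1,1]$. Combined with the eigenvalue $1$ exhibited above, this gives $\norm{\tilde L}_2 = 1$.

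For the second claim, $\norm{D^{-1}\mathbf{A} + I}_2 = 2$: the matrix $D^{-1}\mathbf{A}$ is not symmetric, but it is similar to a symmetric matrix, since
\[ D^{-1}\mathbf{A} + I = D^{-1/2}\bigl(D^{-1/2}\mathbf{A}D^{-1/2} + I\bigr)D^{1/2}, \]
so it has the same spectrum as the symmetric matrix $S := D^{-1/2}\mathbf{A}D^{-1/2} + I$. Applying the argument of Part~1 with $\tilde{\mathbf{A}}$ replaced by $\mathbf{A}$ (and $\tilde D$ by $D$) shows $\lambda_i\bigl(D^{-1/2}\mathbf{A}D^{-1/2}\bigr)\in[-1,1]$, with $1$ attained at $D^{1/2}\mathbf{1}$ (using $\mathbf{A}\mathbf{1} = D\mathbf{1}$, which holds for simple graphs with no isolated vertices). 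Therefore $\lambda_i(S) \in [0,2]$ and the spectral radius is exactly $2$, which is what the stated result records as $\norm{D^{-1}\mathbf{A}+I}_2 = 2$.

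The main obstacle is interpretational rather than computational: for a non-symmetric operator the symbol $\norm{\cdot}_2$ usually denotes the largest singular value, which strictly exceeds the spectral radius in general examples (one can check a short path graph). Since the lemma is quoted verbatim from \citet{verma2019stability}, the natural reading is that $\norm{\cdot}_2$ refers to the eigenvalue magnitude of the similarity-equivalent symmetric operator $S$, and the similarity argument above delivers the claim in that sense. I would flag this convention explicitly at the start of Part~2 and otherwise treat the rest as the short spectral computations above; no delicate estimates are needed beyond the PSD property of the graph Laplacian and the signless Laplacian.
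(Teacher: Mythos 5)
The paper does not actually prove this lemma: it is imported verbatim as a citation to \citet{verma2019stability}, so there is no internal argument to compare against. Your proposal therefore supplies a self-contained derivation where the paper has none. Your Part~1 is correct and complete: symmetry of $\tilde L$ reduces $\norm{\tilde L}_2$ to the spectral radius, the vector $\tilde D^{1/2}\mathbf{1}$ exhibits the eigenvalue $1$, and the congruences $I-\tilde L=\tilde D^{-1/2}(D-\mathbf{A})\tilde D^{-1/2}$ and $I+\tilde L=\tilde D^{-1/2}\bigl((D+\mathbf{A})+2I\bigr)\tilde D^{-1/2}$ pin the spectrum inside $[-1,1]$ via positive semidefiniteness of the Laplacian and the (shifted) signless Laplacian.

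Your caveat on Part~2 is not merely interpretational pedantry; it identifies a real inconsistency. Elsewhere the paper fixes the convention that $\norm{\mathbf{Y}}_2$ is the largest singular value (see Lemma~\ref{lemma: bound on F-norm}), and under that convention the claim $\norm{D^{-1}\mathbf{A}+I}_2=2$ is false in general: for the path on three vertices one computes $MM^{T}\mathbf{1}=\tfrac{9}{2}\mathbf{1}$ for $M=D^{-1}\mathbf{A}+I$, so $\norm{M}_2=3/\sqrt{2}>2$. What is true, and what your similarity argument proves, is that the spectral radius of $D^{-1}\mathbf{A}+I$ equals $2$ (assuming no isolated vertices so that $D^{-1}$ exists), since $D^{-1}\mathbf{A}$ is similar to the symmetric matrix $D^{-1/2}\mathbf{A}D^{-1/2}$ whose spectrum lies in $[-1,1]$ with $1$ attained. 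The discrepancy matters downstream: Remark~\ref{remark: graph filter and frobenius norm} and the appendix details combine Lemma~\ref{lemma: bound on second norm} with Lemma~\ref{lemma: bound on F-norm}, and the latter genuinely requires the singular-value norm, so for the random-walk filter the constant $2$ there should be replaced by a bound on the largest singular value (or the argument restricted to the symmetric filter $\tilde L$, for which norm and spectral radius coincide). Your proof of the symmetric case and your flagged correction of the non-symmetric case are both sound.
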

\begin{lemma}[Hoeffding lemma~\cite{wainwright2019high}]\label{lemma: hoeffding}
For bounded random variable, $a\leq X\leq b$, and all $\lambda\in\mathbb{R}$, we have,
\begin{align}
    \mathbb{E}[\exp(\lambda (X-\mathbb{E}[X]))]\leq \exp(\lambda (b-a)^2/8).
\end{align}
\end{lemma}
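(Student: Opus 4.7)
The plan is to derive Hoeffding's lemma by a standard three-step argument: reduce to a centered random variable, express the second derivative of the log-moment-generating function as a variance under an exponentially tilted measure, and bound that variance via Popoviciu's inequality. (The stated exponent $\lambda(b-a)^2/8$ appears to be missing a square; the correct bound is $\lambda^2(b-a)^2/8$, and this is what the proof would deliver.)

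First, I would reduce to the centered case by setting $Y := X - \mathbb{E}[X]$. Then $Y$ is a mean-zero random variable taking values in an interval of length $b-a$, and the claim becomes $\mathbb{E}[\exp(\lambda Y)] \leq \exp(\lambda^2(b-a)^2/8)$. Define the cumulant generating function $\psi(\lambda) := \log \mathbb{E}[\exp(\lambda Y)]$; since $Y$ is bounded, $\psi$ is finite and smooth on all of $\mathbb{R}$, and differentiation under the integral sign is freely justified. A direct computation gives $\psi(0)=0$, $\psi'(0) = \mathbb{E}[Y] = 0$, and
\[ \psi''(\lambda) = \mathbb{V}_{Q_\lambda}[Y], \]
where $Q_\lambda$ denotes the exponential tilt of the law of $Y$ with density $\exp(\lambda \cdot)/\mathbb{E}[\exp(\lambda Y)]$.

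The heart of the argument is the uniform bound $\psi''(\lambda) \leq (b-a)^2/4$. Under $Q_\lambda$, the random variable $Y$ is still supported in an interval of length $b-a$, since the exponential tilt alters the measure but not the support. Popoviciu's inequality then applies: for any random variable $Y$ supported in $[a',b']$ and any constant $c$, one has $\mathbb{V}[Y] \leq \mathbb{E}[(Y-c)^2]$, and choosing $c = (a'+b')/2$ makes $|Y-c|\leq (b-a)/2$ almost surely, yielding $\mathbb{V}[Y] \leq (b-a)^2/4$. Applying this under $Q_\lambda$ delivers the desired uniform second-derivative bound.

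Finally, Taylor's theorem with integral remainder gives
\[ \psi(\lambda) = \psi(0) + \lambda\psi'(0) + \int_0^\lambda (\lambda - s)\psi''(s)\,\mathrm{d}s \leq \frac{\lambda^2(b-a)^2}{8}, \]
and exponentiating concludes the proof. There is no genuine obstacle here — this is a textbook subgaussianity bound for bounded random variables — so the only care needed is to justify differentiation of the moment generating function (immediate by boundedness and dominated convergence) and to apply Popoviciu's inequality cleanly to the tilted law rather than the original one.
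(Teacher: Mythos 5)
Your proof is correct and is precisely the standard argument (centering, exponential tilting, Popoviciu's variance bound, Taylor with integral remainder) behind the result the paper simply cites from \cite{wainwright2019high} without proof, so there is nothing to compare against. You are also right that the exponent in the statement as printed is a typo and should read $\lambda^2(b-a)^2/8$; this is the form actually used later in the paper, e.g.\ in the bound \eqref{Eq: subgaus bound} where the right-hand side is $\exp(\lambda_1^2\sigma_1^2/2)$.
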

\begin{lemma}[Pinsker's inequality]\label{lem: pinsker}
    The following upper bound holds on the total variation distance between two measures $m_1$ and $m_2$~\cite{polyanskiy2022information}:
    \begin{equation}
        \mathbb{TV}(m_1,m_2)\leq \sqrt{\frac{\KLr(m_1\|m_2)}{2}}. 
    \end{equation}
\end{lemma}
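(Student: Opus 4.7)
The plan is to prove Pinsker's inequality by the classical two-step reduction: first establish the bound for Bernoulli laws by one-variable calculus, then apply the data processing inequality for KL divergence to lift this to arbitrary measures.

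First I would handle the Bernoulli case. Fix $a\in(0,1)$ and define, for $b\in(0,1)$,
\[
\phi(b) := a\log\frac{a}{b} + (1-a)\log\frac{1-a}{1-b} - 2(a-b)^2.
\]
Clearly $\phi(a)=0$. Differentiating yields
\[
\phi'(b) = -\frac{a}{b} + \frac{1-a}{1-b} + 4(a-b) = (a-b)\left(4-\frac{1}{b(1-b)}\right).
\]
Since $b(1-b)\leq 1/4$ on $(0,1)$, the parenthesised factor is non-positive, so $\phi'(b)$ has the same sign as $(b-a)$. Hence $\phi$ attains its minimum at $b=a$ where it vanishes, giving $\phi\geq 0$, i.e., the scalar inequality $\KLr(\mathrm{Ber}(a)\|\mathrm{Ber}(b))\geq 2(a-b)^2$.

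Next I would reduce the general case to Bernoulli. Let $A:=\{x:m_1(x)\geq m_2(x)\}$ be the Hahn-type set; by the definition of total variation,
\[
\mathbb{TV}(m_1,m_2) = m_1(A) - m_2(A).
\]
The indicator $\mathbf{1}_A$ is a (deterministic) two-valued statistic whose push-forward laws are $\mathrm{Ber}(m_1(A))$ and $\mathrm{Ber}(m_2(A))$. By the data processing inequality for KL divergence (an immediate consequence of the log-sum inequality and readily available in the cited reference~\cite{polyanskiy2022information}),
\[
\KLr(m_1\|m_2)\geq \KLr\bigl(\mathrm{Ber}(m_1(A))\,\|\,\mathrm{Ber}(m_2(A))\bigr).
\]
Combining with the Bernoulli bound from the first step gives
\[
\KLr(m_1\|m_2)\geq 2\bigl(m_1(A)-m_2(A)\bigr)^2 = 2\,\mathbb{TV}(m_1,m_2)^2,
\]
and rearranging proves the claim.

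The main obstacle is really just invoking the data processing inequality cleanly; once that is granted, the remainder is a one-variable convexity calculation. A minor technical point to handle is the boundary behaviour where $m_1$ or $m_2$ puts mass on $\{m_1 m_2 = 0\}$: these are dispatched by the $0\log 0 := 0$ and $0/0:=0$ conventions already adopted in the paper's definition of $\KLr$, and if $m_1$ is not absolutely continuous with respect to $m_2$ on $A$ or $A^c$ the KL side is $+\infty$ and the inequality is trivial.
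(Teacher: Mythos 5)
Your proof is correct, and it is the classical argument: establish $\KLr(\mathrm{Ber}(a)\|\mathrm{Ber}(b))\geq 2(a-b)^2$ by one-variable calculus, then lift to general measures via the data processing inequality applied to the indicator of the set $A=\{m_1\geq m_2\}$, using $\mathbb{TV}(m_1,m_2)=m_1(A)-m_2(A)$. The calculus step checks out: $\phi'(b)=(a-b)\bigl(4-\tfrac{1}{b(1-b)}\bigr)$ and $b(1-b)\leq 1/4$ force the minimum at $b=a$. Note that the paper itself gives no proof of this lemma — it is stated as a standard fact with a citation to the Polyanskiy–Wu reference — so there is no in-paper argument to compare against; your write-up supplies exactly the standard proof that the citation points to, including the correct handling of the degenerate cases via the $0/0$ convention and the observation that the inequality is trivial when the KL divergence is infinite.
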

In the following, we apply some lemmata from \citet{aminian2023mean} where we use $\ell\big(\Psi(\mrm(\mun),\ols{\mathbf{x}}_1),\ols{\mathbf{y}}_1\big)$ instead of $\ell\big(\mrm(\mun),\ols{Z}_1\big)$ in \citet{aminian2023mean}.

\begin{lemma}{\citep[Proposition~3.3]{aminian2023mean}}\label{lem: convex lower} Given Assumption~\ref{Ass: on loss NN}, the following lower bound holds on the generalization error of a Generic GNN,
    \begin{equation*}
        \genb(\mrm^{\alpha}(\mun),\mu)\geq \bbE_{\mbZn,\ols{Z}_1}\Big[\int_{\mathcal{W}} \partial_{\hat y}\ell\big(\Psi(\mrm(\mur),\ols{\mathbf{x}}_1),\ols{\mathbf{y}}_1\big)\frac{\delta \Psi}{\delta \mrm}(\mrm(\mur),\ols{\mathbf{x}}_1,w)(\mrm(\mun)-\mrm(\mur))(\mathrm{d}w)\Big] .
    \end{equation*}
\end{lemma}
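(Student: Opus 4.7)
The plan is to combine a symmetrisation/ghost-sample trick with the convexity of the composite functional $m\mapsto \ell(\Psi(m,\cdot),\cdot)$. The ghost sample $\ols Z_1$ already introduced in the paper, together with the replace-one measure $\mur=\mun+\tfrac1n(\delta_{\ols Z_1}-\delta_{Z_1})$, is exactly the tool that converts the empirical risk into a ghost-risk and pairs it with the true risk.

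First I would rewrite the generalisation error in a symmetric form. Since $\ols Z_1$ is iid from $\mu$ and independent of $\mbZn$,
\begin{equation*}
\bbE_{\mbZn}\!\left[\mathrm R(\mrm(\mun),\mu)\right]=\bbE_{\mbZn,\ols Z_1}\!\left[\ell\bigl(\Psi(\mrm(\mun),\ols{\mathbf x}_1),\ols y_1\bigr)\right].
\end{equation*}
By exchangeability of the training sample and the ghost sample, swapping $Z_1\leftrightarrow \ols Z_1$ in the joint law turns $\mun$ into $\mur$ and turns $Z_1$ into $\ols Z_1$, so
\begin{equation*}
\bbE_{\mbZn}\!\left[\mathrm R(\mrm(\mun),\mun)\right]=\bbE_{\mbZn}\!\left[\ell\bigl(\Psi(\mrm(\mun),\mathbf x_1),y_1\bigr)\right]=\bbE_{\mbZn,\ols Z_1}\!\left[\ell\bigl(\Psi(\mrm(\mur),\ols{\mathbf x}_1),\ols y_1\bigr)\right].
\end{equation*}
Subtracting yields
\begin{equation*}
\genb(\mrm(\mun),\mu)=\bbE_{\mbZn,\ols Z_1}\!\left[\ell\bigl(\Psi(\mrm(\mun),\ols{\mathbf x}_1),\ols y_1\bigr)-\ell\bigl(\Psi(\mrm(\mur),\ols{\mathbf x}_1),\ols y_1\bigr)\right].
\end{equation*}

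Next I would apply convexity of $F(m):=\ell(\Psi(m,\ols{\mathbf x}_1),\ols y_1)$ as a functional on $\mathcal P(\mathcal W)$. For the sum- and mean-readout functions $\psi$ the map $m\mapsto \Psi(m,\mathbf x)$ is affine in $m$, because $\bbE_{W\sim m}[\phi(W,\cdot)]$ is linear in $m$; composing with the $\hat y$-convex loss of Assumption~\ref{Ass: on loss NN} gives convexity of $F$. For a convex functional admitting a linear derivative in the sense of Definition~\ref{def:flatDerivative}, the standard subgradient inequality reads $F(m_1)-F(m_2)\ge \int_{\mathcal W}\frac{\delta F}{\delta m}(m_2,w)(m_1-m_2)(\mathrm d w)$; applying the chain rule for linear derivatives,
\begin{equation*}
\frac{\delta F}{\delta m}(m,w)=\partial_{\hat y}\ell\bigl(\Psi(m,\ols{\mathbf x}_1),\ols y_1\bigr)\,\frac{\delta \Psi}{\delta \mrm}(m,\ols{\mathbf x}_1,w),
\end{equation*}
and plugging $m_1=\mrm(\mun)$, $m_2=\mrm(\mur)$ into the integrand gives exactly the lower bound inside the expectation.

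Finally I would take $\bbE_{\mbZn,\ols Z_1}$ on both sides and combine with the symmetrised representation of $\genb$ above to obtain the claimed inequality. The only delicate step is justifying the functional convexity inequality: with the non-linear outer $\psi$ of a generic readout one would first verify affine dependence of $\Psi$ on $m$ (immediate for mean- and sum-readout), then invoke convexity of $\ell$ in $\hat y$ combined with Definition~\ref{def:flatDerivative} to obtain the subgradient form. The exchangeability step and the chain rule for $\frac{\delta}{\delta m}$ are routine once the measure-theoretic set-up from Section~\ref{Sec:Problem-formulation} is in place; the symmetric identity for the generalisation error using $\mur$ is the only non-obvious manipulation, and it is what makes the derivative evaluated at $\mrm(\mur)$ (rather than at $\mrm(\mun)$) appear on the right-hand side.
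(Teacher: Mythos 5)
Your proof is correct. Note that the paper itself does not prove this lemma: it is imported verbatim from \citet[Proposition~3.3]{aminian2023mean} (with $\ell(\Psi(\mrm(\mun),\ols{\mathbf{x}}_1),\ols{\mathbf{y}}_1)$ substituted for their notation), so there is no in-paper proof to compare against. Your reconstruction nonetheless matches the machinery the paper does deploy around this statement: your symmetrized identity $\genb(\mrm(\mun),\mu)=\bbE_{\mbZn,\ols Z_1}\big[\ell(\Psi(\mrm(\mun),\ols{\mathbf x}_1),\ols y_1)-\ell(\Psi(\mrm(\mur),\ols{\mathbf x}_1),\ols y_1)\big]$ is exactly the $\lambda$-integrated representation of Lemma~\ref{lem: gen rep} before the convexity step, and the subgradient inequality you invoke (convexity of $m\mapsto\ell(\Psi(m,x),y)$ from linearity of $\Psi$ in $m$ plus convexity of $\ell$ in $\hat y$, combined with the linear-derivative chain rule) is precisely the argument the authors use in displays \eqref{eq: equal 22}--\eqref{eq: equal 2} of the proof of Proposition~\ref{Prop: lower bound}. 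The two points you rightly flag as the only delicate ones --- that the swap $Z_1\leftrightarrow\ols Z_1$ sends $\mun$ to $\mur$ because $\mrm$ depends on the data only through the empirical measure, and that affinity of $m\mapsto\Psi(m,\mbx)$ requires $\psi$ to be linear (true for the mean- and sum-readouts considered, but not for a general Lipschitz $\psi$) --- are genuine hypotheses of the cited result and are handled consistently with the paper's own usage.
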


\begin{remark}[Another representation of Lemma~\ref{lem: convex lower}]\label{remark: lemma conv}
    Due to the fact that data samples and $\ols{Z}_1$ are i.i.d., Lemma~\ref{lem: convex lower} can be represented as follows,
     \begin{equation*}
        \genb(\mrm^{\alpha}(\mun),\mu)\geq \bbE_{\mbZn,\ols{Z}_1}\Big[\int_{\mathcal{W}} \partial_{\hat y}\ell\big(\Psi(\mrm(\mun),\mathbf{x}_1),\mathbf{y}_1\big)\frac{\delta \Psi}{\delta \mrm}(\mrm(\mun),\mathbf{x}_1,w)(\mrm(\mur)-\mrm(\mun))(\mathrm{d}w)\Big] .
    \end{equation*}
\end{remark}

\begin{lemma}{\citep[Theorem~3.2]{aminian2023mean}}\label{lem: gen rep}
    Given Assumption~\ref{Ass: on loss NN}.\ref{ass: bounded grad}, the following representation of the generalization error holds:
   \begin{equation*}
        \genb(\mrm^{\alpha}(\mun),\mu)=\bbE_{\mbZn,\ols{Z}_1}\Big[\int_{0}^1\int_{\mathcal{W}} \partial_{\hat y}\ell\big(\Psi(\mrm_{\lambda}(\mun),\ols{\mathbf{x}}_1),\ols{\mathbf{y}}_1\big)\frac{\delta \Psi}{\delta \mrm}(\mrm_{\lambda}(\mun),\ols{\mathbf{x}}_1,w)(\mrm(\mun)-\mrm(\mur))(\mathrm{d}w) \mrd \lambda\Big]
    \end{equation*}
    where $\mrm_{\lambda}(\mun)=\mrm(\mur)+\lambda(\mrm(\mun)-\mrm(\mur))$ for $\lambda\in[0,1]$.
\end{lemma}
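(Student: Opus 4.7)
The plan is to start from the standard ghost-sample rewrite of the generalization error and then apply the fundamental theorem of calculus along a straight-line interpolation in the space of parameter measures, using the flat derivative from Definition~\ref{def:flatDerivative}.

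First, I would use the i.i.d.\ assumption together with the replace-one construction $\mur=\mun+\frac{1}{n}(\delta_{\ols{Z}_1}-\delta_{Z_1})$. Since $(\ols Z_1,Z_1,\dots,Z_n)$ is exchangeable, the symmetry $\ols Z_1\leftrightarrow Z_1$ gives $\bbE_{\mbZn,\ols Z_1}[\ell(\Psi(\mrm(\mun),\ols{\mathbf{X}}_1),\ols Y_1)] = \bbE_{\mbZn}[\mathrm{R}(\mrm(\mun),\mu)]$, and likewise applying the same relabelling to $\mur$ (which under the swap becomes $\mun$ with respect to the new names) and using symmetry over $i\in[n]$ inside the empirical risk yields $\bbE_{\mbZn,\ols Z_1}[\ell(\Psi(\mrm(\mur),\ols{\mathbf{X}}_1),\ols Y_1)] = \bbE_{\mbZn}[\mathrm{R}(\mrm(\mun),\mun)]$. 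Subtracting,
\begin{equation*}
\genb(\mrm(\mun),\mu) = \bbE_{\mbZn,\ols Z_1}\!\left[\ell\big(\Psi(\mrm(\mun),\ols{\mathbf{X}}_1),\ols Y_1\big) - \ell\big(\Psi(\mrm(\mur),\ols{\mathbf{X}}_1),\ols Y_1\big)\right].
\end{equation*}

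Next, I would interpolate the two parameter measures by $\mrm_\lambda(\mun) := \mrm(\mur)+\lambda(\mrm(\mun)-\mrm(\mur))$ for $\lambda\in[0,1]$, which is itself a probability measure on $\mathcal{W}$, and apply the fundamental theorem of calculus together with the chain rule,
\begin{equation*}
\ell\big(\Psi(\mrm(\mun),\ols{\mathbf{X}}_1),\ols Y_1\big)-\ell\big(\Psi(\mrm(\mur),\ols{\mathbf{X}}_1),\ols Y_1\big) = \int_0^1 \partial_{\hat y}\ell\big(\Psi(\mrm_\lambda(\mun),\ols{\mathbf{X}}_1),\ols Y_1\big)\,\frac{\mrd}{\mrd\lambda}\Psi\big(\mrm_\lambda(\mun),\ols{\mathbf{X}}_1\big)\,\mrd\lambda.
\end{equation*}
Because $\Psi(\cdot,\ols{\mathbf{x}}_1)=\psi\big(\sum_{j}\int\phi(w,G(\ols{\mathbf{A}}_1)[j,:]\ols{\mathbf{F}}_1)\,\cdot\,(\mrd w)\big)$ is the composition of the Lipschitz $\psi$ with a linear functional of $m$, its linear derivative in the sense of Definition~\ref{def:flatDerivative} exists in closed form, and since $\lambda\mapsto \mrm_\lambda(\mun)$ is the straight line between $\mrm(\mur)$ and $\mrm(\mun)$, the derivative in $\lambda$ reduces to an integral against the signed measure $\mrm(\mun)-\mrm(\mur)$:
\begin{equation*}
\frac{\mrd}{\mrd\lambda}\Psi\big(\mrm_\lambda(\mun),\ols{\mathbf{X}}_1\big) = \int_{\mathcal{W}}\frac{\delta\Psi}{\delta\mrm}\big(\mrm_\lambda(\mun),\ols{\mathbf{X}}_1,w\big)\,(\mrm(\mun)-\mrm(\mur))(\mrd w).
\end{equation*}
Substituting back, moving the expectation inside by Fubini (justified by Assumption~\ref{Ass: on loss NN}(\ref{ass: bounded grad}) bounding $\partial_{\hat y}\ell$ by $M_{\ell'}$ and the polynomial growth bound in Definition~\ref{def:flatDerivative} on $\delta\Psi/\delta\mrm$, together with the finite total variation of $\mrm(\mun)-\mrm(\mur)$), yields exactly the claimed representation.

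The main obstacle, and the only place beyond routine bookkeeping, is verifying that the chain rule for linear derivatives applies rigorously to the composition $\ell(\Psi(\cdot,\ols{\mathbf{x}}_1),\ols y_1)$ and that all interchanges of order of integration are legitimate. The composition step is clean because $\Psi$ is $\psi$ applied to a linear functional of $m$, so its flat derivative is the explicit closed form $\psi'\!\big(\textstyle\sum_j\int\phi\,\mrd m\big)\sum_j\phi(w,G(\ols{\mathbf{A}}_1)[j,:]\ols{\mathbf{F}}_1)$, and $\partial_{\hat y}\ell$ provides the outer derivative by classical calculus; integrability for Fubini then follows from the uniform boundedness assumptions. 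Once these analytic details are in place, the identity drops out of the three ingredients: ghost sample, linear interpolation, and chain rule.
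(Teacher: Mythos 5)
The paper does not reproduce a proof of this lemma; it is imported verbatim as Theorem~3.2 of \citet{aminian2023mean}, and your derivation --- the replace-one/ghost-sample identity $\genb=\bbE_{\mbZn,\ols Z_1}[\ell(\Psi(\mrm(\mun),\ols{\mathbf{X}}_1),\ols Y_1)-\ell(\Psi(\mrm(\mur),\ols{\mathbf{X}}_1),\ols Y_1)]$ followed by the fundamental theorem of calculus along the straight line $\mrm_\lambda$ with the chain rule for the linear derivative --- is correct and is exactly the route by which that reference establishes the identity. The only point worth flagging is that writing $\frac{\delta\Psi}{\delta\mrm}$ in closed form via $\psi'$ implicitly uses differentiability of $\psi$, which Assumption~\ref{ass: readout function} does not grant in general but which holds trivially for the mean- and sum-readouts actually used (and the lemma already presupposes that $\frac{\delta\Psi}{\delta\mrm}$ exists), so this is not a gap.
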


The following preliminaries are needed for our Rademacher complexity analysis. 

\textbf{Rademacher complexity:} For a hypothesis set,  $\mathcal{H}$ of functions $f_h:\mathcal{Z}\mapsto \mbR$ and set $\pmb{\sigma}=\{\sigma_i\}_{i=1}^n$, the empirical Rademacher complexity $\hat{\mathfrak{R}}_{\mathrm{Z}}(\mathcal{H})$ with respect to set $\mbZn$ is defined as:
\begin{equation}
  \hat{\mathfrak{R}}_{\mbZn}(\mathcal{H}):=\mathbb{E}_{\pmb{\sigma}}\Big[\sup_{f_h\in\mathcal{H}}\frac{1}{n}\sum_{i=1}^n \sigma_i f_h(Z_i)\Big]\,,
\end{equation}
where $\sigma=\{ \sigma_i\}_{i=1}^n$ are i.i.d random variables and $\sigma_i\in\{-1,1\}$ for all $i\in[r]$ with equal probability.

In addition to the previous assumptions, for Rademacher complexity analysis we also need the following assumption.
\begin{repassumption}{Ass: on loss NN bounded}[Bounded loss function]
      The loss function, $(\hat y,y)\mapsto \ell(\hat y,y)$, is bounded for all $\hat y,y\in\mathcal{Y}$, i.e., $ 0\leq\ell(\hat y, y)\leq M_{\ell}$.
      \end{repassumption}

\begin{lemma}[Uniform bound~\cite{mohri2018foundations}] \label{lem:uniform_bound}
  Let $\mathcal{F}_u$ be the set of functions $f:\mathcal{Z}\to[0,M_\ell]$  and 
  $\mu$ be a distribution over $\mathcal{Z}$.
  Let $S=\{z_i\}_{i=1}^n$ be a set of size $n$ i.i.d. drawn from $\mathcal{Z}$.
  Then, for any $\delta \in (0,1)$, with probability at least $1-\delta$ over the choice of $S$, we have
  \begin{equation*}
    \sup_{f \in \mathcal{F}_n} \left\{ \bbE_{Z\sim \mu}[f(Z)] - \frac{1}{n}\sum_{i=1}^nf(z_i) \right\} 
    \leq 2 \rademacher_S(\mathcal{F}_n) + 3M_\ell\sqrt{\frac{1}{2n}\log\frac{2}{\delta}}.
  \end{equation*}
\end{lemma}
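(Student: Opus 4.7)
The target statement is the classical uniform Rademacher concentration bound (Lemma~\ref{lem:uniform_bound}), so the plan is to follow the standard two-step McDiarmid plus symmetrization argument. Define the supremum deviation
\begin{equation*}
\Phi(S) := \sup_{f \in \mathcal{F}_u}\Bigl\{ \bbE_{Z\sim\mu}[f(Z)] - \tfrac{1}{n}\sum_{i=1}^n f(z_i)\Bigr\}.
\end{equation*}
Since every $f \in \mathcal{F}_u$ is bounded in $[0,M_\ell]$, replacing a single sample $z_i$ by an independent copy $z_i'$ changes $\Phi$ by at most $M_\ell/n$ (the supremum of a difference of suprema argument). Similarly, for fixed Rademacher signs, the empirical Rademacher complexity $\rademacher_S(\mathcal{F}_u)$ enjoys the same bounded-differences property with constant $M_\ell/n$ when we replace one sample.

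From here, the first step is to apply McDiarmid's inequality to $\Phi$: with probability at least $1-\delta/2$,
\begin{equation*}
\Phi(S) \;\le\; \bbE_S[\Phi(S)] + M_\ell\sqrt{\tfrac{1}{2n}\log\tfrac{2}{\delta}}.
\end{equation*}
The second step is the classical symmetrization trick: introduce a ghost sample $S'=\{z_i'\}_{i=1}^n$ independent of $S$ and with the same law, and write
\begin{equation*}
\bbE_S[\Phi(S)] \;\le\; \bbE_{S,S'}\!\Bigl[\sup_{f\in\mathcal{F}_u} \tfrac{1}{n}\sum_{i=1}^n(f(z_i') - f(z_i))\Bigr] \;=\; \bbE_{S,S',\pmb{\sigma}}\!\Bigl[\sup_{f\in\mathcal{F}_u} \tfrac{1}{n}\sum_{i=1}^n \sigma_i(f(z_i') - f(z_i))\Bigr],
\end{equation*}
where the equality uses that $(f(z_i')-f(z_i))$ is symmetric in distribution so multiplying by $\sigma_i\in\{\pm 1\}$ does not change the law. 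Splitting the supremum and using that $S,S'$ are identically distributed then yields $\bbE_S[\Phi(S)] \le 2\,\bbE_S[\rademacher_S(\mathcal{F}_u)]$.

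The third step transfers from the population Rademacher complexity $\bbE_S[\rademacher_S(\mathcal{F}_u)]$ to its empirical counterpart $\rademacher_S(\mathcal{F}_u)$, which is what appears in the stated bound. Using the bounded-differences property of $S \mapsto \rademacher_S(\mathcal{F}_u)$ noted above, McDiarmid's inequality gives, with probability at least $1-\delta/2$,
\begin{equation*}
\bbE_S[\rademacher_S(\mathcal{F}_u)] \;\le\; \rademacher_S(\mathcal{F}_u) + M_\ell\sqrt{\tfrac{1}{2n}\log\tfrac{2}{\delta}}.
\end{equation*}
A union bound over the two McDiarmid events of failure probability $\delta/2$ each, together with the symmetrization inequality, gives with probability at least $1-\delta$
\begin{equation*}
\Phi(S) \;\le\; 2\,\rademacher_S(\mathcal{F}_u) + 2M_\ell\sqrt{\tfrac{1}{2n}\log\tfrac{2}{\delta}} + M_\ell\sqrt{\tfrac{1}{2n}\log\tfrac{2}{\delta}} \;=\; 2\,\rademacher_S(\mathcal{F}_u) + 3M_\ell\sqrt{\tfrac{1}{2n}\log\tfrac{2}{\delta}},
\end{equation*}
which is the claimed bound.

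The only genuinely delicate point is accounting for the constants: one needs \emph{two} McDiarmid applications (one for $\Phi$, one for the Rademacher complexity) to produce the empirical, rather than expected, Rademacher term in the final inequality, and each of these costs a $\sqrt{(1/2n)\log(2/\delta)}$ term, with the symmetrization step contributing a factor of $2$ to the first of them. Careful bookkeeping of the union bound (halving $\delta$ for each McDiarmid step) is what produces the exact constant $3$ in front of the $M_\ell$ tail term. Everything else is routine and follows Mohri et al.\ verbatim.
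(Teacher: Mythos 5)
Your proof is correct: the bounded-differences constants, the symmetrization factor of $2$, the second McDiarmid application to replace the expected Rademacher complexity by its empirical version, and the union bound yielding the constant $3$ are all handled properly. The paper states this lemma as an imported result from \cite{mohri2018foundations} without reproving it, and your argument is precisely the standard textbook proof that the citation points to, so there is nothing to reconcile.
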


The contraction lemma helps us to estimate the Rademacher complexity.
\begin{lemma}[Talagrand's contraction lemma~\citep{shalev2014understanding}] \label{lem:contraction}
  Let $\phi_i : \mbR \rightarrow \mbR$ $(i\in \{1,\ldots,n\})$ be $L$-Lipschitz functions 
  and $\mathcal{F}_n$ be a set of functions from $\mathcal{Z}$ to $\mbR$. 
  Then it follows that for any $\{z_i\}_{i=1}^n \subset \mathcal{Z}$,
  \begin{equation*}
    \bbE_\sigma\left[ \sup_{f\in\mathcal{F}_n} \frac{1}{n}\sum_{i=1}^n\sigma_i \phi_i( f(z_i))\right]
    \leq L \bbE_\sigma\left[ \sup_{f \in \mathcal{F}_r} \frac{1}{n}\sum_{i=1}^n\sigma_i f(z_i)\right].
  \end{equation*}
\end{lemma}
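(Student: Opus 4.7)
The plan is to prove Talagrand's contraction lemma by a peeling argument, removing the compositions $\phi_i$ one index at a time, starting from $i=n$. Conditioning the outer expectation on the Rademacher variables $\sigma_1,\ldots,\sigma_{n-1}$ and taking the conditional expectation over $\sigma_n$, which is uniform on $\{-1,+1\}$, the task reduces to showing for every deterministic functional $f \mapsto A(f)$ (which captures the contribution of the first $n-1$ terms) the single-coordinate inequality
\begin{equation*}
    \tfrac{1}{2}\sup_{f}\bigl[A(f) + \phi_n(f(z_n))\bigr] + \tfrac{1}{2}\sup_{f}\bigl[A(f) - \phi_n(f(z_n))\bigr] \leq \tfrac{1}{2}\sup_{f}\bigl[A(f) + L f(z_n)\bigr] + \tfrac{1}{2}\sup_{f}\bigl[A(f) - L f(z_n)\bigr].
\end{equation*}

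To establish this, I would combine the two suprema on the left-hand side into a joint supremum over a pair $(f_1, f_2) \in \mathcal{F}_n \times \mathcal{F}_n$, obtaining $\sup_{f_1,f_2}\bigl[A(f_1)+A(f_2)+\phi_n(f_1(z_n))-\phi_n(f_2(z_n))\bigr]$, and then apply the $L$-Lipschitz property in the form $\phi_n(f_1(z_n))-\phi_n(f_2(z_n)) \leq L|f_1(z_n)-f_2(z_n)|$. The symmetry of the joint supremum under the swap $(f_1,f_2) \leftrightarrow (f_2,f_1)$ then lets me replace the absolute value by the signed quantity $L(f_1(z_n)-f_2(z_n))$ without loss of generality, after which the joint supremum decouples back into a sum of two independent suprema, yielding the right-hand side of the display.

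With this single-coordinate contraction in hand, I would iterate: each application of the argument replaces one $\phi_i$ by $L$ times the identity while leaving the remaining $\phi_j$ (for $j \neq i$) untouched inside the inner supremum (they are absorbed into the generic functional $A$ at that stage). After $n$ such peels, every term $\sigma_i \phi_i(f(z_i))$ has been replaced by $L \sigma_i f(z_i)$, and dividing by $n$ gives exactly the inequality in the statement. Pulling the constant $L$ outside the supremum and expectation is legitimate since $L \geq 0$.

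The main obstacle is justifying the signed-versus-absolute-value step. A careless argument might appear to require fixing the sign of $f_1(z_n) - f_2(z_n)$ in advance based on the functions, but in fact the objective $A(f_1)+A(f_2)+L|f_1(z_n)-f_2(z_n)|$ is invariant under the swap $(f_1,f_2) \leftrightarrow (f_2,f_1)$: given any near-optimal pair with $f_1(z_n) < f_2(z_n)$, swapping relabels them so the signed difference becomes non-negative without affecting $A(f_1)+A(f_2)$. This observation promotes the Lipschitz bound to an inequality between suprema and enables the clean decoupling at each peeling step; subsequent iterates are routine because the inductive hypothesis has the same functional form as the base case.
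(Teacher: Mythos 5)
Your proposal is correct: the single-coordinate peeling argument, with the Lipschitz bound followed by the swap-symmetry trick to discharge the absolute value and decouple the joint supremum, is exactly the standard proof of Talagrand's contraction lemma (Lemma 26.9 in the cited Shalev-Shwartz and Ben-David reference). The paper itself does not reprove this lemma but imports it by citation, so your argument coincides with the proof the paper implicitly relies on.
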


    

The units of GCNs and MPGNN are shown in Figure~\ref{Fig:main}.
\begin{figure}[tbh]
     \centering
     \begin{subfigure}[b]{0.45\textwidth}
         \centering
         \begin{tikzpicture}[scale=0.55]

 \draw[->] (-1.5,5.5) -- (-0.5,5.5);
  \draw[->] (1,5.5) -- (1.5,5.5);
 \draw[->] (-1.5,4) -- (1.5,4);
 \draw[->] (-1.5,3) -- (1.5,3);
 \draw[->] (3,4) -- (4.75,4);
 \draw[->] (3.5,5) -- (4.75,5);
  \draw[->] (6.25,4) -- (6.75,4);

\node at (-1,6){$\mathbf{A}_q$};
\node at (0,4.5){$\mathbf{F}_q$};
\node at (0,3.5){$W_{1,c}$};
\node at (4,5.5){$W_{2,c}$};

\draw [fill=white!93!blue,rounded corners=2pt](-0.5,5) rectangle (1,6);

\draw [fill=white!93!blue,rounded corners=2pt](1.5,2.5) rectangle (3,6);

\draw [fill=white!93!blue,rounded corners=2pt](4.75,2.5) rectangle (6.25,6);

\node at (2.25,4.25){$\varphi(\cdot)$};
\node at (5.5,4.25){$\phi_c(\cdot)$};

\node at (0.25,5.5){$G(\cdot)$};

\end{tikzpicture}
\caption{Neuron Unit}
\label{Fig: gcn}
     \end{subfigure}
     \begin{subfigure}[b]{0.45\textwidth}
         \centering
         \begin{tikzpicture}[scale=0.55]

 \draw[->] (-1.5,5.5) -- (-0.5,5.5);
  \draw[->] (1,5.5) -- (1.5,5.5);
 \draw[->] (-1.5,4) -- (1.5,4);
 \draw[->] (3,4) -- (4.75,4);
 \draw[->] (3.5,5) -- (4.75,5);
  
  \draw[->] (3,2.75) -- (4.75,2.75);

  \draw[->] (6.25,4) -- (8,4);
  \draw[->] (6.5,5) -- (8,5);
   \draw[->] (9.5,4) -- (10,4);

\node at (-1,6){$\mathbf{A}_q$};
\node at (-1,4.5){$\mathbf{F}_q$};
\node at (4,3.25){$W_{3,p}$};
\node at (4,5.5){$W_{1,p}$};
\node at (7.25,5.5){$W_{2,p}$};

\draw [fill=white!93!blue,rounded corners=2pt](-0.5,5) rectangle (1,6);

\draw [fill=white!93!blue,rounded corners=2pt](-0.5,3.5) rectangle (1,4.5);

\node at (0.25,5.5){$G(\cdot)$};
\node at (0.25,4){$\zeta(\cdot)$};

\draw [fill=white!93!blue,rounded corners=2pt](1.5,3.5) rectangle (3,6);

\draw [fill=white!93!blue,rounded corners=2pt](4.75,2.5) rectangle (6.25,6);

\draw [fill=white!93!blue,rounded corners=2pt](8,2.5) rectangle (9.5,6);

\node at (2.25,4.75){$\rho(\cdot)$};
\node at (5.5,4.25){$\kappa(\cdot)$};
\node at (8.75,4.25){$\phi_p(\cdot)$};
\end{tikzpicture}
\caption{MPU Unit}
\label{Fig: MPGNN}
     \end{subfigure}
     \caption{Units of GCN and MPGNN }
     \label{Fig:main}
 \end{figure}
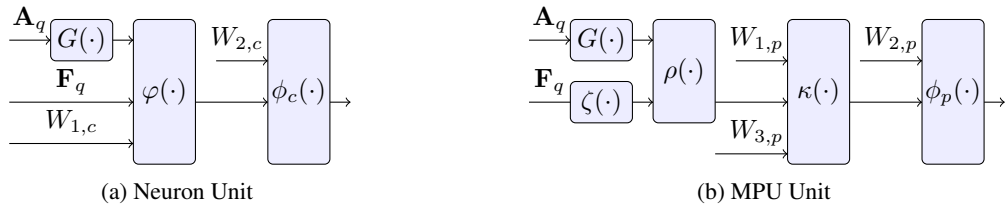

\section{Other Related Works}
\textbf{Mean-field:} Our study employs the mean-field framework utilized in a recent line of research \cite{chizat/bach:2018,mei/montanari/nguyen:2018,mei/misiakiewicz/montanari:2019,sirignano/spiliopoulos:2019,MFLD}. The convergence of gradient descent for training one-hidden layer NNs with infinite width under certain structural assumptions is established by \citet{chizat/bach:2018}. The study of \citet{mei/montanari/nguyen:2018} proved the global convergence of noisy stochastic gradient descent and established approximation bounds between finite and infinite neural networks. Furthermore, \citet{mei/misiakiewicz/montanari:2019} demonstrated that this approximation error can be independent of the input dimension in certain cases, and established that the residual dynamics of noiseless gradient descent are close to the dynamics of NTK-based kernel regression under some conditions.

 \textbf{Graph Representation Learning:} 
Numerous Graph Neural Networks (GNNs) have emerged for graph-based tasks, spanning node, edge, and graph levels. GCNs, introduced in~\citet{kipfsemi}, simplify the Cheby-Filter from~\citet{defferrard2016convolutional} for one-hop neighbors. MPGNNs, as proposed in~\citet{gilmer2017neural}, outline a general GNN framework, treating graph convolutions as message-passing among nodes and edges. For graph-level tasks, like graph classification, a typical practice involves applying a graph readout (pooling) layer after graph filtering layers, composed of graph filters. The readout (pooling) layer aggregates node representations to create a graph-wide embedding. Common graph readout choices encompass set pooling methods like direct sum application, mean~\citep{hamilton2020graph}, or maximum~\citep{mesquita2020rethinking}, as well as combinations of LSTMs with attention~\citep{vinyals2015order} and graph coarsening techniques leveraging graph structure~\citep{ying2018hierarchical, cangea2018towards, gao2019graph}. In this paper, we investigate how mean and sum impact the generalization errors of GCNs and MPGNNs.

\textbf{Node Classification and Generalization Error:} For node classification tasks, \citet{verma2019stability} discussed the generalization error under node classification for GNNs via algorithm stability analysis. The work by \citet{zhou2021generalization} extended the results in \citet{verma2019stability} and found that increasing the depth of GCN enlarges its generalization error for the node classification task. A Rademacher complexity analysis was applied to GCNs for the node classification task by \citet{lv2021generalization}. Based on transductive Rademacher complexity, a high-probability upper bound on the generalization error of the transductive node classification task was proposed by \citet{esser2021learning}. The transductive modeling of node classification was studied in \citet{oono2020optimization} and \citet{tang2023towards}. \citet{cong2021provable} presented an upper bound on the generalization error of GCNs for node classification via transductive uniform stability, building on the work of \citet{el2006stable}.
In contrast, our research focuses on the task of graph classification, which involves (semi-)supervised learning on graph data samples rather than semi-supervised learning for node classification.

\textbf{Neural Tangent Kernels:}The neural tangent kernel (NTK) model, as described by \citet{jacot/franck/hongler:2018}, elucidates the learning dynamics inherent in neural networks when subjected to appropriate scaling conditions. This explication relies on the linearization of learning dynamics in proximity to its initialization. Conclusive evidence pertaining to the (quantitative) global convergence of gradient-based techniques for neural networks has been established for both regression problems \cite{suzuki2021deep} and classification problems \cite{cao2019generalization}. The NTK model, founded upon linearization, is constrained in its ability to account for the phenomenon of "feature learning" within neural networks, wherein parameters exhibit the capacity to traverse and adapt to the inherent structure of the learning problem. Fundamental to this analysis is the linearization of training dynamics, necessitating the imposition of appropriate scaling conditions on the model~\cite{chizat1812note}. Consequently, this framework proves inadequate for elucidating the feature learning aspect of neural networks \cite{yang2020feature,fang2021mathematical}. Notably, empirical investigations have demonstrated the superior expressive power of deep learning over kernel methods concerning approximation and estimation errors \cite{ghorbani2021linearized}. In certain contexts, it has been observed that neural networks, optimized through gradient-based methodologies, surpass the predictive performance of the NTK model, and more broadly, kernel methods, concerning generalization error or true risk \cite{allen2019can}.

\section{Proofs and details of Section~\ref{sec_main_results}}\label{app: proofs of main section}

\subsection{Generic GNN}
\begin{repproposition}{Prop: KL bound}\textbf{(restated)}
  Let Assumptions~\ref{Ass: on loss NN}, \ref{ass: bounded Unit function}, and \ref{ass: readout function} hold. Then, the following bound holds on the generalization error of generic GNN,
    \begin{equation}
    \begin{split}
        &\Big|\genb(\mrm(\mun),\mu)\Big| \leq \big(M_{\ell'}L_{\psi}N_{\max}M_\phi/\sqrt{2} \big)\bbE_{\mbZn,\ols{Z}_1}\Big[ \sqrt{ \KLr\big(\mrm(\mun)\|\mrm(\mur)\big)}\Big].
    \end{split}
    \end{equation}
\end{repproposition}
In the following, we provide two technical proofs for Proposition~\ref{Prop: KL bound}.

\begin{proof}[\textbf{Proof of Proposition~\ref{Prop: KL bound} via Lemma~\ref{lemma: donsker} and Lemma~\ref{lemma: hoeffding}}]
 From Lemma~\ref{lemma: donsker}, the following representation of KL divergence holds between two probability distributions $m_1$ and $m_2$ on a common space $\mathcal{W}$,
\begin{align}\label{Eq: donsker}
    \KLr(m_1\|m_2)=\sup_{f\in\mathfrak{F}} \bbE_{W\sim m_1}[f(W)] -\log{\big( \bbE_{W\sim m_2}[\exp{f(W)}]\big)},
\end{align}
where $f\in \mathfrak{F}=\{f:\mathcal{W}\rightarrow\mbR \text{ s.t.  } \bbE_{W\sim m_2}[e^{f(W)}]< \infty \}$.
Lemma~\ref{lem: gen rep} yields

\[
\begin{split}
        &\genb(\mrm^{\alpha}(\mun),\mu)\\
        &=\bbE_{\mbZn,\ols{Z}_1}\Big[\int_{0}^1\int_{\mathcal{W}} \partial_{\hat y}\ell\big(\Psi(\mrm_{\lambda}(\mun),\ols{\mathbf{x}}_1),\ols{\mathbf{y}}_1\big)\frac{\delta \Psi}{\delta \mrm}(\mrm_{\lambda}(\mun),\ols{\mathbf{x}}_1,w)(\mrm(\mun)-\mrm(\mur))(\mathrm{d}w) \mrd \lambda\Big]\\
        &=\bbE_{\mbZn,\ols{Z}_1}\Big[\int_{0}^1 \partial_{\hat y}\ell\big(\Psi(\mrm_{\lambda}(\mun),\ols{\mathbf{x}}_1),\ols{\mathbf{y}}_1\big)\Big(\bbE_{W\sim \mrm(\mun)}\Big[\frac{\delta \Psi}{\delta \mrm}(\mrm_{\lambda}(\mun),\ols{\mathbf{x}}_1,W)\Big]\\&\qquad\qquad-\bbE_{W\sim \mrm(\mur)}\Big[\frac{\delta \Psi}{\delta \mrm}(\mrm_{\lambda}(\mun),\ols{\mathbf{x}}_1,W)\Big]\Big) \mrd \lambda\Big].
    \end{split}
\]  
For the function $f(w)=\lambda_1\partial_{\hat y}\ell\big(\Psi(\mrm_{\lambda}(\mun),\ols{\mathbf{x}}_1),\ols{\mathbf{y}}_1\big)\frac{\delta \Psi}{\delta \mrm}(\mrm_{\lambda}(\mun),\ols{\mathbf{x}}_1,w)$,
due to Assumptions~\ref{Ass: on loss NN}, \ref{ass: bounded Unit function} and \ref{ass: readout function}, we have $\Big|\partial_{\hat y}\ell\big(\Psi(\mrm_{\lambda}(\mun),\ols{\mathbf{x}}_1),\ols{\mathbf{y}}_1\big)\frac{\delta \Psi}{\delta \mrm}(\mrm_{\lambda}(\mun),\ols{\mathbf{x}}_1,w)\Big|\leq M_{\ell'}L_{\psi}N_{\max}M_\phi$. Hence $f \in \mathfrak{F}$.

From \eqref{Eq: donsker}, for 
$m_1=\mrm(\mun)$ and $m_2=\mrm(\mur)$, we have for all $\lambda_1\in\mbR$
\begin{align}\label{eq: donsker bound}
     &\lambda_1\bbE_{W\sim \mrm(\mun)}\Big[\Big(\partial_{\hat y}\ell\big(\Psi(\mrm_{\lambda}(\mun),\ols{\mathbf{x}}_1)],\ols{\mathbf{y}}_1\big)\frac{\delta \Psi}{\delta \mrm}(\mrm_{\lambda}(\mun),\ols{\mathbf{x}}_1,W)\Big)\Big]\\\nonumber
     &\leq \KLr(\mrm(\mun)\|\mrm(\mur))+ \log\Big(\bbE_{W\sim\mrm^{\alpha}(\mur)}\Big[\exp\Big\{\lambda_1\Big(\partial_{\hat y}\ell\big(\Psi(\mrm_{\lambda}(\mun),\ols{\mathbf{x}}_1),\ols{\mathbf{y}}_1\big)\frac{\delta \Psi}{\delta \mrm}(\mrm_{\lambda}(\mun),\ols{\mathbf{x}}_1,W)\Big)\Big\}\Big]\Big).
\end{align}
With Lemma~\ref{lemma: hoeffding},   
\begin{equation}\label{Eq: subgaus bound}
\begin{split}
&\bbE_{W\sim\mrm^{\alpha}(\mur)}\Big[\exp\Big\{\lambda_1\Big(\partial_{\hat y}\ell\big(\Psi(\mrm_{\lambda}(\mun),\ols{\mathbf{x}}_1),\ols{\mathbf{y}}_1\big)\Big(\frac{\delta \Psi}{\delta \mrm}(\mrm_{\lambda}(\mun),\ols{\mathbf{x}}_1,W)-\bbE_{W\sim\mrm^{\alpha}(\mur)}\Big[\frac{\delta \Psi}{\delta \mrm}(\mrm_{\lambda}(\mun),\ols{\mathbf{x}}_1,W)\Big]\Big)\Big)\Big\}\Big]\\&\quad\leq \exp{\frac{\lambda_1^2\sigma_1^2}{2}},
\end{split}
\end{equation}
where $\sigma_1=M_{\ell'}L_{\psi}N_{\max}M_\phi$. Combining \eqref{Eq: subgaus bound} with \eqref{eq: donsker bound}, we can derive the following:
\begin{align*}
     &\lambda_1\Big(\partial_{\hat y}\ell\big(\Psi(\mrm_{\lambda}(\mun),\ols{\mathbf{x}}_1),\ols{\mathbf{y}}_1\big)\Big(\bbE_{W\sim \mrm(\mun)}\Big[\frac{\delta \Psi}{\delta \mrm}(\mrm_{\lambda}(\mun),\ols{\mathbf{x}}_1,W)\Big]-\bbE_{W\sim \mrm(\mur)}\Big[\frac{\delta \Psi}{\delta \mrm}(\mrm_{\lambda}(\mun),\ols{\mathbf{x}}_1,W)\Big]\Big)\Big)\\
     &\leq \KLr(\mrm(\mun)\|\mrm(\mur))+ \frac{\lambda_1^2\sigma_1^2}{2},
\end{align*}
 This is a nonnegative parabola in $\lambda_1$, whose discriminant must be nonpositive. Therefore, we have,
\begin{equation}\label{eq: bound 1}
\begin{split}
         &\Big|\partial_{\hat y}\ell\big(\Psi(\mrm_{\lambda}(\mun),\ols{\mathbf{x}}_1),\ols{\mathbf{y}}_1\big)\Big(\bbE_{W\sim \mrm(\mun)}\Big[\frac{\delta \Psi}{\delta \mrm}(\mrm_{\lambda}(\mun),\ols{\mathbf{x}}_1,W)\Big]-\bbE_{W\sim \mrm(\mur)}\Big[\frac{\delta \Psi}{\delta \mrm}(\mrm_{\lambda}(\mun),\ols{\mathbf{x}}_1,W)\Big]\Big)\Big|\\
     &\leq \sqrt{2\sigma_1^2\KLr(\mrm(\mun)\|\mrm(\mur))}.
     \end{split}
\end{equation}

The upper bound in \eqref{eq: bound 1} holds for all $\lambda\in[0,1]$. Therefore, combining with Lemma~\ref{lem: gen rep}, we have,
\begin{equation}
\begin{split}
         &\genb(\mrm^{\alpha}(\mun),\mu)
         \\&=
         \bbE_{\mbZn,\ols{Z}_1}\Big[\int_{0}^1 \partial_{\hat y}\ell\big(\Psi(\mrm_{\lambda}(\mun),\ols{\mathbf{x}}_1),\ols{\mathbf{y}}_1\big)\Big(\bbE_{W\sim \mrm(\mun)}\Big[\frac{\delta \Psi}{\delta \mrm}(\mrm_{\lambda}(\mun),\ols{\mathbf{x}}_1,W)\Big]\\&\qquad\qquad-\bbE_{W\sim \mrm(\mur)}\Big[\frac{\delta \Psi}{\delta \mrm}(\mrm_{\lambda}(\mun),\ols{\mathbf{x}}_1,W)\Big]\Big) \mrd \lambda\Big]\\
         \\&\leq
         \bbE_{\mbZn,\ols{Z}_1}\Bigg[\int_{0}^1 \Big|\partial_{\hat y}\ell\big(\Psi(\mrm_{\lambda}(\mun),\ols{\mathbf{x}}_1),\ols{\mathbf{y}}_1\big)\Big(\bbE_{W\sim \mrm(\mun)}\Big[\frac{\delta \Psi}{\delta \mrm}(\mrm_{\lambda}(\mun),\ols{\mathbf{x}}_1,W)\Big]\\&\qquad\qquad-\bbE_{W\sim \mrm(\mur)}\Big[\frac{\delta \Psi}{\delta \mrm}(\mrm_{\lambda}(\mun),\ols{\mathbf{x}}_1,W)\Big]\Big)\Big| \mrd \lambda\Bigg]\\
         &\leq\bbE_{\mbZn,\ols{Z}_1}\Big[ \sqrt{2\sigma_1^2\KLr(\mrm(\mun)\|\mrm(\mur))}\Big].
     \end{split}
\end{equation}
This 
completes the proof.
\end{proof}

\begin{proof}[\textbf{Proof of Proposition~\ref{Prop: KL bound} via Lemma~\ref{lem: tv} and Lemma~\ref{lem: pinsker}}]

From Lemma~\ref{lem: gen rep}, it yields,
\[
\begin{split}
        &\genb(\mrm^{\alpha}(\mun),\mu)\\
        &=\bbE_{\mbZn,\ols{Z}_1}\Big[\int_{0}^1\int_{\mathcal{W}} \partial_{\hat y}\ell\big(\Psi(\mrm_{\lambda}(\mun),\ols{\mathbf{x}}_1),\ols{\mathbf{y}}_1\big)\frac{\delta \Psi}{\delta \mrm}(\mrm_{\lambda}(\mun),\ols{\mathbf{x}}_1,w)(\mrm(\mun)-\mrm(\mur))(\mathrm{d}w) \mrd \lambda\Big]\\
        &=\bbE_{\mbZn,\ols{Z}_1}\Big[\int_{0}^1 \partial_{\hat y}\ell\big(\Psi(\mrm_{\lambda}(\mun),\ols{\mathbf{x}}_1),\ols{\mathbf{y}}_1\big)\Big(\bbE_{W\sim \mrm(\mun)}\Big[\frac{\delta \Psi}{\delta \mrm}(\mrm_{\lambda}(\mun),\ols{\mathbf{x}}_1,W)\Big]\\&\qquad\qquad-\bbE_{W\sim \mrm(\mur)}\Big[\frac{\delta \Psi}{\delta \mrm}(\mrm_{\lambda}(\mun),\ols{\mathbf{x}}_1,W)\Big]\Big) \mrd \lambda\Big].
    \end{split}
\]  
For the function $f(w)=\lambda_1\partial_{\hat y}\ell\big(\Psi(\mrm_{\lambda}(\mun),\ols{\mathbf{x}}_1),\ols{\mathbf{y}}_1\big)\frac{\delta \Psi}{\delta \mrm}(\mrm_{\lambda}(\mun),\ols{\mathbf{x}}_1,w)$,
due to Assumptions~\ref{Ass: on loss NN}, \ref{ass: bounded Unit function} and \ref{ass: readout function}, we have $\Big|\partial_{\hat y}\ell\big(\Psi(\mrm_{\lambda}(\mun),\ols{\mathbf{x}}_1),\ols{\mathbf{y}}_1\big)\frac{\delta \Psi}{\delta \mrm}(\mrm_{\lambda}(\mun),\ols{\mathbf{x}}_1,w)\Big|\leq M_{\ell'}L_{\psi}N_{\max}M_\phi$. Hence, from lemma~\ref{lem: tv}, it yields,
\begin{equation}\label{eq: tv bound}
    \big|\genb(\mrm^{\alpha}(\mun),\mu)\big|\leq M_{\ell'}L_{\psi}N_{\max}M_\phi \bbE_{\mbZn,\ols{Z}_1}\Big[\mathbb{TV}(\mrm(\mun),\mrm(\mur))\Big].
\end{equation}
Using Lemma~\ref{lem: pinsker} completes the proof.
\end{proof}

\begin{repproposition}{Prop: lower bound}\textbf{(restated)}
     Let Assumptions~\ref{Ass: on loss NN} hold. Then, the following lower bound holds on the generalization error of the Gibbs measure $m^\alpha(\mun)$,
    \[\begin{split}
    &\genb(\mrm^{\alpha}(\mun),\mu)\geq \frac{n}{2\alpha} \bbE_{\mathbf{Z}_n,\ols{Z}_1}\Big[ \KLr_{\mathrm{sym}}\big(\mrm^{\alpha}(\mun)\|\mrm^{\alpha}(\mur)\big)\Big].
    \end{split}
\]
\end{repproposition}

\begin{proof}
For simplicity of proof, we abbreviate
\begin{align}
    \tilde{\ell}\big(m,z\big)&:=\ell\big(\Psi(m,x),y\big),\\\label{eq: chain}
    \frac{\delta \tilde{\ell}}{\delta m}\big(m,z,w\big)&:=\partial_{\hat{y}}\ell(\Psi(m,x),y)\frac{\delta \Psi}{\delta m}(m,x,w),
\end{align}
where $z=(x,y)$ and \eqref{eq: chain} follows from chain rule. From Assumption~\ref{Ass: on loss NN}, where the loss function is convex with respect to $\Psi(m,x)$ and due to the fact that $\Psi(m,x)$ is linear with respect to parameter measure $m$, then we have the convexity of $\tilde{\ell}\big(m,z\big)$ with respect to parameter measure. Recall that from \eqref{Eq: Gibbs Measure} we have 
\[\mrm^{\alpha}(\mun)=\frac{\pi}{S_{\alpha,\pi}(\mun)}\exp\Bigg\{-\alpha\Bigg[\frac{\delta \mathrm{R}(\mrm^\alpha, \mun,w)}{\delta m} \Bigg] \Bigg\},\]
and
\[\mrm^{\alpha}(\mur)=\frac{\pi}{S_{\alpha,\pi}(\mur)}\exp\Bigg\{-\alpha\Bigg[\frac{\delta \mathrm{R}(\mrm^\alpha, \mur,w)}{\delta m} \Bigg] \Bigg\}.\]
We need to compute the expectation of 
$\KLr_{\mathrm{sym}}\big(\mrm^{\alpha}(\mur)\|\mrm^{\alpha}(\mun)\big)$ where $\KLr_{\mathrm{sym}}(p\|q)=\KLr(p\|q)+\KLr(q\|p)$. For that purpose,
    \begin{align*}
        &\bbE_{\mathbf{Z}_n,\ols{Z}_1}\Big[\KLr\big(\mrm^{\alpha}(\mun)\|\mrm^{\alpha}(\mur)\big)+\KLr\big(\mrm^{\alpha}(\mur)\|\mrm^{\alpha}(\mun)\big)\Big]\\
        &=\bbE_{\mathbf{Z}_n,\ols{Z}_1}\Big[\int_{\mathcal{W}}\log(\mrm^{\alpha}(\mun)/\mrm^{\alpha}(\mur))(\mrm^{\alpha}(\mun)-\mrm^{\alpha}(\mur))(\mrd w)\Big]\\
        &= \bbE_{\mathbf{Z}_n,\ols{Z}_1}\Big[\int_{\mathcal{W}}\log(S_{\alpha,\pi}(\mur)/S_{\alpha,\pi}(\mun)) (\mrm^{\alpha}(\mun)-\mrm^{\alpha}(\mur))(\mrd w)\Big]\\
        &\quad + \bbE_{\mathbf{Z}_n,\ols{Z}_1}\Big[\bbE_{W\sim\mrm^{\alpha}(\mur)}\Big[\pi(W)\Big]-\bbE_{W\sim\mrm^{\alpha}(\mun)}\Big[\pi(W)\Big]\Big]\\
        &\quad +\alpha\Big(\bbE_{\mathbf{Z}_n,\ols{Z}_1}\Big[\bbE_{W\sim\mrm^{\alpha}(\mun)}\Big[\frac{\delta \mathrm{R}}{\delta m}(\mrm^{\alpha}(\mur), \mur,W)-\frac{\delta \mathrm{R}}{\delta m}(\mrm^{\alpha}(\mun), \mun,W)\Big]\Big]\\
        &\quad- \bbE_{\mathbf{Z}_n,\ols{Z}_1}\Big[ \bbE_{W\sim\mrm^{\alpha}(\mur)}\Big[\frac{\delta \mathrm{R}}{\delta m}(\mrm^{\alpha}(\mur), \mur,W)-\frac{\delta \mathrm{R}}{\delta m}(\mrm^{\alpha}(\mun), \mun,W)\Big]\Big]\Big).
    \end{align*}
    Let us define the following terms,
    \begin{equation}
        \begin{split}
           I_1&:=\bbE_{\mathbf{Z}_n,\ols{Z}_1}\Big[\int_{\mathcal{W}}\log(S_{\alpha,\pi}(\mur)/S_{\alpha,\pi}(\mun)) (\mrm^{\alpha}(\mun)-\mrm^{\alpha}(\mur))(\mrd w)\Big],\\
         I_2&:=\bbE_{\mathbf{Z}_n,\ols{Z}_1}\Big[\bbE_{W\sim\mrm^{\alpha}(\mur)}\Big[\pi(W)\Big]-\bbE_{W\sim\mrm^{\alpha}(\mun)}\Big[\pi(W)\Big]\Big],\\
        I_3&:=\bbE_{\mathbf{Z}_n,\ols{Z}_1}\Big[\bbE_{W\sim\mrm^{\alpha}(\mun)}\Big[\frac{\delta \mathrm{R}}{\delta m}(\mrm^{\alpha}(\mur), \mur,W)\Big]-\bbE_{W\sim\mrm^{\alpha}(\mur)}\Big[\frac{\delta \mathrm{R}}{\delta m}(\mrm^{\alpha}(\mur), \mur,W)\Big]\Big],\\
        I_4&:= \bbE_{\mathbf{Z}_n,\ols{Z}_1}\Big[\bbE_{W\sim\mrm^{\alpha}(\mur)}\Big[\frac{\delta \mathrm{R}}{\delta m}(\mrm^{\alpha}(\mun), \mun,W)\Big]-\bbE_{W\sim\mrm^{\alpha}(\mun)}\Big[\frac{\delta \mathrm{R}}{\delta m}(\mrm^{\alpha}(\mun), \mun,W)\Big]\Big].
        \end{split}
    \end{equation}
   
    Note that $\log(S_{\alpha,\pi}(\mur)/S_{\alpha,\pi}(\mun))$ is not a function of parameters. Therefore, we have,
 \begin{equation}\label{eq: I1}
 \begin{split}
 I_1&=\bbE_{\mathbf{Z}_n,\ols{Z}_1}\Big[\int_{\mathcal{W}}\log(S_{\alpha,\pi}(\mur)/S_{\alpha,\pi}(\mun)) (\mrm^{\alpha}(\mun)-\mrm^{\alpha}(\mur))(\mrd w)\Big]\\
 &=0.
 \end{split}
 \end{equation}
 Also $\pi(W)$ is not a function of data samples, therefore, we have
\begin{equation}\label{eq: I2}
\begin{split}
I_2&=\bbE_{\mathbf{Z}_n,\ols{Z}_1}\Big[\bbE_{W\sim\mrm^{\alpha}(\mur)}\Big[\pi(W)\Big]-\bbE_{W\sim\mrm^{\alpha}(\mun)}\Big[\pi(W)\Big]\Big]\\
&=0.
\end{split}
\end{equation}
By considering,
\begin{equation}\begin{split}\label{eq: R2}&\frac{\delta \mathrm{R}}{\delta m}(\mrm^{\alpha}(\mun), \mun,w)=\frac{1}{n}\frac{\delta\tilde{\ell}}{\delta m}(\mrm^{\alpha}(\mun),Z_1,w)+\frac{1}{n}\sum_{i=2}^n \frac{\delta\tilde{\ell}}{\delta m}(\mrm^{\alpha}(\mun),Z_i,w) ,
   \end{split}\end{equation}
    \begin{equation}\begin{split}\label{eq: R1}&\frac{\delta \mathrm{R}}{\delta m}(\mrm^{\alpha}(\mur), \mur,w)=\frac{1}{n}\frac{\delta\tilde{\ell}}{\delta m}(\mrm^{\alpha}(\mur),\ols{Z}_1,w)+\frac{1}{n}\sum_{i=2}^n \frac{\delta\tilde{\ell}}{\delta m}(\mrm^{\alpha}(\mur),Z_i,w),\end{split}\end{equation}
then, we have,
 \begin{equation}\begin{split}\label{eq: R6}
I_3&=\bbE_{\mathbf{Z}_n,\ols{Z}_1}\Big[\bbE_{W\sim\mrm^{\alpha}(\mun)}\Big[\frac{\delta \mathrm{R}}{\delta m}(\mrm^{\alpha}(\mur), \mur,W)\Big]-\bbE_{W\sim\mrm^{\alpha}(\mur)}\Big[\frac{\delta \mathrm{R}}{\delta m}(\mrm^{\alpha}(\mur), \mur,W)\Big]\Big]\\&=
\frac{1}{n}\bbE_{\mathbf{Z}_n,\ols{Z}_1}\Big[\bbE_{W\sim\mrm^{\alpha}(\mun)}\Big[\frac{\delta\tilde{\ell}}{\delta m}(\mrm^{\alpha}(\mur),\ols{Z}_1,W)\Big]-\bbE_{W\sim\mrm^{\alpha}(\mur)}\Big[\frac{\delta\tilde{\ell}}{\delta m}(\mrm^{\alpha}(\mur),\ols{Z}_1,W)\Big]\Big]\\&\quad+\frac{1}{n}\sum_{i=2}^n \bbE_{\mathbf{Z}_n,\ols{Z}_1}\Big[\bbE_{W\sim\mrm^{\alpha}(\mun)}\Big[\frac{\delta\tilde{\ell}}{\delta m}(\mrm^{\alpha}(\mur),Z_i,W)\Big]-\bbE_{W\sim\mrm^{\alpha}(\mur)}\Big[\frac{\delta\tilde{\ell}}{\delta m}(\mrm^{\alpha}(\mur),Z_i,W)\Big]\Big].
\end{split}\end{equation}

\begin{equation}\begin{split}\label{eq: R7}
I_4&=\bbE_{\mathbf{Z}_n,\ols{Z}_1}\Big[\bbE_{W\sim\mrm^{\alpha}(\mur)}\Big[\frac{\delta \mathrm{R}}{\delta m}(\mrm^{\alpha}(\mun), \mun,W)\Big]-\bbE_{W\sim\mrm^{\alpha}(\mun)}\Big[\frac{\delta \mathrm{R}}{\delta m}(\mrm^{\alpha}(\mun), \mun,W)\Big]\Big]\\&=
\frac{1}{n}\bbE_{\mathbf{Z}_n,\ols{Z}_1}\Big[\bbE_{W\sim\mrm^{\alpha}(\mur)}\Big[\frac{\delta\tilde{\ell}}{\delta m}(\mrm^{\alpha}(\mun),Z_1,W)\Big]-\bbE_{W\sim\mrm^{\alpha}(\mur)}\Big[\frac{\delta\tilde{\ell}}{\delta m}(\mrm^{\alpha}(\mun),Z_1,W)\Big]\Big]\\&\quad+\frac{1}{n}\sum_{i=2}^n \bbE_{\mathbf{Z}_n,\ols{Z}_1}\Big[\bbE_{W\sim\mrm^{\alpha}(\mur)}\Big[\frac{\delta\tilde{\ell}}{\delta m}(\mrm^{\alpha}(\mun),Z_i,W)\Big]-\bbE_{W\sim\mrm^{\alpha}(\mun)}\Big[\frac{\delta\tilde{\ell}}{\delta m}(\mrm^{\alpha}(\mun),Z_i,W)\Big]\Big].
\end{split}\end{equation}

 For $2\leq j\leq n$ due to the fact that data samples are i.i.d, we have:
    \begin{equation}\label{eq: equal 1}
    \bbE_{\mathbf{Z}_n,\ols{Z}_1}[\tilde{\ell}(\mrm(\mun), Z_j)]= \bbE_{\mathbf{Z}_n,\ols{Z}_1}[\tilde{\ell}(\mrm(\mur), Z_j)].
\end{equation}
Via the convexity of the loss function, $\tilde{\ell}\big(m,z\big)$, with respect to the parameter measure, $m$, from Lemma~\ref{lem: convex lower}, it holds that,
  \begin{equation}\label{eq: equal 22}
  \begin{split}
     &  \bbE_{\mathbf{Z}_n,\ols{Z}_1}[\tilde{\ell}(\mrm(\mun), Z_j)-\tilde{\ell}(\mrm(\mur), Z_j)]\\
   & \geq \bbE_{\mathbf{Z}_n,\ols{Z}_1}\Big[ \int \frac{\delta \tilde{\ell}}{\delta m}(\mrm(\mur), Z_j,w)(\mrm(\mun)-\mrm(\mur))(\mrd w)\Big].
      \end{split}
\end{equation}
Therefore, we obtain for $2\leq j\leq n$,
\begin{equation}\label{eq: equal 3}
  \begin{split}
   0\geq \bbE_{\mathbf{Z}_n,\ols{Z}_1}\Big[ \int \frac{\delta \tilde{\ell}}{\delta m}(\mrm(\mur), Z_j,w)(\mrm(\mun)-\mrm(\mur))(\mrd w)\Big].
      \end{split}
\end{equation}
Similarly, we can show that, for $2\leq j\leq n$,
 \begin{equation}\label{eq: equal 2}
  \begin{split}
   0\geq \bbE_{\mathbf{Z}_n,\ols{Z}_1}\Big[ \int \frac{\delta \tilde{\ell}}{\delta m}(\mrm(\mun), Z_j,w)(\mrm(\mur)-\mrm(\mun))(\mrd w)\Big].
      \end{split}
\end{equation}

Combining \eqref{eq: equal 3} and \eqref{eq: equal 2} with \eqref{eq: R6} and \eqref{eq: R7}, the following holds: 
  \begin{equation}\label{eq: I3}
      \begin{split}
          I_3\leq \frac{1}{n}\bbE_{\mathbf{Z}_n,\ols{Z}_1}\Big[\bbE_{W\sim\mrm^{\alpha}(\mun)}\Big[\frac{\delta\tilde{\ell}}{\delta m}(\mrm^{\alpha}(\mur),\ols{Z}_1,W)\Big]-\bbE_{W\sim\mrm^{\alpha}(\mur)}\Big[\frac{\delta\tilde{\ell}}{\delta m}(\mrm^{\alpha}(\mur),\ols{Z}_1,W)\Big]\Big],
      \end{split}
  \end{equation}
  and 
  \begin{equation}\label{eq: I4}
      \begin{split}
          I_4\leq \frac{1}{n}\bbE_{\mathbf{Z}_n,\ols{Z}_1}\Big[\bbE_{W\sim\mrm^{\alpha}(\mur)}\Big[\frac{\delta\tilde{\ell}}{\delta m}(\mrm^{\alpha}(\mun),Z_1,W)\Big]-\bbE_{W\sim\mrm^{\alpha}(\mun)}\Big[\frac{\delta\tilde{\ell}}{\delta m}(\mrm^{\alpha}(\mun),Z_1,W)\Big]\Big].
      \end{split}
  \end{equation}
Therefore, from \eqref{eq: I1}, \eqref{eq: I2}, \eqref{eq: I3}, and \eqref{eq: I4} we have,
\begin{equation}\label{eq: last 3}
    \begin{split}
        &\bbE_{\mathbf{Z}_n,\ols{Z}_1}\Big[\KLr\big(\mrm^{\alpha}(\mun)\|\mrm^{\alpha}(\mur)\big)+\KLr\big(\mrm^{\alpha}(\mur)\|\mrm^{\alpha}(\mun)\big)\Big]\\&
        =I_1+I_2+\alpha(I_3+I_4)\\
        &\leq\frac{\alpha}{n}\bbE_{\mathbf{Z}_n,\ols{Z}_1}\Big[\bbE_{W\sim\mrm^{\alpha}(\mun)}\Big[\frac{\delta\tilde{\ell}}{\delta m}(\mrm^{\alpha}(\mur),\ols{Z}_1,W)\Big]-\bbE_{W\sim\mrm^{\alpha}(\mur)}\Big[\frac{\delta\tilde{\ell}}{\delta m}(\mrm^{\alpha}(\mur),\ols{Z}_1,W)\Big]\Big] \\&
        +\frac{\alpha}{n}\bbE_{\mathbf{Z}_n,\ols{Z}_1}\Big[\bbE_{W\sim\mrm^{\alpha}(\mur)}\Big[\frac{\delta\tilde{\ell}}{\delta m}(\mrm^{\alpha}(\mun),Z_1,W)\Big]-\bbE_{W\sim\mrm^{\alpha}(\mun)}\Big[\frac{\delta\tilde{\ell}}{\delta m}(\mrm^{\alpha}(\mun),Z_1,W)\Big]\Big].
    \end{split}
\end{equation}
   From Lemma~\ref{lem: convex lower}, we have:
   \begin{equation}\label{eq: last 1}
   \begin{split}
    &\bbE_{\mathbf{Z}_n,\ols{Z}_1}\Big[\bbE_{W\sim\mrm^{\alpha}(\mun)}\Big[\frac{\delta\tilde{\ell}}{\delta m}(\mrm^{\alpha}(\mur),\ols{Z}_1,W)\Big]-\bbE_{W\sim\mrm^{\alpha}(\mur)}\Big[\frac{\delta\tilde{\ell}}{\delta m}(\mrm^{\alpha}(\mur),\ols{Z}_1,W)\Big]\Big]\\
    &\leq \genb(\mrm^\alpha(\mun),\mu).
    \end{split}
\end{equation}
Similarly from Remark~\ref{remark: lemma conv}, we have,
       \begin{equation}\label{eq: last 2}
       \begin{split}
    &\bbE_{\mathbf{Z}_n,\ols{Z}_1}\Big[\bbE_{W\sim\mrm^{\alpha}(\mur)}\Big[\frac{\delta\tilde{\ell}}{\delta m}(\mrm^{\alpha}(\mun),Z_1,W)\Big]-\bbE_{W\sim\mrm^{\alpha}(\mun)}\Big[\frac{\delta\tilde{\ell}}{\delta m}(\mrm^{\alpha}(\mun),Z_1,W)\Big]\Big]\\
    &\leq \genb(\mrm^{\alpha}(\mun),\mu).
    \end{split}
\end{equation}
The final results holds by combining \eqref{eq: last 1} and \eqref{eq: last 2} with \eqref{eq: last 3}.
\end{proof}

\begin{reptheorem}{thm: main unit upper}\textbf{(restated)}
     Let Assumptions~\ref{Ass: on loss NN}, \ref{ass: bounded Unit function} and \ref{ass: readout function} hold. Then, the following upper bound holds on the generalization error of the Gibbs measure, i.e., $\mrm^\alpha(\mun)$,
\begin{equation*}
 \genb(\mrm^\alpha(\mun),\mu) 
     \leq  
  \frac{\alpha C}{n},
\end{equation*}
where $C=(M_{\ell'}M_{\phi}L_{\psi}N_{\max})^2.$
\end{reptheorem}
\begin{proof}
\begin{align}\nonumber
    &\frac{n}{2\alpha} \bbE_{\mathbf{Z}_n,\ols{Z}_1}\Big[ \KLr\big(\mrm^{\alpha}(\mun)\|\mrm^{\alpha}(\mur)\big)\Big]\\\label{eq:1}
    &\leq\frac{n}{2\alpha} \bbE_{\mathbf{Z}_n,\ols{Z}_1}\Big[ \KLr_{\mathrm{sym}}\big(\mrm^{\alpha}(\mun)\|\mrm^{\alpha}(\mur)\big)\Big]\\\label{eq:2}
    &\leq\genb(\mrm^{\alpha}(\mun),\mu)\\\label{eq:3}
    &\leq \big(M_{\ell'}L_{\psi}N_{\max}M_\phi/\sqrt{2} \big)\bbE_{\mbZn,\ols{Z}_1}\Big[ \sqrt{ \KLr\big(\mrm(\mun)\|\mrm(\mur)\big)}\Big]\\\label{eq:4}
    &\leq \big(M_{\ell'}L_{\psi}N_{\max}M_\phi/\sqrt{2} \big) \sqrt{ \bbE_{\mbZn,\ols{Z}_1}\big[\KLr\big(\mrm(\mun)\|\mrm(\mur)\big)\big]},
\end{align}
    where \eqref{eq:1}, \eqref{eq:2}, \eqref{eq:3}, \eqref{eq:4}, follow from the fact that $\KLr(\cdot\|\cdot)\leq \KLr_{\mathrm{sym}}(\cdot\|\cdot)$, Proposition~\ref{Prop: lower bound}, Proposition~\ref{Prop: KL bound} and Jensen-Inequality, respectively. Therefore, we have,
    \begin{align}\label{eq: final thm}
    &\sqrt{ \bbE_{\mbZn,\ols{Z}_1}\big[\KLr\big(\mrm(\mun)\|\mrm(\mur)\big)\big]}\leq\Big(\frac{\sqrt{2}\alpha M_{\ell'}L_{\psi}N_{\max}M_\phi}{n} \Big).
\end{align}
The final result follows from combining \eqref{eq: final thm} with the following inequality,
\[\genb(\mrm^{\alpha}(\mun),\mu)\leq \big(M_{\ell'}L_{\psi}N_{\max}M_\phi/\sqrt{2} \big) \sqrt{ \bbE_{\mbZn,\ols{Z}_1}\big[\KLr\big(\mrm(\mun)\|\mrm(\mur)\big)\big]}.\]
\end{proof}
\subsection{GCN}
\begin{replemma}{lem: GCN case}\textbf{(restated)}
    Let Assumptions~\ref{Ass: Feature node bounded} and \ref{ass: bounded activation function} hold. For a graph sample, $(\mbA_q,\mbF_q)$ with $N$ nodes, and a graph filter $G(\cdot)$, the following upper bound holds on summation of GCN neuron units over all nodes:
     \begin{equation*}
     \begin{split}
        &\sum_{j=1}^N|\phi_c(W_c,G(\mbA_q)[j,:]\mbF_q)|\leq N w_{2,c}L_{\varphi}\norm{W_{1,c}}_2B_f\min(\norm{G(\mbA_q)}_{\infty},\norm{G(\mbA_q)}_F).
        \end{split}
    \end{equation*}
\end{replemma}
\begin{proof}
 Recall that $\norm{G(\mbA_q)}_{\infty}=\max_{j} \sum_{i=1}^N |G(\mbA_q)[j,i]|$, $\norm{G(\mbA_q)}_{F}=\sqrt{\sum_{j=1}^N \sum_{i=1}^N (G(\mbA_q)[j,i])^2}$ and $\norm{W_{1,c,m}}_2=\sup_{W_{1,c}\in\mathcal{S}^k}\norm{W_{1,c}}_2$. Then, we have,
 \begin{equation}
 \begin{split}
        \sum_{j=1}^N|\phi_c(W_c,G(\mbA_q)[j,:]\mbF_q)|)|&\leq  \sum_{j=1}^N \sup_{W_{2,c} \in \mathcal{S}}|W_{2,c}||\varphi(G(\mbA_q)[j,:]\mbF_q).W_{1,c})| \\
        &\leq  \sum_{j=1}^N \sup_{W_{1,c} \in \mathcal{S}^k} w_{2,c}|\varphi(G(\mbA_q)[j,:]\mbF_q).W_{1,c,m})|\\
        &\leq N w_{2,c} L_{\varphi}\norm{W_{1,c,m}}_2 \max_{j}\norm{G(\mbA_q)[j,:]\mbF_q}_2 \\
        &\leq  N w_{2,c}L_{\varphi} \norm{W_{1,c,m}}_2 \max_j\norm{\sum_{i=1}^N G(\mbA_q)[j,i]\mbF_q[:,i]}_2\\
        &\leq N w_{2,c}L_{\varphi} \norm{W_{1,c,m}}_2 \max_j \Big(\sum_{i=1}^N |G(\mbA_q)[j,i]|\norm{\mbF_q[:,i]}_2\Big)\\
        &\leq N w_{2,c}L_{\varphi} \norm{W_{1,c,m}}_2 B_f \max_j\Big(\sum_{i=1}^N |G(\mbA_q)[j,i]|\Big)\\
&\leq N w_{2,c}L_{\varphi} \norm{W_{1,c,m}}_2 B_f\norm{G(\mbA_q)}_{\infty}.
    \end{split}
    \end{equation}
   
    We also have,
     \begin{equation}
     \begin{split}
        \sum_{j=1}^N|\phi_c(W_c,G(\mbA_q)[j,:]\mbF_q)|)|&\leq  \sum_{j=1}^N \sup_{W_{2,c} \in \mathcal{S}}|W_{2,c}||\varphi(G(\mbA_q)[j,:]\mbF_q).W_{1,c})| \\
        &\leq  \sum_{j=1}^N \sup_{W_{1,c} \in \mathcal{S}^k} w_{2,c}|\varphi(G(\mbA_q)[j,:]\mbF_q).W_{1,c,m})|\\
        &\leq  w_{2,c} L_{\varphi}\norm{W_{1,c,m}}_2 \sum_{j=1}^N \norm{G(\mbA_q)[j,:]\mbF_q}_2 \\
        &\leq   w_{2,c}L_{\varphi} \norm{W_{1,c,m}}_2 \sum_{j=1}^N \norm{\sum_{i=1}^N G(\mbA_q)[j,i]\mbF_q[:,i]}_2\\
        &\leq  w_{2,c}L_{\varphi} \norm{W_{1,c,m}}_2 \sum_{j=1}^N \sum_{i=1}^N |G(\mbA_q)[j,i]|\norm{\mbF_q[:,i]}_2\\
        &\leq  w_{2,c}L_{\varphi} \norm{W_{1,c,m}}_2 B_f \sum_{j=1}^N\Big(\sum_{i=1}^N |G(\mbA_q)[j,i]|\Big)\\
&\leq N w_{2,c}L_{\varphi} \norm{W_{1,c,m}}_2 B_f \norm{G(\mbA_q)}_{F}.
    \end{split}
    \end{equation}
    This completes the proof.
\end{proof}

\begin{repproposition}{prop: GCN result}\textbf{(restated)}
In a GCN with mean-readout function, under the combined assumptions for Theorem~\ref{thm: main unit upper} and Lemma~\ref{lem: GCN case}, the following upper bound holds on the generalization error of the Gibbs measure  $\mrm^{\alpha,c}(\mun)$, 
\begin{equation*}
 \genb(\mrm^{\alpha,c}(\mun),\mu) 
     \leq  
  \frac{\alpha M_{c}^2 M_{\ell'}^2G_{\max}^2}{n}\,,
\end{equation*}
where $M_{c}=w_{2,c}L_{\varphi}\norm{W_{1,c}}_2 B_f$, $G_{\max}=\min\big(\norm{G(\mathcal{A})}_{\infty}^{\max},\norm{G(\mathcal{A})}_{F}^{\max}\big)$, $\norm{G(\mathcal{A})}_{\infty}^{\max}= \max_{\mbA_q\in \mathcal{A},\mu(f_q,\mbA_q)>0}\norm{G(\mbA_q)}_{\infty}$ and $\norm{G(\mathcal{A})}_{F}^{\max}= \max_{\mbA_q\in \mathcal{A},\mu(f_q,\mbA_q)>0}\norm{G(\mbA_q)}_{F}$.
\end{repproposition}
\begin{proof}
The result follows directly by combining Lemma~\ref{lem: GCN case} with Theorem~\ref{thm: main unit upper} and assuming the mean-readout function. 
\end{proof}
For sum-readout, we can modify the result as follows,
\begin{corollary}[GCN and sum-readout]\label{Cor: sum-read and GCN}
In a GCN with the sum-readout function, under the combined assumptions for Theorem~\ref{thm: main unit upper} and Lemma~\ref{lem: GCN case}, the following upper bound holds on the generalization error of the Gibbs measure  $\mrm^{\alpha,c}(\mun)$, 
\begin{equation*}
 \genb(\mrm^{\alpha,c}(\mun),\mu) 
     \leq  
  \frac{\alpha M_{c}^2 M_{\ell'}^2G_{\max}^2}{n}\,,
\end{equation*}
where $M_{c}=w_{2,c} N_{\max}L_{\varphi}\norm{W_{1,c}}_2 B_f$, $G_{\max}=\min\big(\norm{G(\mathcal{A})}_{\infty}^{\max},\norm{G(\mathcal{A})}_{F}^{\max}\big)$, $\norm{G(\mathcal{A})}_{\infty}^{\max}= \max_{\mbA_q\in \mathcal{A},\mu(f_q,\mbA_q)>0}\norm{G(\mbA_q)}_{\infty}$ and $\norm{G(\mathcal{A})}_{F}^{\max}= \max_{\mbA_q\in \mathcal{A},\mu(f_q,\mbA_q)>0}\norm{G(\mbA_q)}_{F}$.
\end{corollary}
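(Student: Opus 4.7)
The plan is to follow the template of Proposition~\ref{prop: GCN result} essentially verbatim, substituting the sum-readout Lipschitz constant $L_\psi = 1$ in place of the mean-readout value $L_\psi = 1/N$. The two ingredients are Theorem~\ref{thm: main unit upper}, which gives the generalization error bound in terms of $C = (M_{\ell'} M_\phi L_\psi N_{\max})^2$, and Lemma~\ref{lem: GCN case}, which gives a uniform bound on the GCN neuron output.

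First, I would verify that Assumption~\ref{ass: bounded Unit function} is satisfied for the GCN unit function $\phi_c$ with the constant $M_\phi = w_{2,c} L_\varphi \|W_{1,c,m}\|_2 B_f G_{\max}$. This is read off from the proof of Lemma~\ref{lem: GCN case}: each individual term $|\phi_c(W_c,G(\mbA_q)[j,:]\mbF_q)|$ is bounded by $w_{2,c} L_\varphi \|W_{1,c,m}\|_2 \|G(\mbA_q)[j,:]\mbF_q\|_2$, which is in turn bounded by $w_{2,c} L_\varphi \|W_{1,c,m}\|_2 B_f G_{\max}$ after distributing across the feature matrix and using both $\|G(\mathcal{A})\|_\infty^{\max}$ and $\|G(\mathcal{A})\|_F^{\max}$ controls.

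Next, I would apply Theorem~\ref{thm: main unit upper} with $L_\psi = 1$ (sum-readout, per the remark following Assumption~\ref{ass: readout function}) and the above value of $M_\phi$. This gives
\begin{equation*}
\genb(\mrm^{\alpha,c}(\mun),\mu) \leq \frac{\alpha}{n} \bigl(M_{\ell'} \cdot w_{2,c} L_\varphi \|W_{1,c}\|_2 B_f G_{\max} \cdot 1 \cdot N_{\max}\bigr)^2,
\end{equation*}
and factoring $M_c := N_{\max} w_{2,c} L_\varphi \|W_{1,c}\|_2 B_f$ yields the claimed bound $\alpha M_c^2 M_{\ell'}^2 G_{\max}^2 / n$.

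There is no substantive obstacle here, since Theorem~\ref{thm: main unit upper} and Lemma~\ref{lem: GCN case} do all the work; the corollary is a direct specialization. The only bookkeeping point worth emphasizing is that sum-readout picks up an extra factor of $N_{\max}$ compared with the mean-readout bound of Proposition~\ref{prop: GCN result}, reflected in the definition of $M_c$ and consistent with the readout-function comparison in the remark following Theorem~\ref{thm: main unit upper}.
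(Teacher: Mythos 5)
Your proposal is correct and matches the paper's (implicit) argument exactly: the paper derives this corollary, like Proposition~\ref{prop: GCN result}, by combining Lemma~\ref{lem: GCN case} with Theorem~\ref{thm: main unit upper}, the only change being $L_\psi=1$ for the sum-readout, which is precisely where your extra factor of $N_{\max}$ in $M_c$ comes from. No gaps.
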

 We can present a modified version of Proposition~\ref{prop: GCN result} that accommodates a bounded activation function. It is important to note that our analysis assumes the activation function to be Lipschitz continuous, a condition that holds for the popular Tanh function. However, for scenarios where the activation function is bounded, i.e., $$\sup_{\mbA,\mbF}|\varphi(G(\mbA)[j,:]\mbF)|\leq M_{\varphi}\,$$ we can propose an updated upper bound in Proposition~\ref{prop: GCN result} as follows,
\begin{corollary}\label{Cor: GCN}
    Let us assume the same assumptions in Proposition~\ref{prop: GCN result} and a bounded activation function, i.e., $\sup_{\mbA,\mbF}|\varphi(G(\mbA)[j,:]\mbF)|\leq M_{\varphi}\,$ for $j\in[N]$ in GCN. Then the following upper bound holds on the generalization error of the Gibbs measure, $\mrm^{\alpha,c}(\mun)$,  
\begin{equation*}
 \genb(\mrm^{\alpha,c}(\mun),\mu) 
     \leq  
  \frac{\alpha M_{c}^2 M_{\ell'}^2}{n}\,,
\end{equation*}
where $M_{c}=w_{2,c} \min\big(M_{\varphi},L_{\varphi}\norm{W_{1,c}}_2 G_{\max} B_f\big)$, $G_{\max}=\min\big(\norm{G(\mathcal{A})}_{\infty}^{\max},\norm{G(\mathcal{A})}_{F}^{\max}\big)$, $\norm{G(\mathcal{A})}_{\infty}^{\max}= \max_{\mbA_q\in \mathcal{A},\mu(f_q,\mbA_q)>0}\norm{G(\mbA_q)}_{\infty}$ and $\norm{G(\mathcal{A})}_{F}^{\max}= \max_{\mbA_q\in \mathcal{A},\mu(f_q,\mbA_q)>0}\norm{G(\mbA_q)}_{F}$.
\end{corollary}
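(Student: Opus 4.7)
The plan is to strengthen the per-sample unit bound in Lemma~\ref{lem: GCN case} by exploiting the additional boundedness assumption on $\varphi$, then feed the improved bound into Theorem~\ref{thm: main unit upper} exactly as in the proof of Proposition~\ref{prop: GCN result}. The point is that the existing proof of Lemma~\ref{lem: GCN case} uses only Lipschitz continuity of $\varphi$ together with $\varphi(0)=0$, yielding a term proportional to $L_\varphi \|W_{1,c}\|_2 B_f G_{\max}$; when $\varphi$ is also uniformly bounded by $M_\varphi$, a second, simpler bound becomes available and we can take the smaller of the two.

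First I would observe that, under the assumption $\sup|\varphi|\leq M_\varphi$, the definition of the GCN neuron unit gives
\[
|\phi_c(W_c,G(\mathbf{A}_q)[j,:]\mathbf{F}_q)|
= |W_{2,c}\,\varphi(G(\mathbf{A}_q)[j,:]\mathbf{F}_q\cdot W_{1,c})|
\leq w_{2,c}\, M_\varphi
\]
uniformly in $j$, $W_c\in\mathcal{W}_c$, $\mathbf{F}_q\in\mathcal{F}$, and $\mathbf{A}_q\in\mathcal{A}$. Summing over the $N$ nodes of a graph yields the Lipschitz-free bound
\[
\sum_{j=1}^N |\phi_c(W_c,G(\mathbf{A}_q)[j,:]\mathbf{F}_q)| \leq N\, w_{2,c}\, M_\varphi.
\]
Combining this with the inequality from Lemma~\ref{lem: GCN case} gives
\[
\sum_{j=1}^N |\phi_c(W_c,G(\mathbf{A}_q)[j,:]\mathbf{F}_q)| \leq N\, w_{2,c}\,\min\bigl(M_\varphi,\; L_\varphi \|W_{1,c}\|_2 B_f G_{\max}\bigr).
\]

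Finally, I would insert this refined estimate into Theorem~\ref{thm: main unit upper} under the mean-readout function, so that $L_\psi=1/N$ cancels the factor $N_{\max}$ arising from the node-wise sum, reproducing step-for-step the argument used for Proposition~\ref{prop: GCN result}. This yields the advertised bound with $M_c=w_{2,c}\min(M_\varphi,\,L_\varphi\|W_{1,c}\|_2 G_{\max} B_f)$ and, in particular, removes the $G_{\max}^2$ factor whenever the boundedness route is the tighter one. There is essentially no obstacle here: the only conceptual step is recognising that, for activations that are both Lipschitz and bounded (such as Tanh or Sigmoid), one can derive two complementary bounds on $|\phi_c|$ and should retain the minimum; all remaining machinery is already in place via Lemma~\ref{lem: GCN case} and Theorem~\ref{thm: main unit upper}.
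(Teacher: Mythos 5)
Your proposal is correct and matches the paper's (implicit) argument exactly: the paper obtains Corollary~\ref{Cor: GCN} by noting that a bounded activation gives the alternative per-node estimate $|\phi_c|\leq w_{2,c}M_\varphi$, taking the minimum with the Lipschitz-based bound of Lemma~\ref{lem: GCN case}, and feeding the result into Theorem~\ref{thm: main unit upper} with $L_\psi=1/N$ as in Proposition~\ref{prop: GCN result}. No gaps.
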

\subsection{MPGNN}
\begin{replemma}{lem: bound unit mpgnn}\textbf{(restated)}
Let Assumptions~\ref{Ass: Feature node bounded} and \ref{ass: bound unit MPGNN} hold. For a graph sample $(\mbA_q,\mbF_q)$ and a graph filter $G(\cdot)$, the following upper bound holds on the summation of MPU units over all nodes:
     \begin{equation}
       \sum_{j=1}^N |\phi_p(W_p,G(\mbA_q)[j,:]\mbF_q)|\leq w_{2,p} L_{\kappa}B_f(\norm{W_{3,p,m}}_2 +L_\rho L_{\zeta}G_{\max}\norm{W_{1,p,m}}_2).
    \end{equation}
\end{replemma}
\begin{proof}
 Recall that $\norm{G(\mbA_q)}_{\infty}=\max_{j} \sum_{i=1}^N |G(\mbA_q)[j,i]|$, $\norm{G(\mbA_q)}_{F}=\sqrt{\sum_{j=1}^N \sum_{i=1}^N (G(\mbA_q)[j,i])^2}$, $\norm{W_{1,p,m}}_2=\sup_{W_{1,c}\in\mathcal{S}^k}\norm{W_{1,p}}_2$ and $\norm{W_{3,p,m}}_2=\sup_{W_{1,c}\in\mathcal{S}^k}\norm{W_{3,p}}_2$. Then, we have,
    \begin{align*}
        \sum_{j=1}^N|\phi_p(W_p,G(\mbA_q)[j,:]\mbF_q)|&\leq
        \sup_{
        W_{2,p}\in \mathcal{S}}|W_{2,p}|L_{\kappa}\Big(\norm{F_q[j,:]W_{3,p}+\rho(G(\mbA_q)[j,:]\zeta(\mbF_q))W_{1,p}}_2\Big)\\
        &\leq \sum_{j=1}^N\sup_{\substack{W_{3,p},W_{1,p}\in \mathcal{S}^k}} w_{2,p} L_{\kappa}\Big(\norm{F_q[j,:]W_{3,p}}_2+\norm{\rho(G(\mbA_q)[j,:]\zeta(\mbF_q))W_{1,p}}_2\Big)\\
        &\leq  \sum_{j=1}^N w_{2,p} L_{\kappa}\Big(\norm{F_q[j,:]}_2\norm{W_{3,p,m}}_2+\norm{\rho(G(\mbA_q)[j,:]\zeta(\mbF_q))}_2\norm{W_{1,p,m}}_2\Big)
        \\&\leq  \sum_{j=1}^N w_{2,p} L_{\kappa}\Big(\norm{F_q[j,:]}_2\norm{W_{3,p,m}}_2+L_{\rho}L_{\zeta}\norm{G(\mbA_q)[j,:]\mbF_q}_2\norm{W_{1,p,m}}_2\Big)
        \\
        &\leq N w_{2,p} L_{\kappa}B_f(\norm{W_{3,p,m}}_2 +L_\rho L_{\zeta}\norm{G(\mbA_q)}_{\infty}\norm{W_{1,p,m}}_2), 
    \end{align*}
    Similar to Lemma~\ref{lem: GCN case}, we can show that,
        \begin{align*}
        \sum_{j=1}^N|\phi_p(W_p,G(\mbA_q)[j,:]\mbF_q)|&\leq
        N w_{2,p} L_{\kappa}B_f(\norm{W_{3,p,m}}_2 +L_\rho L_{\zeta}\norm{G(\mbA_q)}_{F}\norm{W_{1,p,m}}_2).
    \end{align*}

\end{proof}

\begin{repproposition}{Prop: MPGNN bound}\textbf{(restated)}
 In an MPGNN with the mean-readout function, under the combined assumptions for Theorem~\ref{thm: main unit upper} and Lemma~\ref{lem: bound unit mpgnn}, the following upper bound holds on the generalization error of the Gibbs measure $\mrm^{\alpha,p}(\mun)$,
\begin{equation*}
 \genb(\mrm^{\alpha,p}(\mun),\mu) 
     \leq \frac{\alpha M_{p}^2 M_{\ell'}^2}{n} \,,
\end{equation*}
with $M_{p}=w_{2,p} L_{\kappa}B_f(\norm{W_{3,p}}_2 +G_{\max} L_\rho L_{\zeta}\norm{W_{1,p}}_2)$, $G_{\max}=\min\big(\norm{G(\mathcal{A})}_{\infty}^{\max},\norm{G(\mathcal{A})}_{F}^{\max}\big)$, $\norm{G(\mathcal{A})}_{\infty}^{\max}= \max_{\mbA_q\in \mathcal{A},\mu(f_q,\mbA_q)>0}\norm{G(\mbA_q)}_{\infty}$ and $\norm{G(\mathcal{A})}_{F}^{\max}= \max_{\mbA_q\in \mathcal{A},\mu(f_q,\mbA_q)>0}\norm{G(\mbA_q)}_{F}$.
\end{repproposition}
\begin{proof}
The result follows directly by combining Lemma~\ref{lem: bound unit mpgnn} with Theorem~\ref{thm: main unit upper} and assuming the mean-readout function. \end{proof}
For the sum-readout function, similar to Corollary~\ref{Cor: sum-read and GCN}, we have,
\begin{corollary}[MPGNN and sum-readout]\label{Cor: MPGNN and sum}
 In an MPGNN with the sum-readout function, under the combined assumptions for Theorem~\ref{thm: main unit upper} and Lemma~\ref{lem: bound unit mpgnn}, the following upper bound holds on the generalization error of the Gibbs measure $\mrm^{\alpha,p}(\mun)$,
\begin{equation*}
 \genb(\mrm^{\alpha,p}(\mun),\mu) 
     \leq \frac{\alpha M_{p}^2 M_{\ell'}^2}{n} \,.
\end{equation*}
with $M_{p}=w_{2,p} N_{\max} L_{\kappa}B_f(\norm{W_{3,p}}_2 +G_{\max} L_\rho L_{\zeta}\norm{W_{1,p}}_2)$, $G_{\max}=\min\big(\norm{G(\mathcal{A})}_{\infty}^{\max},\norm{G(\mathcal{A})}_{F}^{\max}\big)$, $\norm{G(\mathcal{A})}_{\infty}^{\max}= \max_{\mbA_q\in \mathcal{A},\mu(f_q,\mbA_q)>0}\norm{G(\mbA_q)}_{\infty}$ and $\norm{G(\mathcal{A})}_{F}^{\max}= \max_{\mbA_q\in \mathcal{A},\mu(f_q,\mbA_q)>0}\norm{G(\mbA_q)}_{F}$.
\end{corollary}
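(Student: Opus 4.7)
The plan is to mirror the proof of Proposition~\ref{Prop: MPGNN bound} almost verbatim, with the only substantive change being the value of the Lipschitz constant $L_\psi$ of the readout function. Indeed, the corollary is exactly the sum-readout analogue of Proposition~\ref{Prop: MPGNN bound}, so the hardest conceptual step is tracking how the factor $N_{\max}$ enters.

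First, I would verify that the sum-readout $\psi(x)=x$ satisfies Assumption~\ref{ass: readout function} with $L_\psi = 1$ and $\psi(0)=0$. This is the key distinction from the mean-readout case, where $L_\psi = 1/N$ cancels against the $N_{\max}$ appearing in Theorem~\ref{thm: main unit upper}; under sum-readout, this cancellation does not occur, and the $N_{\max}$ must be carried through explicitly into $M_p$.

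Second, I would invoke Lemma~\ref{lem: bound unit mpgnn} to verify Assumption~\ref{ass: bounded Unit function} for the MPU unit. The proof of that lemma (see App.~\ref{app: proofs of main section}) actually produces the per-node bound
\[
|\phi_p(W_p, G(\mbA_q)[j,:]\mbF_q)| \leq w_{2,p} L_{\kappa} B_f\bigl(\norm{W_{3,p}}_2 + L_\rho L_\zeta G_{\max} \norm{W_{1,p}}_2\bigr),
\]
so we may take $M_\phi = w_{2,p} L_{\kappa} B_f(\norm{W_{3,p}}_2 + L_\rho L_\zeta G_{\max} \norm{W_{1,p}}_2)$ as the uniform bound required by Theorem~\ref{thm: main unit upper}.

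Third, I would apply Theorem~\ref{thm: main unit upper} directly with $L_\psi = 1$ and the above $M_\phi$, obtaining
\[
\genb(\mrm^{\alpha,p}(\mun),\mu) \leq \frac{\alpha (M_{\ell'} M_\phi L_\psi N_{\max})^2}{n} = \frac{\alpha (M_{\ell'} N_{\max} M_\phi)^2}{n}.
\]
Identifying $M_p = N_{\max} M_\phi = N_{\max} w_{2,p} L_{\kappa} B_f(\norm{W_{3,p}}_2 + L_\rho L_\zeta G_{\max} \norm{W_{1,p}}_2)$ yields the stated inequality. There is essentially no obstacle here: all of the heavy lifting (the symmetrized-KL lower bound, the Donsker/Pinsker upper bound, and the boundedness of the MPU output) has already been performed, so the corollary reduces to a direct substitution of constants.
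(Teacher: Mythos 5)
Your proposal is correct and follows essentially the same route as the paper, which proves this corollary by the one-line observation that it follows from combining Lemma~\ref{lem: bound unit mpgnn} with Theorem~\ref{thm: main unit upper} using $L_\psi=1$ for the sum-readout (mirroring Corollary~\ref{Cor: sum-read and GCN}). Your additional care in extracting the per-node bound $M_\phi$ from the proof of Lemma~\ref{lem: bound unit mpgnn} (rather than the summed bound in its statement) and then identifying $M_p = N_{\max} M_\phi$ is exactly the substitution the paper intends.
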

In a similar approach to the proof in Corollary~\ref{Cor: GCN}, we can derive an upper bound on the generalization error of MPGNN based on the upper bound on the $\kappa(.)$ function.
\begin{corollary}\label{cor: mpgnn}
    Let us assume the same assumptions in Proposition~\ref{prop: GCN result} and the bounded $\kappa(.)$ function, i.e., $\sup_{x\in\mbR}|\kappa(x)|\leq M_{\kappa}\,,$ in MPGNN. Then the following upper bound holds on the generalization error of the Gibbs measure, $\mrm^{\alpha,p}(\mun)$, 
\begin{equation*}
 \genb(\mrm^{\alpha,p}(\mun),\mu) 
     \leq \frac{\alpha M_{p}^2 M_{\ell'}^2}{n} \,,
\end{equation*}
with $M_{p}=w_{2,p}\min\big( M_{\kappa},L_{\kappa}B_f(\norm{W_{3,p}}_2 +G_{\max} L_\rho L_{\zeta}\norm{W_{1,p}}_2)\big)$, $G_{\max}=\min\big(\norm{G(\mathcal{A})}_{\infty}^{\max},\norm{G(\mathcal{A})}_{F}^{\max}\big)$, $\norm{G(\mathcal{A})}_{\infty}^{\max}= \max_{\mbA_q\in \mathcal{A},\mu(f_q,\mbA_q)>0}\norm{G(\mbA_q)}_{\infty}$ and $\norm{G(\mathcal{A})}_{F}^{\max}= \max_{\mbA_q\in \mathcal{A},\mu(f_q,\mbA_q)>0}\norm{G(\mbA_q)}_{F}$.
\end{corollary}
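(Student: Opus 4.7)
The plan is to mirror the argument used for Corollary~\ref{Cor: GCN} (the GCN analogue with a bounded activation), transposed to the MPU unit, and then invoke Theorem~\ref{thm: main unit upper} with the mean-readout function. The only new ingredient over Proposition~\ref{Prop: MPGNN bound} is the hypothesis $\sup_{x\in\mbR}|\kappa(x)|\leq M_\kappa$, which furnishes a second, competing pointwise bound on $\phi_p$ that can be taken in parallel with the Lipschitz-based bound already derived in Lemma~\ref{lem: bound unit mpgnn}.

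First, I would derive two uniform upper bounds on $|\phi_p(W_p,G(\mbA_q)[j,:]\mbF_q)|$. The first is the direct consequence of the new hypothesis: since $\phi_p(W_p,\cdot)=W_{2,p}\,\kappa(\cdots)$ with $|W_{2,p}|\leq w_{2,p}$, we obtain
\begin{equation*}
|\phi_p(W_p,G(\mbA_q)[j,:]\mbF_q)|\;\leq\; w_{2,p}M_\kappa.
\end{equation*}
The second is exactly the chain of inequalities used in Lemma~\ref{lem: bound unit mpgnn}, which (before summing over nodes) gives the pointwise estimate
\begin{equation*}
|\phi_p(W_p,G(\mbA_q)[j,:]\mbF_q)|\;\leq\; w_{2,p}L_\kappa B_f\bigl(\norm{W_{3,p,m}}_2+L_\rho L_\zeta\norm{G(\mbA_q)[j,:]\zeta(\mbF_q)}_2\norm{W_{1,p,m}}_2/B_f\bigr),
\end{equation*}
which after summing $j\in[N]$ and estimating either $\sum_j\norm{G(\mbA_q)[j,:]\mbF_q}_2$ by $\norm{G(\mbA_q)}_\infty$ or $\norm{G(\mbA_q)}_F$ (whichever is smaller, i.e.\ $G_{\max}$) and multiplying by $L_\rho L_\zeta$ recovers exactly the quantity $w_{2,p}L_\kappa B_f(\norm{W_{3,p,m}}_2+G_{\max}L_\rho L_\zeta\norm{W_{1,p,m}}_2)$ per node.

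Second, taking the minimum of the two estimates node-by-node yields the refined bound
\begin{equation*}
\sum_{j=1}^N|\phi_p(W_p,G(\mbA_q)[j,:]\mbF_q)|\;\leq\; N\,w_{2,p}\,\min\!\bigl(M_\kappa,\;L_\kappa B_f(\norm{W_{3,p}}_2+G_{\max}L_\rho L_\zeta\norm{W_{1,p}}_2)\bigr)\;=\;N M_p/w_{2,p}\cdot w_{2,p}.
\end{equation*}
Equivalently, $|\phi_p|$ is bounded uniformly by $M_\phi:=M_p$ (the parameter-space suprema having been absorbed into $M_p$), so Assumption~\ref{ass: bounded Unit function} holds with this tightened constant.

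Finally, I would substitute $M_\phi=M_p$ into Theorem~\ref{thm: main unit upper} specialised to the mean-readout, for which $L_\psi N_{\max}=1$. The constant becomes $C=(M_{\ell'}M_p L_\psi N_{\max})^2=M_{\ell'}^2 M_p^2$, and the theorem delivers
\begin{equation*}
\genb(\mrm^{\alpha,p}(\mun),\mu)\;\leq\;\frac{\alpha\,M_p^2\,M_{\ell'}^2}{n},
\end{equation*}
as required. No obstacle arises: the argument is purely a bookkeeping refinement that exploits the extra bounded-$\kappa$ hypothesis to sharpen the pointwise MPU bound; all analytic work (Propositions~\ref{Prop: KL bound} and~\ref{Prop: lower bound}, and the Lipschitz-based MPU estimate) has already been done. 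The only point to check carefully is that the minimum is taken before summation so that the outside factor of $N$ correctly cancels against $L_\psi=1/N$, which is immediate since both competing bounds are pointwise in $j$.
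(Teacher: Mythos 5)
Your proposal is correct and follows essentially the same route as the paper, which proves this corollary only by reference to the analogous bounded-activation argument for GCN (Corollary~\ref{Cor: GCN}): combine the trivial bound $|\phi_p|\leq w_{2,p}M_\kappa$ from the bounded-$\kappa$ hypothesis with the Lipschitz-based bound of Lemma~\ref{lem: bound unit mpgnn}, take the minimum to obtain the sharpened unit bound $M_\phi=M_p$, and substitute into Theorem~\ref{thm: main unit upper} with the mean-readout convention $L_\psi N_{\max}=1$. Your explicit check that the minimum can be taken pointwise in $j$ before summation (so that $\sum_j\min(a_j,b_j)\leq\min(\sum_j a_j,\sum_j b_j)$) fills in the only detail the paper leaves implicit.
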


\subsection{Details of Remark~\ref{remark: graph-filter and infinite norm} and Remark~\ref{remark: graph filter and frobenius norm} }

Using Lemma~\ref{lem: bound on inf norm of L}, we have $\norm{G(\mathcal{A})}_{\infty}^{\max}\leq \sqrt{(d_{\max}+1)/(d_{\min}+1)}$. Similar to Lemma~\ref{lem: bound on inf norm of L}, for sum-aggregation $G(\mbA)=\mbA+I$ we have $\norm{G(\mathcal{A})}_{\infty}^{\max}\leq d_{\max}+1$, and for random-walk $G(\mbA)=D^{-1}\mbA+I$ we have $\norm{G(\mathcal{A})}_{\infty}^{\max}= 2$.

Regarding to $\norm{G(\tilde{L})}_{F}^{\max}$, using Lemma~\ref{lemma: bound on F-norm}, we have $\norm{G(\tilde{L})}_{F}^{\max}\leq \sqrt{R_{\max}(\tilde{L})}\norm{G(\tilde{L})}_{2}^{max}$. Then, via Lemma~\ref{lemma: bound on second norm}, we have $\norm{G(\tilde{L})}_{2}=1$. Similarly, for the random walk graph filter $G(\mbA)=\tilde{D}^{-1}\mbA+I$ we have $\norm{G(\mathcal{A})}_{F}^{\max}\leq \norm{G(\tilde{D}^{-1}\mbA+I)}_{2}^{max}\sqrt{R_{\max}(\tilde{D}^{-1}\mbA+I)}$ and $\norm{G(\tilde{D}^{-1}\mbA+I)}_{2}^{max}=2$.

\section{Generalization Error Upper Bound via Rademacher Complexities}\label{app: rademacher}
Inspired by the Rademacher Complexity analysis in \cite{nishikawatwo} and \cite{nitanda2022particle}, we provide an upper bound on the generalization error of an over-parameterized one-hidden generic GNN model via Rademacher complexity analysis.

For Rademacher complexity analysis, we 
define the following hypothesis set for generic GNN functions based on the mean-field regime characterized by KL divergence between $(m,\pi)$,
\begin{equation}\label{eq:fklh} \begin{split}
    &\mathcal{F}_{\KLr}(H)\triangleq\Big\{\sum_{j=1}^N\mathbb{E}_{W\sim m}\Big[\phi\left(W,G(\mbA_q)[j,:]\mbF_q\right)\Big]: \KLr(m\|\pi)\leq H\Big\}.
    \end{split}
\end{equation}
To establish an upper bound on the generalization error of the generic GNN, we first use the following lemma to bound the Rademacher complexity of hypothesis set $\mathcal{F}_{\KLr}(H)$.

\begin{lemma}{\citep[Based on Lemma~5.5]{chen2020generalized}}\label{Lem: Rad complex}
    Under Assumption~\ref{ass: bounded Unit function}, the following bound holds on the empirical Rademacher complexity of the hypothesis set $\mathcal{F}_{\KLr}(H)$,
    \begin{equation}
        \hat{\mathfrak{R}}_{\mbZn}(\mathcal{F}_{\KLr}(H))\leq N M_{\phi}\sqrt{\frac{2H}{n}}\,.
    \end{equation}
\end{lemma}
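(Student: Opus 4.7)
The plan is to rewrite the empirical Rademacher complexity in a form to which the Donsker--Varadhan variational representation of the KL divergence can be applied, and then to control the resulting moment generating function via a Hoeffding-type bound using Assumption~\ref{ass: bounded Unit function}. Concretely, since every $f \in \mathcal{F}_{\KLr}(H)$ is indexed by a measure $m$ with $\KLr(m\|\pi) \leq H$, I would first write
\[
\hat{\mathfrak{R}}_{\mbZn}(\mathcal{F}_{\KLr}(H)) = \mathbb{E}_{\pmb{\sigma}}\left[\sup_{m:\KLr(m\|\pi)\leq H} \mathbb{E}_{W\sim m}[g_{\pmb{\sigma}}(W)]\right],
\]
where $g_{\pmb{\sigma}}(W) := \tfrac{1}{n}\sum_{i=1}^n \sigma_i \sum_{j=1}^{N}\phi(W,G(\mbA_i)[j,:]\mbF_i)$, using linearity of the inner expectation over $W\sim m$ to pull the Rademacher combination inside.

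Next, by the Donsker--Varadhan representation (Lemma~\ref{lemma: donsker}), for any $\lambda > 0$ and any admissible $m$,
\[
\mathbb{E}_{W\sim m}[g_{\pmb{\sigma}}(W)] \leq \frac{1}{\lambda}\Big( \KLr(m\|\pi) + \log \mathbb{E}_{W\sim\pi}\big[e^{\lambda g_{\pmb{\sigma}}(W)}\big]\Big) \leq \frac{H}{\lambda} + \frac{1}{\lambda}\log \mathbb{E}_{W\sim\pi}\big[e^{\lambda g_{\pmb{\sigma}}(W)}\big].
\]
Taking the expectation over $\pmb{\sigma}$ and applying Jensen's inequality to pass the expectation inside the logarithm, it remains to bound $\mathbb{E}_{\pmb{\sigma}}[e^{\lambda g_{\pmb{\sigma}}(W)}]$ uniformly in $W$. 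Here the key step is to observe that for fixed $W$, the summands across $i$ are independent in $\sigma_i$, and Assumption~\ref{ass: bounded Unit function} gives $|\sum_{j=1}^{N}\phi(W,G(\mbA_i)[j,:]\mbF_i)| \leq N M_\phi$. Applying the Hoeffding lemma (Lemma~\ref{lemma: hoeffding}) to each Rademacher factor yields
\[
\mathbb{E}_{\pmb{\sigma}}\!\left[e^{\lambda g_{\pmb{\sigma}}(W)}\right] \leq \exp\!\left(\frac{\lambda^2 N^2 M_\phi^2}{2n}\right).
\]

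Combining these pieces gives
\[
\hat{\mathfrak{R}}_{\mbZn}(\mathcal{F}_{\KLr}(H)) \leq \frac{H}{\lambda} + \frac{\lambda N^2 M_\phi^2}{2n},
\]
and optimizing over $\lambda>0$ (the minimum occurs at $\lambda^\star = \sqrt{2nH}/(NM_\phi)$) delivers the claimed bound $NM_\phi\sqrt{2H/n}$. The main obstacle is really just the variational manipulation: making sure the sup over $m$ is correctly exchanged with the Rademacher expectation via Donsker--Varadhan and that the Hoeffding bound is applied to the right product structure in $\pmb{\sigma}$; the rest of the argument is a routine $\lambda$-optimization.
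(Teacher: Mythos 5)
Your proposal is correct and follows essentially the same route as the paper's proof: rewrite each hypothesis as $\mathbb{E}_{W\sim m}[\phi_T(W,\mbX_i)]$ with $\phi_T(W,\mbX_i)=\sum_{j=1}^N\phi(W,G(\mbA_i)[j,:]\mbF_i)$, apply the Donsker--Varadhan representation to trade the supremum over $m$ for the budget $H$ plus a log-moment-generating function under $\pi$, push $\bbE_{\pmb\sigma}$ inside the logarithm by Jensen, bound the Rademacher moment generating function by $\exp(\lambda^2N^2M_\phi^2/(2n))$ via Hoeffding and $|\phi_T|\leq NM_\phi$, and optimize over $\lambda$. The optimization and the resulting constant match the paper exactly.
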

\begin{proof}
Without loss of generality, we denote $\phi_T(W,\mbX_i):=\sum_{j=1}^N\phi\left(W,G(\mbA_i)[j,:]\mbF_i\right)$. 
Then, with \eqref{eq:fklh}, 
\begin{align*}
    \mathcal{F}_{\KLr}(H)
    &=\Big\{\mathbb{E}_{W\sim m}\Big[\phi_T(W,\mbX_q)\Big]: \KLr(m\|\pi)\leq H\Big\}.
\end{align*}
    From Donsker’s representation of KL for the definition of $\rademacher_{\mbZn}(\mathcal{F}_{\KLr}(H))$ and considering a constant  $\lambda>0$, we have
    \begin{align*}
        \rademacher_{\mbZn}(\mathcal{F}_{\KLr}(H))&=\frac{1}{\lambda}\bbE_{\sigma}\Bigg[\sup_{\substack{m:\KLr(m\|\pi)\leq H,\\ m<<\pi}}\frac{\lambda}{n}\sum_{i=1}^n\sigma_i\mathbb{E}_{W\sim m}\phi_T(W,\mbX_i)\Bigg]\\
         &\leq \frac{1}{\lambda}\bbE_{\sigma}\Bigg[\sup_{\substack{m:\KLr(m\|\pi)\leq H,\\ m<<\pi}}\Bigg\{\KLr(m\|\pi)
         +\log(\bbE_{W\sim \pi}\exp(\frac{\lambda}{n} \sum_{i=1}^n \sigma_i\phi_T(W,\mbX_i)))\Bigg\}\Bigg]\\
        &\leq \frac{1}{\lambda}\bbE_{\sigma}\Bigg[H+\log(\bbE_{W\sim \pi} \exp(\frac{\lambda}{n} \sum_{i=1}^n \sigma_i\phi_T(W,\mbX_i)))\Bigg]\\
        &\leq \frac{1}{\lambda}\Bigg[H+\log(\bbE_{W\sim \pi} \bbE_{\sigma}\exp(\frac{\lambda}{n} \sum_{i=1}^n \sigma_i\phi_T(W,\mbX_i)))\Bigg].
    \end{align*}
    Here we applied Jensen's inequality in the last line.
    Note that, $\{\sigma_i\}_{i=1}^n$ are i.i.d. Rademacher random variables. Then, from the Hoeffding inequality with respect to Rademacher random variables, we have
    \begin{align*}
     \bbE_{\sigma}\exp(\frac{\lambda}{n} \sum_{i=1}^n \sigma_i\phi_T(W,\mbX_i))\leq \exp\Bigg\{\frac{\lambda^2}{2n^2} \sum_{i=1}^n \phi_T^2(W,\mbX_i)\Bigg\}.
    \end{align*}
    Note that, we have $\phi_T(W,\mbX_i)\leq NM_{\phi}$. Therefore, we have,
    \begin{align*}
        \rademacher_{\mbZn}(\mathcal{F}_{\KLr}(H))&\leq \frac{1}{\lambda}\Big(H+\frac{N^2M_\phi^2\lambda^2}{2n}\Big)\\
        &= \frac{H}{\lambda}+\frac{N^2M_\phi^2\lambda}{2n},
    \end{align*}
    which is minimized at $\lambda=\sqrt{\frac{2nH}{N^2M_\phi^2}}$, yielding
    \[\rademacher_{\mbZn}(\mathcal{F}_{\KLr}(H))\leq N M_\phi\sqrt{\frac{2H}{n}}.\]
\end{proof}
\begin{remark}[Comparison with Lemma~5.5 in \cite{chen2020generalized}]
In \citep[Lemma~5.5]{chen2020generalized}, the authors assume a Gaussian prior $\pi$. However, in Lemma~\ref{Lem: Rad complex}, we do not assume a  Gaussian prior. Instead, we assume a  bounded unit output.   
\end{remark}

Note that, in \citet{nishikawatwo} and \citet{nitanda2022particle}, it is assumed that there exists a ``true'' distribution $m_{\mathrm{true}}\in \mathcal{P}(\mathcal{W})$  which satisfies $\ell(\Psi(m_{\mathrm{true}},\mbX_i),y_i)=0$ for all $(\mbX_i,y_i)\in\mathcal{Z}$ where $\mu(\mbX_i,y_i)>0$ and therefore we have $\mathrm{R}(m_{\mathrm{true}},\mun)=0$. In addition, it is assumed that the KL-divergence between the true distribution and the prior distribution is finite, i.e., $\KLr(m_{\mathrm{true}}\|\pi)< \infty$. Due to the fact that the Gibbs measure is the minimizer of \eqref{Eq: regularized risk}, $\mathcal{V}^{\alpha}(m,\mun)$, we have
\begin{align}
    \frac{1}{\alpha} \KLr(\mrm^\alpha(\mun)\|\pi)&\leq \mathrm{R}(\mrm^\alpha(\mun),\mun)+ \frac{1}{\alpha} \KLr(\mrm^\alpha(\mun)\|\pi)\\
    &\leq \mathrm{R}(m_{\mathrm{true}},\mun)+ \frac{1}{\alpha} \KLr(m_{\mathrm{true}}\|\pi)\\
    &\leq \frac{1}{\alpha} \KLr(m_{\mathrm{true}}\|\pi).
\end{align}
Therefore, the value of $H$ is estimated by $\KLr(m_{\mathrm{true}}\|\pi)$ in \citet{nishikawatwo} and \citet{nitanda2022particle}. However, $m_{\mathrm{true}}$ is unknown and cannot be computed.

Considering the Gibbs measure, we provide the following proposition to estimate the value of $H$ for the hypothesis set $\mathcal{F}_{\KLr}(H)$ in terms of known parameters of problem formulation. This is our main theoretical contribution in the area of Rademacher Complexity analysis; it is related to results by
\cite{chen2020generalized,nishikawatwo,nitanda2022particle}.
\begin{repproposition}{prop: bound on sum}[Upper bound on the symmetrized KL divergence]
    Under Assumptions~\ref{Ass: on loss NN}, \ref{ass: bounded Unit function}, and \ref{ass: readout function}, the following upper bound holds on the symmetrized KL divergence between the Gibbs measure $m^\alpha(\mun)$ and the prior measure $\pi$,
    \begin{equation*}
        \KLr_{\mathrm{sym}}(\mrm^\alpha(\mun)\|\pi)\leq 2 N M_{\phi} M_{\ell'}L_{\psi}\alpha.
    \end{equation*}
\end{repproposition}
\begin{proof}
The functional derivative of the empirical risk of a generic GNN concerning a measure $m$ is
\[ \frac{\delta \mathrm{R}(\mrm^\alpha(\mun), \mun,w)}{\delta m} =\frac{1}{n}\sum_{i=1}^n \partial_{\hat y}\ell\big(\bbE_{W\sim \mrm^\alpha(\mun)}[\Phi(W,\mbx_i)],y_i\big)\Phi(w,\mbx_i). \]
Note that $\frac{\delta \mathrm{R}(\mrm^\alpha(\mun), \mun,w)}{\delta m}$ is $N M_\phi M_{\ell'}L_{\psi}$-Lipschitz with respect to $w$ under 
the metric $d(w,w') = \mathds{1}_{w\neq w'}$. Recall that 
\begin{equation*}
    \mrm^\alpha(\mun)=\frac{\pi  }{S_{\alpha,\pi}(\mu_n)}\exp\Big\{-\alpha\frac{\delta \mathrm{R}(\mrm^\alpha(\mun), \mun,w)}{\delta m}\Big\}.
\end{equation*}
We can compute the symmetrized KL divergence as follows:
\begin{align*}
        \KLr_{\mathrm{sym}}(\mrm^\alpha(\mun)\|\pi)&=\KLr(\mrm^\alpha(\mun)\|\pi)+\KLr(\pi \| \mrm^\alpha(\mun))\\
        &= \bbE_{\mrm^\alpha(\mun)}\Big[\log(\frac{\mrm^\alpha(\mun)}{\pi})\Big]+\bbE_{\pi}\Big[\log(\frac{\pi}{\mrm^\alpha(\mun)})\Big]\\
        &=\bbE_{\pi}[\log(S_{\alpha,\pi})]-\bbE_{\mrm^\alpha(\mun)}[\log(S_{\alpha,\pi})]\\
        &\quad+\alpha\bbE_{\mrm^\alpha(\mun)}\Bigg[\frac{\delta \mathrm{R}(\mrm^\alpha(\mun), \mun,w)}{\delta m} \Bigg]-\alpha\bbE_{\pi}\Bigg[\frac{\delta \mathrm{R}(\mrm^\alpha(\mun), \mun,w)}{\delta m} \Bigg]\\
        &\leq \alpha\Bigg|\bbE_{\mrm^\alpha(\mun)}\Bigg[\frac{\delta \mathrm{R}(\mrm^\alpha(\mun), \mun,w)}{\delta m} \Bigg]-\bbE_{\pi}\Bigg[\frac{\delta \mathrm{R}(\mrm^\alpha(\mun), \mun,w)}{\delta m} \Bigg]\Bigg|
        \\&\leq 2 \alpha L_R  \mathbb{TV}(\mrm^\alpha(\mun),\pi),
    \end{align*}
    where $L_R=N M_\phi M_{\ell'}L_{\psi}$. The last and second to the last inequalities follow from the total variation distance representation, \eqref{Eq: tv rep}, and the fact that 
    $$\bbE_{\pi}[\log(S_{\alpha,\pi})]-\bbE_{\mrm^\alpha(\mun)}[\log(S_{\alpha,\pi})]=0\,.$$
    Using that $\mathbb{TV}(\mrm^\alpha(\mun),\pi)\leq 1$ completes the proof.
\end{proof}

Combining Lemma~\ref{Lem: Rad complex}, Proposition~\ref{prop: bound on sum}, Lemma~\ref{lem:contraction} (the Uniform bound), and Lemma~\ref{lem:contraction} (Talagrand’s contraction), we 
now derive the following upper bound on the generalization error for a generic GNN in the mean-field regime.
\begin{reptheorem}{thm: main unit upper rademacher}[Generalization error upper bound via Rademacher complexity]
    Let Assumptions~\ref{Ass: on loss NN}, \ref{ass: bounded Unit function}, \ref{ass: readout function}, and \ref{Ass: on loss NN bounded} hold. Then, for any $\delta\in(0,1)$, with probability at least $1-\delta$, the following upper bound holds on the generalization error of the Gibbs measure, i.e., $\mrm^\alpha(\mun)$, under the distribution of $P_{\mbZn}$,
\begin{equation*}
\begin{split}
 \mathrm{gen}(\mrm^\alpha(\mun),\mu) 
     &\leq  
  4 N_{\max} M_{\phi}M_{\ell'}L_{\psi} \sqrt{\frac{ N_{\max} M_{\phi} M_{\ell'}L_{\psi}\alpha}{n}}+3M_\ell\sqrt{\frac{\log(2/\delta)}{2n}}\,.
  \end{split}
\end{equation*}
\end{reptheorem}
\begin{proof}

From the uniform bound (Lemma~\ref{lem:uniform_bound}) and Talagrand's contraction lemma~(Lemma~\ref{lem:contraction}), we have for any $\delta \in (0,1)$
\begin{align}\label{eq: bound uniform 1}
 \mathrm{gen}(\mrm^\alpha(\mun),\mu) 
     &\leq  2M_{\ell'}L_{\psi}\rademacher_{\mbZn}(\mathcal{F}_{\KLr}(H))+3M_\ell \sqrt{\frac{1}{2n}\log(\frac{2}{\delta})}.
     \end{align}
   Combining Lemma~\ref{Lem: Rad complex} with \eqref{eq: bound uniform 1} results in
   \begin{align}\label{eq: bound uniform 2}
 \mathrm{gen}(\mrm^\alpha(\mun),\mu) 
     \leq 2N_{\max}M_{\phi}M_{\ell'}L_{\psi}\sqrt{\frac{2H}{n}}+3M_\ell \sqrt{\frac{1}{2n}\log(\frac{2}{\delta})}.
   \end{align}
Next, we find a suitable value of $H$ to be used in the definition \eqref{eq:fklh} of $\mathcal{F}_{\KLr}(H)$. For that purpose, note that
\begin{equation}
   \begin{split}
   \KLr(\mrm^\alpha(\mun)\|\pi)&\leq \KLr(\mrm^\alpha(\mun)\|\pi) +\KLr(\mrm^\alpha(\mun)\|\pi)\\
   &\leq 2 N_{\max} M_{\phi} M_{\ell'}L_{\psi}\alpha \mathbb{TV}(\mrm^\alpha(\mun),\pi)\\
   &\leq 2 N_{\max} M_{\phi} M_{\ell'}L_{\psi}\alpha,
   \end{split} 
\end{equation}
where the last inequality follows from $\mathbb{TV}(\mrm^\alpha(\mun),\pi)\leq 1$. Therefore, we choose $H=2 N_{\max} M_{\phi} M_{\ell'}L_{\psi}\alpha$ in \eqref{eq: bound uniform 2}. This choice completes the proof.
\end{proof}
Note that the upper bound in Theorem~\ref{thm: main unit upper rademacher} can be combined with Lemma~\ref{lem: GCN case} and Lemma~\ref{lem: bound unit mpgnn} to provide upper bounds on the generalization error of a GCN and an MPGNN in the mean-field regime, respectively. In addition to the assumptions in Theorem~\ref{thm: main unit upper}, however, in Theorem~\ref{thm: main unit upper rademacher} we need an extra assumption (Assumption~\ref{Ass: on loss NN bounded}).

Similar results to Corollary~\ref{Cor: sum-read and GCN} and Corollary~\ref{Cor: MPGNN and sum} for Rademacher complexity upper bounds based on sum-readout function can be derived using similar approach. Also, similar to Corollary~\ref{Cor: GCN} and Corollary~\ref{cor: mpgnn}, we can derive Rademacher complexity upper bounds based on mean-readout function.

\section{More Discussion for Table~\ref{table:comparison}}\label{App: dis com}
We also examine the dependency of our bound for the maximum/minimum node degree of graph data set $(d_{\max},d_{\min})$ in Table~\ref{table:comparison}. In \citet{ju2023generalization}, the upper bound is dependent on the spectral norm ($L_2$ norm of the graph filter) which is independent of $(d_{\max},d_{\min})$. The upper bound proposed by \cite{maskey2022generalization}, is dependent on $D_{\mathcal{X}}$ and the dimension of the space of graphon instead of the maximum and minimum degree. Note that, our bounds also depend on the maximum rank of the adjacency matrix in the graph data set.

\section{Experiments}\label{App: experiments}

\subsection{Implementation Details} 
\paragraph{Hardware and setup.} Experiments were conducted on two compute nodes, each with 8 Nvidia Tesla T4 GPUs, 96 Intel Xeon Platinum 8259CL CPUs @ 2.50GHz and $378$GB RAM. With this setup, all experiments were completed within one day. Note that we have six data sets, each with seven values of the width of the hidden layer, $h,$ for two different supervised ratio values, $\beta_\text{sup},$ on three model types (GCN, GCN\_RW\footnote{GCN with random walk as aggregation method}, and MPGNN) with two different readout functions (mean and sum), for ten different random seeds. Therefore, there are a total of $6\times 7\times 2\times 3\times 2\times 10=5,040$ single runs in our empirical analysis. 

\paragraph{Code.} Implementation code is provided at \url{https://github.com/SherylHYX/GNN_MF_GE}. We thank the authors of \cite{liao2020pac} for kindly sharing their code with us.

\paragraph{Training.} We train 200 epochs for synthetic data sets and 50 epochs for PROTEINS. The batch size is 128 for all data sets.

\paragraph{Regularization.}
Inspired by \citet{chen2020generalized}, we propose the following regularized empirical risk minimization for GCN.
\begin{align} \mathrm{R}(m_h^c(\mun),\mun) = \frac{1}{n}\sum_{i=1}^n \ell\big(  \bbE[\Psi_c(m_h^c(\mun),\mbx_i)],y_i\big)+\frac{1}{h\alpha}\sum_{i=1}^h \frac{\norm{W_c[i,:]}_2^2}{2},
\end{align}
where $W_c[i,:]$ denotes the parameters of the $i-$th neuron unit. For MPGNN, we consider the following regularized empirical risk minimization:

\begin{align} \mathrm{R}(m_h^p(\mun),\mun) = \frac{1}{n}\sum_{i=1}^n \ell\big(  \Psi_p(m_h^p(\mun),\mbx_i),y_i\big)+\frac{1}{h\alpha}\sum_{i=1}^h\frac{\norm{W_p[i,:]}_2^2}{2},
\end{align}
where $W_p[i,:]$ is the parameters of $i-$th MPU unit.

\paragraph{Optimizer.} Taking the regularization term into account, we use Stochastic Gradient Descend (SGD) from PyTorch as the optimizer and $\ell_2$ regularization with weight decay $\frac{1}{h\alpha}$ to avoid overfitting, where $h$ is the width of the hidden layer, and $\alpha$ is a tuning parameter which we set to be 100. We use a learning rate of 0.005 and a momentum of 0.9 throughout.

\paragraph{Tanh function.} For ease of bound computation, we use Tanh for GCN as the activation function and for MPGNN as the non-linear function $\kappa(.)$. 

\subsection{Data Sets} 
We generate five synthetic data sets from two random graph models using NetworkX~\citep{hagberg2008exploring}. The first three synthetic data sets correspond to Stochastic Block Models (SBMs) and the remaining two correspond to Erd\H{o}s-R\'enyi (ER) models. The synthetic models have the following settings: 
\begin{enumerate}
    \item Stochastic-Block-Models-1 (SBM-1), where each graph has 100 nodes, two blocks with size $40$ and $60$, respectively. The edge probability matrix is $$\begin{bmatrix}
0.25 & 0.13 \\
0.13 & 0.37
\end{bmatrix}.$$
\item Stochastic-Block-Models-2 (SBM-2), where each graph has 100 nodes, and three blocks with sizes $25, 25,$ and $50$, respectively. The edge probability matrix is $$\begin{bmatrix}
0.25 &0.05& 0.02 \\
0.05&0.35&0.07\\
0.02&0.07&0.40
\end{bmatrix}.$$
\item Stochastic-Block-Models-3 (SBM-3), where each graph has 50 nodes, and three blocks with sizes $15, 15,$ and $20$, respectively. The edge probability matrix is $$\begin{bmatrix}
0.5&0.1&0.2\\
0.1&0.4&0.1\\
0.2&0.1&0.4
\end{bmatrix}.$$
\item Erd\H{o}s-R\'enyi-Models-4 (ER-4), where each graph has 100 nodes, with edge probability 0.7.
\item Erd\H{o}s-R\'enyi-Models-5 (ER-5), where each graph has 20 nodes, with edge probability 0.5.
\end{enumerate}
Each synthetic data set has 200 graphs, the number of classes is 2, and the random train-test split ratio is $\beta_\text{sup}:(1-\beta_\text{sup})$, where in our experiments we vary $\beta_\text{sup}$ in $\{0.7, 0.9\}$. For each random
graph of an individual synthetic data set, we generate the 16-dimension random Gaussian node feature (normalized to have unit $\ell_2$ norm) and a binary class label following a uniform distribution. In addition to the synthetic data sets, we have one real-world bioinformatics data set called PROTEINS~\citep{borgwardt2005protein}. In PROTEINS, nodes are secondary structure elements, and two nodes are connected by an edge if they are neighbors in the amino-acid sequence or in 3D space. Table~\ref{tab:data_sets} summarizes the statistics for the data sets in our experiments.
\begin{table*}[htb]
\centering
\caption{Summary statistics for the data sets.}
\label{tab:data_sets}
\begin{tabular}{l|rrrrrrrrrrrrrrrrrr}
\toprule
Statistics/Data Set&SBM-1&SBM-2&SBM-3&ER-4&ER-5&PROTEINS\\
\midrule
Maximum number of nodes ($N_{\max})$&100&100&50&100&20&620\\
Number of graphs&200&200&200&200&200&1113\\
Feature dimension&16&16&16&16&16&3\\
Maximum node degree&14&35&22&87&16&25\\
Minimum node degree&6&1&1&52&2&0\\
\bottomrule
\end{tabular}
\end{table*}
\subsection{Bound Computation}
 As our upper bounds in Corollaries~\ref{Cor: GCN} and \ref{cor: mpgnn} are applicable for the over-parameterized regime in a continuous space of parameters, we estimate the upper bounds for a large number of hidden units, $h$. For this purpose, we need to compute the following parameters from the model and the data set:
\begin{itemize}
    \item $B_f$: We compute the $l_2$ norm of the node features before GNN aggregation and find the max $\norm{F[j,:]}_2$ for all training and test data. This would be the $B_f$ term in Corollaries~\ref{Cor: GCN} and \ref{cor: mpgnn}.
    \item $d_{\text{max}}$:  Maximum degree of a node in all graph samples.
    \item $L_\varphi$: Lipschitzness of the activation function. For Tanh, we have $L_\varphi=1$. 

    \item $w_{2,c}$: For GCN,  we can choose the maximum  value of $|W_{2,c}|$ as $w_{2,c}$. For MPGNN, we consider the maximum value of $|W_{2,p}|$ among all MPU units as $w_{2,c}$.

    \item $M_\ell$ and $M_{\ell'}$:  As we consider the Tanh function as our activation function in GCN and as the $\kappa(.)$ function in MPGNN, we have $M_{\ell}=\log(1+\exp(-w_{2,c}*B))$ and we also have $M_{\ell'}=1$. Note that the maximum value of the derivative of the logistic loss function $\ell(\Psi(W,\mbx),y)=\log(1+\exp(-\Psi(W,\mbx) y))$ is $1$. 
    \item $\norm{W_{1,c,m}}_2$: We consider the maximum value of $\norm{W_{1,c}}_2$ among all neuron units.
    \item $\norm{W_{1,p,m}}_2$ and $\norm{W_{3,p,m}}_2$: We consider the maximum values of $\norm{W_{1,p}}_2$ and $\norm{W_{3,p}}_2$ among all MPU units as $\norm{W_{1,p,m}}_2$ and $\norm{W_{3,p,m}}_2$, respectively.
\end{itemize}
\subsection{Extended Experimental Results}

We compute the actual absolute empirical generalization errors (the difference between the test loss value and the training loss value) as well as empirical generalization error bounds for GCN, GCN\_RW, and MPGNN with either the mean-readout function or the sum-readout function. Here GCN\_RW denotes a variant of GCN where the symmetric normalization of the adjacency matrix is replaced by the random walk normalization. Specifically, Table~\ref{tab:GE_GCN_mean_full} reports the effect of the width $h$ of the hidden layer on the actual absolute empirical generalization errors for GCN when we employ a mean-readout function, over various data sets and supervised ratio values $\beta_\text{sup}$, where Table~\ref{tab:GE_GCN_RW_mean_full} and
Table~\ref{tab:GE_MPGNN_mean_full} reports those
for GCN\_RW and MPGNN when we employ a mean-readout function, respectively.
Tables~\ref{tab:GE_GCN_sum_full}, \ref{tab:GE_GCN_RW_sum_full}, and \ref{tab:GE_MPGNN_sum_full} report the actual absolute empirical generalization errors for GCN, GCN\_RW, and MPGNN with the sum-readout function, respectively. Table~\ref{tab:bounds_full} reports the empirical generalization error bound values for all three model types for both mean and sum readout functions with hidden size $h=256$. 

\begin{table*}[htb]
\centering
\caption{Absolute empirical generalization error (\pmb{$\times 10^5$}) for different widths $h$ of the hidden layer for GCN, for which we employ a mean-readout function and various supervised ratios of $\beta_\text{sup}$. We report the mean plus/minus one standard deviation over ten runs.}
\label{tab:GE_GCN_mean_full}
\resizebox{\linewidth}{!}{\begin{tabular}{lrrrrrrrrrrrrrrrrrr}
\toprule
Data Set&$\beta_\text{sup}$&$h=4$&$h=8$&$h=16$&$h=32$&$h=64$&$h=128$&$h=256$\\
\midrule
SBM-1&$0.7$&$66.37\pm44.09$&$18.66\pm15.44$&$13.91\pm9.46$&$3.42\pm2.96$&$2.04\pm1.68$&$1.23\pm0.99$&$0.29\pm0.27$\\
SBM-1&$0.9$&$101.34\pm73.25$&$36.35\pm37.01$&$12.76\pm14.21$&$12.00\pm4.81$&$3.56\pm2.69$&$0.86\pm0.60$&$0.60\pm0.39$\\
SBM-2&$0.7$&$67.27\pm44.95$&$19.57\pm16.12$&$13.54\pm9.79$&$3.79\pm3.16$&$2.36\pm1.71$&$1.25\pm0.93$&$0.31\pm0.22$\\
SBM-2&$0.9$&$103.32\pm77.20$&$36.67\pm40.96$&$15.38\pm15.17$&$11.85\pm4.41$&$3.51\pm2.41$&$0.96\pm0.56$&$0.60\pm0.38$\\
SBM-3&$0.7$&$85.51\pm54.72$&$42.42\pm23.06$&$25.18\pm10.59$&$6.72\pm4.32$&$4.36\pm3.63$&$1.13\pm0.62$&$0.33\pm0.23$\\
SBM-3&$0.9$&$108.83\pm139.12$&$61.35\pm45.92$&$24.04\pm22.08$&$13.07\pm13.03$&$4.19\pm3.06$&$1.81\pm1.50$&$0.73\pm0.61$\\
ER-4&$0.7$&$69.63\pm45.13$&$18.39\pm15.77$&$14.04\pm10.49$&$3.86\pm3.36$&$2.17\pm1.67$&$1.24\pm1.02$&$0.33\pm0.25$\\
ER-4&$0.9$&$102.21\pm79.73$&$38.18\pm37.76$&$13.49\pm13.99$&$12.32\pm4.67$&$3.52\pm2.61$&$0.88\pm0.56$&$0.59\pm0.36$\\
ER-5&$0.7$&$123.16\pm113.58$&$60.06\pm29.55$&$39.63\pm23.92$&$16.21\pm7.46$&$3.34\pm2.76$&$1.63\pm1.02$&$0.59\pm0.62$\\
ER-5&$0.9$&$203.04\pm97.39$&$100.28\pm95.76$&$48.95\pm33.67$&$21.76\pm17.30$&$6.91\pm6.40$&$1.98\pm1.92$&$1.00\pm0.62$\\
PROTEINS&$0.7$&$256.91\pm132.64$&$157.51\pm145.88$&$50.78\pm63.32$&$22.77\pm19.97$&$8.37\pm7.71$&$1.85\pm1.28$&$1.72\pm1.38$\\
PROTEINS&$0.9$&$95.66\pm85.97$&$67.25\pm52.78$&$26.35\pm15.27$&$12.36\pm7.95$&$2.18\pm1.85$&$1.11\pm1.16$&$0.33\pm0.28$\\
\bottomrule
\end{tabular}}
\end{table*}

\begin{table*}[htb]
\centering
\caption{Empirical absolute generalization error (\pmb{$\times 10^5$}) for different widths $h$ of the hidden layer for GCN\_RW, for which we employ a mean-readout function and various supervised ratios of $\beta_\text{sup}$. We report the mean plus/minus one standard deviation over ten runs.}
\label{tab:GE_GCN_RW_mean_full}
\resizebox{\linewidth}{!}{\begin{tabular}{lrrrrrrrrrrrrrrrrrr}
\toprule
Data Set&$\beta_\text{sup}$&$h=4$&$h=8$&$h=16$&$h=32$&$h=64$&$h=128$&$h=256$\\
\midrule
SBM-1&$0.7$&$66.71\pm43.86$&$18.49\pm15.25$&$13.85\pm9.33$&$3.41\pm3.08$&$2.05\pm1.69$&$1.23\pm0.99$&$0.29\pm0.26$\\
SBM-1&$0.9$&$101.61\pm72.92$&$35.81\pm36.26$&$12.65\pm13.99$&$12.01\pm4.85$&$3.60\pm2.71$&$0.85\pm0.59$&$0.59\pm0.39$\\
SBM-2&$0.7$&$66.87\pm46.15$&$19.40\pm16.29$&$13.70\pm9.86$&$3.87\pm3.37$&$2.35\pm1.73$&$1.24\pm0.94$&$0.32\pm0.22$\\
SBM-2&$0.9$&$102.64\pm77.20$&$37.06\pm40.15$&$15.15\pm15.06$&$11.90\pm4.43$&$3.54\pm2.42$&$0.95\pm0.54$&$0.59\pm0.38$\\
SBM-3&$0.7$&$84.58\pm53.86$&$42.57\pm22.71$&$25.41\pm10.69$&$6.67\pm4.39$&$4.37\pm3.60$&$1.13\pm0.60$&$0.33\pm0.22$\\
SBM-3&$0.9$&$107.79\pm138.73$&$61.28\pm46.17$&$24.22\pm22.75$&$13.08\pm12.93$&$4.19\pm3.09$&$1.81\pm1.50$&$0.74\pm0.63$\\
ER-4&$0.7$&$69.60\pm45.04$&$18.40\pm15.75$&$14.04\pm10.50$&$3.85\pm3.36$&$2.17\pm1.67$&$1.24\pm1.02$&$0.35\pm0.26$\\
ER-4&$0.9$&$102.20\pm79.69$&$38.14\pm37.83$&$13.49\pm13.99$&$12.32\pm4.67$&$3.52\pm2.61$&$0.88\pm0.56$&$0.59\pm0.37$\\
ER-5&$0.7$&$121.54\pm111.05$&$59.94\pm29.04$&$39.85\pm23.30$&$16.13\pm7.67$&$3.38\pm2.81$&$1.62\pm1.02$&$0.61\pm0.62$\\
ER-5&$0.9$&$200.88\pm97.85$&$99.61\pm94.82$&$49.04\pm34.17$&$21.71\pm17.31$&$6.93\pm6.44$&$1.98\pm1.92$&$1.02\pm0.62$\\
PROTEINS&$0.7$&$257.77\pm133.07$&$157.76\pm146.31$&$50.83\pm63.41$&$22.79\pm19.97$&$8.37\pm7.72$&$1.85\pm1.28$&$1.72\pm1.38$\\
PROTEINS&$0.9$&$95.82\pm85.65$&$67.42\pm52.74$&$26.37\pm15.19$&$12.38\pm7.96$&$2.18\pm1.84$&$1.11\pm1.16$&$0.32\pm0.28$\\
\bottomrule
\end{tabular}}
\end{table*}

\begin{table*}[htb]
\centering
\caption{Empirical absolute generalization error (\pmb{$\times 10^5$}) for different widths $h$ of the hidden layer for MPGNN, for which we employ a mean-readout function and various supervised ratios of $\beta_\text{sup}$. We report the mean plus/minus one standard deviation over ten runs.}
\label{tab:GE_MPGNN_mean_full}
\resizebox{\linewidth}{!}{\begin{tabular}{lrrrrrrrrrrrrrrrrrr}
\toprule
Data Set&$\beta_\text{sup}$&$h=4$&$h=8$&$h=16$&$h=32$&$h=64$&$h=128$&$h=256$\\
\midrule
SBM-1&$0.7$&$31.39\pm27.27$&$27.39\pm37.85$&$11.50\pm9.70$&$5.13\pm5.20$&$3.10\pm2.19$&$0.56\pm0.42$&$0.77\pm0.52$\\
SBM-1&$0.9$&$54.32\pm35.73$&$24.44\pm17.76$&$13.64\pm8.20$&$7.76\pm3.12$&$2.68\pm1.35$&$1.80\pm1.00$&$0.87\pm0.57$\\
SBM-2&$0.7$&$31.59\pm27.47$&$25.64\pm36.66$&$11.10\pm9.45$&$5.03\pm5.17$&$3.28\pm2.19$&$0.80\pm0.56$&$0.77\pm0.56$\\
SBM-2&$0.9$&$58.06\pm36.49$&$26.80\pm19.50$&$14.48\pm8.15$&$7.38\pm3.80$&$2.71\pm1.58$&$1.86\pm0.96$&$0.88\pm0.63$\\
SBM-3&$0.7$&$42.21\pm54.18$&$12.36\pm9.99$&$15.00\pm12.97$&$8.99\pm5.73$&$3.18\pm2.86$&$1.86\pm1.40$&$1.10\pm0.72$\\
SBM-3&$0.9$&$72.91\pm35.61$&$26.80\pm27.89$&$11.07\pm6.90$&$9.62\pm7.27$&$3.78\pm2.42$&$2.55\pm2.27$&$0.86\pm0.79$\\
ER-4&$0.7$&$31.61\pm26.04$&$25.19\pm36.25$&$11.17\pm9.81$&$5.01\pm4.88$&$3.10\pm2.14$&$0.64\pm0.49$&$0.74\pm0.56$\\
ER-4&$0.9$&$58.24\pm34.28$&$25.84\pm18.96$&$13.27\pm7.39$&$7.47\pm3.25$&$2.71\pm1.43$&$1.79\pm1.01$&$0.89\pm0.56$\\
ER-5&$0.7$&$75.54\pm57.19$&$35.99\pm36.89$&$25.52\pm15.42$&$10.60\pm8.65$&$4.38\pm4.97$&$3.01\pm1.79$&$1.93\pm1.02$\\
ER-5&$0.9$&$112.86\pm40.24$&$54.54\pm62.52$&$32.05\pm20.23$&$13.84\pm10.95$&$6.55\pm4.59$&$1.96\pm2.12$&$1.45\pm0.75$\\
PROTEINS&$0.7$&$21.09\pm13.77$&$18.05\pm29.26$&$5.17\pm3.57$&$2.33\pm2.56$&$1.42\pm1.23$&$1.17\pm1.18$&$0.22\pm0.23$\\
PROTEINS&$0.9$&$109.26\pm71.68$&$57.88\pm42.90$&$25.44\pm22.23$&$13.80\pm10.87$&$6.96\pm4.22$&$4.37\pm2.09$&$2.17\pm1.58$\\
\bottomrule
\end{tabular}}
\end{table*}

\begin{table*}[htb]
\centering
\caption{Empirical absolute generalization error (\pmb{$\times 10^5$}) for different widths $h$ of the hidden layer for GCN, for which we employ a sum-readout function and various supervised ratios of $\beta_\text{sup}$. We report the mean plus/minus one standard deviation over ten runs.}
\label{tab:GE_GCN_sum_full}
\resizebox{\linewidth}{!}{\begin{tabular}{lrrrrrrrrrrrrrrrrrr}
\toprule
Data Set&$\beta_\text{sup}$&$h=4$&$h=8$&$h=16$&$h=32$&$h=64$&$h=128$&$h=256$\\
\midrule
SBM-1&$0.7$&$32043.32\pm16566.13$&$23188.17\pm11633.58$&$12027.73\pm5808.93$&$3292.38\pm1512.93$&$954.86\pm485.91$&$309.84\pm195.80$&$69.97\pm53.92$\\
SBM-1&$0.9$&$11014.75\pm9029.01$&$8587.25\pm5798.39$&$4112.02\pm2643.86$&$1061.97\pm842.66$&$483.87\pm310.76$&$158.10\pm111.88$&$59.79\pm38.18$\\
SBM-2&$0.7$&$29492.62\pm15154.35$&$21897.35\pm10576.63$&$11528.95\pm5424.90$&$3180.05\pm1592.76$&$893.67\pm510.50$&$302.50\pm177.62$&$65.90\pm47.15$\\
SBM-2&$0.9$&$10939.57\pm9443.50$&$8767.78\pm6205.08$&$4198.28\pm2726.61$&$1029.97\pm840.74$&$452.85\pm299.91$&$162.18\pm111.69$&$59.96\pm35.63$\\
SBM-3&$0.7$&$24106.77\pm12571.98$&$14354.73\pm5584.92$&$5151.51\pm3047.06$&$1978.71\pm1092.38$&$300.63\pm198.73$&$124.01\pm81.82$&$28.69\pm19.98$\\
SBM-3&$0.9$&$11512.44\pm7587.11$&$6449.65\pm4699.86$&$2671.03\pm2532.28$&$1033.62\pm1133.25$&$274.73\pm169.96$&$110.43\pm83.63$&$35.22\pm25.80$\\
ER-4&$0.7$&$31101.54\pm15568.03$&$22961.55\pm10676.48$&$12133.39\pm5563.73$&$3385.93\pm1638.43$&$960.17\pm499.79$&$311.68\pm194.79$&$71.25\pm53.81$\\
ER-4&$0.9$&$11349.50\pm9665.34$&$8915.33\pm6303.86$&$4305.80\pm2789.51$&$1092.02\pm868.09$&$487.43\pm312.86$&$159.23\pm109.41$&$59.54\pm36.57$\\
ER-5&$0.7$&$12857.92\pm6902.99$&$6270.49\pm2471.73$&$2971.75\pm1483.12$&$594.54\pm481.81$&$222.13\pm105.15$&$54.96\pm21.80$&$20.09\pm17.30$\\
ER-5&$0.9$&$3459.34\pm2468.52$&$2126.09\pm1573.64$&$937.35\pm715.44$&$410.62\pm305.72$&$161.93\pm130.67$&$43.76\pm37.61$&$20.60\pm13.61$\\
PROTEINS&$0.7$&$2901.59\pm1921.36$&$2763.40\pm1750.78$&$2686.88\pm1763.09$&$2639.22\pm1548.08$&$2262.60\pm1594.35$&$1395.51\pm1028.10$&$552.28\pm408.37$\\
PROTEINS&$0.9$&$5496.67\pm121.21$&$5461.98\pm30.99$&$5418.24\pm53.89$&$5683.96\pm131.28$&$6172.92\pm113.16$&$4711.74\pm182.59$&$2162.89\pm181.60$\\
\bottomrule
\end{tabular}}
\end{table*}

\begin{table*}[htb]
\centering
\caption{Empirical absolute generalization error (\pmb{$\times 10^5$}) for different widths $h$ of the hidden layer for GCN\_RW, for which we employ a sum-readout function and various supervised ratios of $\beta_\text{sup}$. We report the mean plus/minus one standard deviation over ten runs.}
\label{tab:GE_GCN_RW_sum_full}
\resizebox{\linewidth}{!}{\begin{tabular}{lrrrrrrrrrrrrrrrrrr}
\toprule
Data Set&$\beta_\text{sup}$&$h=4$&$h=8$&$h=16$&$h=32$&$h=64$&$h=128$&$h=256$\\
\midrule
SBM-1&$0.7$&$32202.69\pm16672.76$&$23276.51\pm11663.99$&$12093.87\pm5792.22$&$3319.33\pm1528.94$&$959.65\pm484.71$&$310.31\pm195.87$&$70.17\pm53.66$\\
SBM-1&$0.9$&$11047.91\pm9063.60$&$8605.98\pm5812.74$&$4117.02\pm2640.91$&$1051.64\pm836.74$&$492.20\pm312.51$&$157.20\pm110.82$&$59.17\pm37.52$\\
SBM-2&$0.7$&$29464.80\pm15051.55$&$21963.74\pm10570.57$&$11625.39\pm5466.76$&$3205.53\pm1653.48$&$900.03\pm512.29$&$302.58\pm179.68$&$66.34\pm48.32$\\
SBM-2&$0.9$&$10969.08\pm9674.08$&$8786.41\pm6312.44$&$4219.59\pm2785.49$&$1024.60\pm849.98$&$458.05\pm305.83$&$161.38\pm110.93$&$59.30\pm35.73$\\
SBM-3&$0.7$&$24204.66\pm12196.40$&$14446.60\pm5486.24$&$5198.20\pm3006.84$&$1986.90\pm1079.75$&$304.06\pm200.22$&$124.71\pm82.10$&$28.79\pm19.92$\\
SBM-3&$0.9$&$11521.04\pm7603.94$&$6424.98\pm4711.22$&$2711.02\pm2530.53$&$1038.27\pm1121.70$&$272.09\pm168.42$&$111.38\pm83.39$&$35.89\pm26.55$\\
ER-4&$0.7$&$31094.88\pm15550.76$&$22956.86\pm10665.25$&$12131.93\pm5561.57$&$3385.06\pm1636.21$&$960.45\pm499.69$&$311.72\pm194.88$&$71.27\pm53.77$\\
ER-4&$0.9$&$11345.07\pm9661.97$&$8913.06\pm6303.20$&$4303.07\pm2791.26$&$1091.45\pm867.72$&$487.14\pm312.64$&$159.09\pm109.52$&$59.55\pm36.57$\\
ER-5&$0.7$&$12866.49\pm6889.85$&$6276.96\pm2479.75$&$2982.31\pm1475.88$&$590.39\pm481.24$&$222.85\pm104.62$&$54.99\pm21.79$&$20.23\pm17.47$\\
ER-5&$0.9$&$3435.09\pm2462.73$&$2118.73\pm1592.37$&$938.36\pm717.11$&$408.90\pm303.14$&$161.82\pm132.05$&$43.84\pm37.65$&$20.88\pm13.62$\\
PROTEINS&$0.7$&$2900.35\pm1919.12$&$2763.56\pm1749.91$&$2687.83\pm1762.29$&$2640.40\pm1547.35$&$2262.67\pm1594.47$&$1395.50\pm1028.99$&$552.57\pm408.40$\\
PROTEINS&$0.9$&$5501.96\pm117.09$&$5470.32\pm27.49$&$5425.94\pm53.69$&$5690.28\pm131.20$&$6181.97\pm113.28$&$4720.11\pm182.87$&$2166.19\pm181.78$\\
\bottomrule
\end{tabular}}
\end{table*}

\begin{table*}[htb]
\centering
\vspace{-5pt}
\caption{Empirical absolute generalization error (\pmb{$\times 10^5$}) for different widths $h$ of the hidden layer for MPGNN, for which we employ a sum-readout function and various supervised ratios of $\beta_\text{sup}$. We report the mean plus/minus one standard deviation over ten runs.}
\label{tab:GE_MPGNN_sum_full}
\resizebox{\linewidth}{!}{\begin{tabular}{lrrrrrrrrrrrrrrrrrr}
\toprule
Data Set&$\beta_\text{sup}$&$h=4$&$h=8$&$h=16$&$h=32$&$h=64$&$h=128$&$h=256$\\
\midrule
SBM-1&$0.7$&$6585.43\pm4142.24$&$5259.12\pm5278.60$&$2267.42\pm999.08$&$1870.21\pm1126.94$&$863.73\pm615.32$&$382.73\pm199.14$&$221.63\pm108.73$\\
SBM-1&$0.9$&$4891.93\pm4654.50$&$3103.23\pm3360.01$&$1857.44\pm1867.96$&$911.86\pm949.42$&$444.46\pm290.26$&$311.74\pm232.01$&$153.68\pm150.55$\\
SBM-2&$0.7$&$6501.10\pm4180.16$&$5044.84\pm5075.24$&$2273.68\pm998.62$&$1910.63\pm1122.18$&$851.73\pm613.75$&$390.12\pm198.82$&$228.47\pm108.99$\\
SBM-2&$0.9$&$5001.03\pm4333.61$&$3282.25\pm3330.88$&$1853.53\pm1942.58$&$866.83\pm949.21$&$474.32\pm316.04$&$318.67\pm232.69$&$160.81\pm156.28$\\
SBM-3&$0.7$&$6344.20\pm5047.26$&$2025.66\pm1769.09$&$1928.82\pm1554.69$&$1118.35\pm712.80$&$555.00\pm350.92$&$277.29\pm221.90$&$145.94\pm124.73$\\
SBM-3&$0.9$&$3740.18\pm2378.95$&$1523.19\pm1165.79$&$935.91\pm700.15$&$579.09\pm388.71$&$247.42\pm228.09$&$180.99\pm110.70$&$76.95\pm49.52$\\
ER-4&$0.7$&$6407.19\pm4182.56$&$5045.12\pm5052.70$&$2261.19\pm1028.58$&$1882.95\pm1098.92$&$848.09\pm617.37$&$383.24\pm188.61$&$219.20\pm108.37$\\
ER-4&$0.9$&$4492.42\pm4098.88$&$3340.34\pm3534.57$&$1754.96\pm1911.25$&$841.93\pm963.63$&$450.14\pm282.38$&$307.74\pm232.70$&$155.84\pm149.04$\\
ER-5&$0.7$&$1175.50\pm960.26$&$1004.51\pm913.75$&$838.87\pm482.98$&$362.88\pm261.34$&$135.72\pm67.97$&$112.66\pm71.25$&$46.78\pm37.25$\\
ER-5&$0.9$&$1843.43\pm1152.07$&$1115.01\pm1019.58$&$748.95\pm399.20$&$270.69\pm242.98$&$156.37\pm94.23$&$77.84\pm50.28$&$40.04\pm18.55$\\
PROTEINS&$0.7$&$1084.56\pm780.16$&$1007.44\pm975.67$&$1306.72\pm887.61$&$1084.37\pm588.17$&$1085.74\pm537.60$&$843.62\pm527.95$&$762.83\pm537.17$\\
PROTEINS&$0.9$&$558.80\pm88.90$&$839.32\pm97.22$&$1277.49\pm147.09$&$1848.46\pm100.86$&$2281.52\pm64.97$&$2573.71\pm48.59$&$2510.18\pm29.38$\\
\bottomrule
\end{tabular}}
\end{table*}

\begin{table*}[htb]
\centering
\vspace{-5pt}
\caption{Empirical generalization error bounds via functional derivative for width $h=256$ of the hidden layer for different model types and readout functions, for various supervised ratios of $\beta_\text{sup}$. We report the mean plus/minus one standard deviation over ten runs.}
\label{tab:bounds_full}
\resizebox{\linewidth}{!}{\begin{tabular}{lr|rrr|rrrrrrrrrrrrrr}
\toprule
\multirow{ 2}{*}{Data Set}&\multirow{ 2}{*}{$\beta_\text{sup}$}&\multicolumn{3}{c|}{mean readout}&\multicolumn{3}{c}{sum readout}\\
&&GCN&GCN\_RW&MPGNN&GCN&GCN\_RW&MPGNN\\
\midrule
SBM-1&$0.7$&$0.014\pm0.001$&$0.015\pm0.001$&$0.244\pm0.013$&$2520476.175\pm156887.505$&$2669876.975\pm164650.807$&$4005.391\pm220.699$\\
SBM-1&$0.9$&$0.011\pm0.001$&$0.012\pm0.001$&$0.190\pm0.010$&$1885484.212\pm142336.573$&$1996144.750\pm150151.799$&$3118.617\pm164.941$\\
SBM-2&$0.7$&$0.023\pm0.001$&$0.025\pm0.001$&$0.341\pm0.019$&$4089494.725\pm260354.119$&$4438159.850\pm275790.885$&$5603.943\pm323.262$\\
SBM-2&$0.9$&$0.018\pm0.001$&$0.020\pm0.001$&$0.266\pm0.015$&$3049281.675\pm228393.096$&$3310971.200\pm245013.883$&$4364.249\pm244.524$\\
SBM-3&$0.7$&$0.015\pm0.001$&$0.016\pm0.001$&$0.249\pm0.013$&$624203.919\pm38222.136$&$657592.244\pm40341.332$&$4091.722\pm236.617$\\
SBM-3&$0.9$&$0.011\pm0.001$&$0.012\pm0.001$&$0.193\pm0.010$&$476286.562\pm25613.135$&$501607.622\pm26855.429$&$3182.215\pm179.787$\\
ER-4&$0.7$&$0.005\pm0.000$&$0.005\pm0.000$&$0.122\pm0.005$&$811877.012\pm49965.822$&$815294.519\pm50332.475$&$1993.695\pm90.394$\\
ER-4&$0.9$&$0.004\pm0.000$&$0.004\pm0.000$&$0.095\pm0.004$&$607448.919\pm45159.041$&$609996.025\pm45468.797$&$1552.299\pm68.335$\\
ER-5&$0.7$&$0.008\pm0.000$&$0.009\pm0.001$&$0.168\pm0.008$&$53656.979\pm2753.653$&$58993.716\pm3014.306$&$2752.105\pm142.909$\\
ER-5&$0.9$&$0.006\pm0.000$&$0.007\pm0.000$&$0.131\pm0.006$&$41149.381\pm2515.410$&$45243.126\pm2763.916$&$2137.883\pm111.558$\\
PROTEINS&$0.7$&$0.005\pm0.000$&$0.005\pm0.000$&$0.045\pm0.003$&$363676.153\pm93112.546$&$363892.670\pm93165.563$&$2122.023\pm27.214$\\
PROTEINS&$0.9$&$0.004\pm0.000$&$0.004\pm0.000$&$0.035\pm0.002$&$402014.978\pm50920.169$&$402314.366\pm50931.723$&$1657.479\pm21.872$\\
\bottomrule
\end{tabular}}
\end{table*}

\begin{table*}[htb]
\centering
\vspace{-5pt}
\caption{Empirical generalization error bounds via Rademacher Complexities for width $h=256$ of the hidden layer for different model types and readout functions, for various supervised ratios of $\beta_\text{sup}$. We report the mean plus/minus one standard deviation over ten runs.}
\label{tab:bounds_full_Rademacher}
\resizebox{\linewidth}{!}{\begin{tabular}{lr|rrr|rrrrrrrrrrrrrr}
\toprule
\multirow{ 2}{*}{Data Set}&\multirow{ 2}{*}{$\beta_\text{sup}$}&\multicolumn{3}{c|}{mean readout}&\multicolumn{3}{c}{sum readout}\\
&&GCN&GCN\_RW&MPGNN&GCN&GCN\_RW&MPGNN\\
\midrule
SBM-1&$0.7$&$0.465\pm0.002$&$0.465\pm0.002$&$3.774\pm0.063$&$304382.491\pm9500.618$&$304607.622\pm9518.604$&$4814.476\pm84.398$\\
SBM-1&$0.9$&$0.410\pm0.002$&$0.410\pm0.002$&$3.329\pm0.055$&$260826.333\pm7491.220$&$260911.930\pm7550.637$&$4245.814\pm74.430$\\
SBM-2&$0.7$&$0.465\pm0.002$&$0.465\pm0.002$&$3.774\pm0.063$&$305824.653\pm9984.800$&$305862.869\pm10206.895$&$4814.185\pm84.265$\\
SBM-2&$0.9$&$0.410\pm0.002$&$0.410\pm0.002$&$3.329\pm0.055$&$261423.833\pm7225.399$&$261561.489\pm7270.949$&$4245.654\pm74.354$\\
SBM-3&$0.7$&$0.465\pm0.002$&$0.465\pm0.002$&$3.774\pm0.063$&$104673.709\pm2587.052$&$104675.531\pm2549.178$&$4816.482\pm84.778$\\
SBM-3&$0.9$&$0.410\pm0.002$&$0.410\pm0.002$&$3.329\pm0.055$&$91149.288\pm1448.412$&$91135.775\pm1449.661$&$4247.080\pm74.968$\\
ER-4&$0.7$&$0.465\pm0.002$&$0.465\pm0.002$&$3.774\pm0.063$&$305177.969\pm10269.198$&$305189.119\pm10276.768$&$4814.444\pm84.389$\\
ER-4&$0.9$&$0.410\pm0.002$&$0.410\pm0.002$&$3.329\pm0.055$&$261501.731\pm7297.492$&$261507.970\pm7300.305$&$4245.799\pm74.568$\\
ER-5&$0.7$&$0.465\pm0.002$&$0.465\pm0.002$&$3.600\pm0.137$&$26211.781\pm520.483$&$26209.152\pm522.064$&$4557.881\pm205.714$\\
ER-5&$0.9$&$0.410\pm0.002$&$0.410\pm0.002$&$3.175\pm0.120$&$22835.085\pm242.080$&$22832.954\pm242.523$&$4014.455\pm180.352$\\
PROTEINS&$0.7$&$0.197\pm0.001$&$0.197\pm0.001$&$0.844\pm0.030$&$59715.427\pm7057.934$&$59730.912\pm7058.069$&$2089.489\pm20.112$\\
PROTEINS&$0.9$&$0.174\pm0.001$&$0.174\pm0.001$&$0.747\pm0.026$&$64516.645\pm4037.346$&$64535.982\pm4035.333$&$1848.825\pm18.308$\\
\bottomrule
\end{tabular}}
\end{table*}

\begin{table*}[htb]
\centering
\vspace{-5pt}
\caption{Empirical generalization error bounds via functional derivative for width $h=128$ of the hidden layer for different model types and readout functions, for various supervised ratios of $\beta_\text{sup}$. We report the mean plus/minus one standard deviation over ten runs.}
\label{tab:bounds_full_h128}
\resizebox{\linewidth}{!}{\begin{tabular}{lr|rrr|rrrrrrrrrrrrrr}
\toprule
\multirow{ 2}{*}{Data Set}&\multirow{ 2}{*}{$\beta_\text{sup}$}&\multicolumn{3}{c|}{mean readout}&\multicolumn{3}{c}{sum readout}\\
&&GCN&GCN\_RW&MPGNN&GCN&GCN\_RW&MPGNN\\
\midrule
SBM-1&$0.7$&$0.014\pm0.001$&$0.015\pm0.001$&$0.244\pm0.013$&$2520476.175\pm156887.505$&$2669876.975\pm164650.807$&$4005.391\pm220.699$\\
SBM-1&$0.9$&$0.011\pm0.001$&$0.012\pm0.001$&$0.190\pm0.010$&$1885484.212\pm142336.573$&$1996144.750\pm150151.799$&$3118.617\pm164.941$\\
SBM-2&$0.7$&$0.023\pm0.001$&$0.025\pm0.001$&$0.341\pm0.019$&$4089494.725\pm260354.119$&$4438159.850\pm275790.885$&$5603.943\pm323.262$\\
SBM-2&$0.9$&$0.018\pm0.001$&$0.020\pm0.001$&$0.266\pm0.015$&$3049281.675\pm228393.096$&$3310971.200\pm245013.883$&$4364.249\pm244.524$\\
SBM-3&$0.7$&$0.015\pm0.001$&$0.016\pm0.001$&$0.249\pm0.013$&$624203.919\pm38222.136$&$657592.244\pm40341.332$&$4091.722\pm236.617$\\
SBM-3&$0.9$&$0.011\pm0.001$&$0.012\pm0.001$&$0.193\pm0.010$&$476286.562\pm25613.135$&$501607.622\pm26855.429$&$3182.215\pm179.787$\\
ER-4&$0.7$&$0.005\pm0.000$&$0.005\pm0.000$&$0.122\pm0.005$&$811877.012\pm49965.822$&$815294.519\pm50332.475$&$1993.695\pm90.394$\\
ER-4&$0.9$&$0.004\pm0.000$&$0.004\pm0.000$&$0.095\pm0.004$&$607448.919\pm45159.041$&$609996.025\pm45468.797$&$1552.299\pm68.335$\\
ER-5&$0.7$&$0.008\pm0.000$&$0.009\pm0.001$&$0.168\pm0.008$&$53656.979\pm2753.653$&$58993.716\pm3014.306$&$2752.105\pm142.909$\\
ER-5&$0.9$&$0.006\pm0.000$&$0.007\pm0.000$&$0.131\pm0.006$&$41149.381\pm2515.410$&$45243.126\pm2763.916$&$2137.883\pm111.558$\\
PROTEINS&$0.7$&$0.005\pm0.000$&$0.005\pm0.000$&$0.045\pm0.003$&$363676.153\pm93112.546$&$363892.670\pm93165.563$&$2122.023\pm27.214$\\
PROTEINS&$0.9$&$0.004\pm0.000$&$0.004\pm0.000$&$0.035\pm0.002$&$402014.978\pm50920.169$&$402314.366\pm50931.723$&$1657.479\pm21.872$\\
\bottomrule
\end{tabular}}
\end{table*}

\begin{table*}[htb]
\centering
\vspace{-5pt}
\caption{Empirical generalization error bounds via Rademacher Complexities for width $h=128$ of the hidden layer for different model types and readout functions, for various supervised ratios of $\beta_\text{sup}$. We report the mean plus/minus one standard deviation over ten runs.}
\label{tab:bounds_full_Rademacher_h128}
\resizebox{\linewidth}{!}{\begin{tabular}{lr|rrr|rrrrrrrrrrrrrr}
\toprule
\multirow{ 2}{*}{Data Set}&\multirow{ 2}{*}{$\beta_\text{sup}$}&\multicolumn{3}{c|}{mean readout}&\multicolumn{3}{c}{sum readout}\\
&&GCN&GCN\_RW&MPGNN&GCN&GCN\_RW&MPGNN\\
\midrule
SBM-1&$0.7$&$0.465\pm0.002$&$0.465\pm0.002$&$3.774\pm0.063$&$304382.491\pm9500.618$&$304607.622\pm9518.604$&$4814.476\pm84.398$\\
SBM-1&$0.9$&$0.410\pm0.002$&$0.410\pm0.002$&$3.329\pm0.055$&$260826.333\pm7491.220$&$260911.930\pm7550.637$&$4245.814\pm74.430$\\
SBM-2&$0.7$&$0.465\pm0.002$&$0.465\pm0.002$&$3.774\pm0.063$&$305824.653\pm9984.800$&$305862.869\pm10206.895$&$4814.185\pm84.265$\\
SBM-2&$0.9$&$0.410\pm0.002$&$0.410\pm0.002$&$3.329\pm0.055$&$261423.833\pm7225.399$&$261561.489\pm7270.949$&$4245.654\pm74.354$\\
SBM-3&$0.7$&$0.465\pm0.002$&$0.465\pm0.002$&$3.774\pm0.063$&$104673.709\pm2587.052$&$104675.531\pm2549.178$&$4816.482\pm84.778$\\
SBM-3&$0.9$&$0.410\pm0.002$&$0.410\pm0.002$&$3.329\pm0.055$&$91149.288\pm1448.412$&$91135.775\pm1449.661$&$4247.080\pm74.968$\\
ER-4&$0.7$&$0.465\pm0.002$&$0.465\pm0.002$&$3.774\pm0.063$&$305177.969\pm10269.198$&$305189.119\pm10276.768$&$4814.444\pm84.389$\\
ER-4&$0.9$&$0.410\pm0.002$&$0.410\pm0.002$&$3.329\pm0.055$&$261501.731\pm7297.492$&$261507.970\pm7300.305$&$4245.799\pm74.568$\\
ER-5&$0.7$&$0.465\pm0.002$&$0.465\pm0.002$&$3.600\pm0.137$&$26211.781\pm520.483$&$26209.152\pm522.064$&$4557.881\pm205.714$\\
ER-5&$0.9$&$0.410\pm0.002$&$0.410\pm0.002$&$3.175\pm0.120$&$22835.085\pm242.080$&$22832.954\pm242.523$&$4014.455\pm180.352$\\
PROTEINS&$0.7$&$0.197\pm0.001$&$0.197\pm0.001$&$0.844\pm0.030$&$59715.427\pm7057.934$&$59730.912\pm7058.069$&$2089.489\pm20.112$\\
PROTEINS&$0.9$&$0.174\pm0.001$&$0.174\pm0.001$&$0.747\pm0.026$&$64516.645\pm4037.346$&$64535.982\pm4035.333$&$1848.825\pm18.308$\\
\bottomrule
\end{tabular}}
\end{table*}

\begin{table*}[htb]
\centering
\vspace{-5pt}
\caption{Empirical generalization error bounds via functional derivative for width $h=64$ of the hidden layer for different model types and readout functions, for various supervised ratios of $\beta_\text{sup}$. We report the mean plus/minus one standard deviation over ten runs.}
\label{tab:bounds_full_h64}
\resizebox{\linewidth}{!}{\begin{tabular}{lr|rrr|rrrrrrrrrrrrrr}
\toprule
\multirow{ 2}{*}{Data Set}&\multirow{ 2}{*}{$\beta_\text{sup}$}&\multicolumn{3}{c|}{mean readout}&\multicolumn{3}{c}{sum readout}\\
&&GCN&GCN\_RW&MPGNN&GCN&GCN\_RW&MPGNN\\
\midrule
SBM-1&$0.7$&$0.014\pm0.001$&$0.015\pm0.001$&$0.244\pm0.013$&$2520476.175\pm156887.505$&$2669876.975\pm164650.807$&$4005.391\pm220.699$\\
SBM-1&$0.9$&$0.011\pm0.001$&$0.012\pm0.001$&$0.190\pm0.010$&$1885484.212\pm142336.573$&$1996144.750\pm150151.799$&$3118.617\pm164.941$\\
SBM-2&$0.7$&$0.023\pm0.001$&$0.025\pm0.001$&$0.341\pm0.019$&$4089494.725\pm260354.119$&$4438159.850\pm275790.885$&$5603.943\pm323.262$\\
SBM-2&$0.9$&$0.018\pm0.001$&$0.020\pm0.001$&$0.266\pm0.015$&$3049281.675\pm228393.096$&$3310971.200\pm245013.883$&$4364.249\pm244.524$\\
SBM-3&$0.7$&$0.015\pm0.001$&$0.016\pm0.001$&$0.249\pm0.013$&$624203.919\pm38222.136$&$657592.244\pm40341.332$&$4091.722\pm236.617$\\
SBM-3&$0.9$&$0.011\pm0.001$&$0.012\pm0.001$&$0.193\pm0.010$&$476286.562\pm25613.135$&$501607.622\pm26855.429$&$3182.215\pm179.787$\\
ER-4&$0.7$&$0.005\pm0.000$&$0.005\pm0.000$&$0.122\pm0.005$&$811877.012\pm49965.822$&$815294.519\pm50332.475$&$1993.695\pm90.394$\\
ER-4&$0.9$&$0.004\pm0.000$&$0.004\pm0.000$&$0.095\pm0.004$&$607448.919\pm45159.041$&$609996.025\pm45468.797$&$1552.299\pm68.335$\\
ER-5&$0.7$&$0.008\pm0.000$&$0.009\pm0.001$&$0.168\pm0.008$&$53656.979\pm2753.653$&$58993.716\pm3014.306$&$2752.105\pm142.909$\\
ER-5&$0.9$&$0.006\pm0.000$&$0.007\pm0.000$&$0.131\pm0.006$&$41149.381\pm2515.410$&$45243.126\pm2763.916$&$2137.883\pm111.558$\\
PROTEINS&$0.7$&$0.005\pm0.000$&$0.005\pm0.000$&$0.045\pm0.003$&$363676.153\pm93112.546$&$363892.670\pm93165.563$&$2122.023\pm27.214$\\
PROTEINS&$0.9$&$0.004\pm0.000$&$0.004\pm0.000$&$0.035\pm0.002$&$402014.978\pm50920.169$&$402314.366\pm50931.723$&$1657.479\pm21.872$\\
\bottomrule
\end{tabular}}
\end{table*}

\begin{table*}[htb]
\centering
\vspace{-5pt}
\caption{Empirical generalization error bounds via Rademacher Complexities for width $h=64$ of the hidden layer for different model types and readout functions, for various supervised ratios of $\beta_\text{sup}$. We report the mean plus/minus one standard deviation over ten runs.}
\label{tab:bounds_full_Rademacher_h64}
\resizebox{\linewidth}{!}{\begin{tabular}{lr|rrr|rrrrrrrrrrrrrr}
\toprule
\multirow{ 2}{*}{Data Set}&\multirow{ 2}{*}{$\beta_\text{sup}$}&\multicolumn{3}{c|}{mean readout}&\multicolumn{3}{c}{sum readout}\\
&&GCN&GCN\_RW&MPGNN&GCN&GCN\_RW&MPGNN\\
\midrule
SBM-1&$0.7$&$0.465\pm0.002$&$0.465\pm0.002$&$3.774\pm0.063$&$304382.491\pm9500.618$&$304607.622\pm9518.604$&$4814.476\pm84.398$\\
SBM-1&$0.9$&$0.410\pm0.002$&$0.410\pm0.002$&$3.329\pm0.055$&$260826.333\pm7491.220$&$260911.930\pm7550.637$&$4245.814\pm74.430$\\
SBM-2&$0.7$&$0.465\pm0.002$&$0.465\pm0.002$&$3.774\pm0.063$&$305824.653\pm9984.800$&$305862.869\pm10206.895$&$4814.185\pm84.265$\\
SBM-2&$0.9$&$0.410\pm0.002$&$0.410\pm0.002$&$3.329\pm0.055$&$261423.833\pm7225.399$&$261561.489\pm7270.949$&$4245.654\pm74.354$\\
SBM-3&$0.7$&$0.465\pm0.002$&$0.465\pm0.002$&$3.774\pm0.063$&$104673.709\pm2587.052$&$104675.531\pm2549.178$&$4816.482\pm84.778$\\
SBM-3&$0.9$&$0.410\pm0.002$&$0.410\pm0.002$&$3.329\pm0.055$&$91149.288\pm1448.412$&$91135.775\pm1449.661$&$4247.080\pm74.968$\\
ER-4&$0.7$&$0.465\pm0.002$&$0.465\pm0.002$&$3.774\pm0.063$&$305177.969\pm10269.198$&$305189.119\pm10276.768$&$4814.444\pm84.389$\\
ER-4&$0.9$&$0.410\pm0.002$&$0.410\pm0.002$&$3.329\pm0.055$&$261501.731\pm7297.492$&$261507.970\pm7300.305$&$4245.799\pm74.568$\\
ER-5&$0.7$&$0.465\pm0.002$&$0.465\pm0.002$&$3.600\pm0.137$&$26211.781\pm520.483$&$26209.152\pm522.064$&$4557.881\pm205.714$\\
ER-5&$0.9$&$0.410\pm0.002$&$0.410\pm0.002$&$3.175\pm0.120$&$22835.085\pm242.080$&$22832.954\pm242.523$&$4014.455\pm180.352$\\
PROTEINS&$0.7$&$0.197\pm0.001$&$0.197\pm0.001$&$0.844\pm0.030$&$59715.427\pm7057.934$&$59730.912\pm7058.069$&$2089.489\pm20.112$\\
PROTEINS&$0.9$&$0.174\pm0.001$&$0.174\pm0.001$&$0.747\pm0.026$&$64516.645\pm4037.346$&$64535.982\pm4035.333$&$1848.825\pm18.308$\\
\bottomrule
\end{tabular}}
\end{table*}

\subsection{Discussion}
In all our experimental findings, 
increasing the width of the hidden layer leads to a reduction in generalization errors. Regarding the comparison of mean-readout and sum-readout, based on our results in Corollary~\ref{Cor: sum-read and GCN} and Proposition~\ref{prop: GCN result} for GCN and Corollary~\ref{Cor: MPGNN and sum} and Proposition~\ref{Prop: MPGNN bound} for MPGNN, the upper bound on the generalization error of GCN and MPGNN with mean-readout is less than the upper bound on the generalization error with sum-readout function in GCN and MPGNN, respectively. This pattern is similarly evident in the empirical generalization error results presented in Tables~\ref{tab:GE_GCN_mean_full} and \ref{tab:GE_GCN_sum_full} for GCN and Tables~\ref{tab:GE_MPGNN_mean_full} and \ref{tab:GE_MPGNN_sum_full} for MPGNN. As shown in Table~\ref{tab:bounds_full} and Table~\ref{tab:bounds_full_Rademacher}, the upper bounds on the generalization error of GCN under $\tilde{L}$ and random walk as graph filters are similar. The empirical generalization error of GCN under $\tilde{L}$ and random walk in Tables~\ref{tab:GE_GCN_mean_full} and \ref{tab:GE_GCN_RW_mean_full} are also similar. Tables~\ref{tab:bounds_full_h128} to \ref{tab:bounds_full_Rademacher_h64} provides other bound computation results based on smaller $h$ values. We conclude that these are very similar to the bounds computed using $h=256,$ and hence $h=256$ should be a proper approximation to the mean-field regime.

\end{document}